\def\eqref#1{equation~\ref{#1}}
\def\1{\bm{1}}
\def\rmI{{\mathbf{I}}}
\DeclareMathAlphabet{\mathsfit}{\encodingdefault}{\sfdefault}{m}{sl}
\SetMathAlphabet{\mathsfit}{bold}{\encodingdefault}{\sfdefault}{bx}{n}
\newcommand{\KL}{\mathbb{D}_{\mathrm{KL}}}
\newtheorem{theorem}{Theorem}[section]
\newtheorem{lemma}[theorem]{Lemma}
\newtheorem{corollary}[theorem]{Corollary}
\newtheorem{definition}[theorem]{Definition}
\newcommand\indep{ \perp \!\!\! \perp}
\newcommand{\eqdef}{\stackrel{\text{def}}{=}}
\newcommand{\ie}{{\em i.e.\/}}
\newcommand{\eg}{{\em e.g.\/}}
\newcommand{\Ecal}{{\mathcal E}}
\newcommand{\Ncal}{{\mathcal N}}
\newcommand{\Fcal}{{\mathcal F}}
\newcommand{\Gcal}{{\mathcal G}}
\newcommand{\Lcal}{{\mathcal L}}
\newcommand{\Ocal}{{\mathcal O}}
\newcommand{\Xcal}{{\mathcal X}}
\newcommand{\Ycal}{{\mathcal Y}}
\newcommand{\Wcal}{{\mathcal W}}
\newcommand{\Zcal}{{\mathcal Z}}
\newcommand{\Pscr}{{\mathscr P}}
\newcommand{\EE}{{\mathbb{E}}}
\newcommand{\Reals}{{\mathbb{R}}}
\newcommand{\D}[3]{{\mathbb{D}_{#1}}(#2\|#3)}
\newcommand{\cev}[1]{\accentset{\leftharpoonup}{#1}}
\title{Generalization in VAE and Diffusion Models: A Unified Information-Theoretic Analysis}
\author{Qi Chen$^{1,3,4,\thanks{Correspondence to: qichen@cs.toronto.edu, florian@cs.toronto.edu.}}$, Jierui Zhu$^{2}$, \& Florian Shkurti$^{1,4,5,\footnotemark[1]}$  \\
$^1$Department of Computer Science, University of Toronto\\
$^2$Department of Statistical Sciences, University of Toronto\\
$^3$Data Science Institute,
$^4$Vector Institute, $^5$ Robotics Institute
}
\begin{document}

\maketitle

\doparttoc 
\faketableofcontents 
\begin{abstract}
Despite the empirical success of Diffusion Models (DMs) and Variational Autoencoders (VAEs), their generalization performance remains theoretically underexplored, especially lacking a full consideration of the shared encoder-generator structure. Leveraging recent information-theoretic tools, we propose a unified theoretical framework that provides guarantees for the generalization of both the encoder and generator by treating them as randomized mappings. This framework further enables (1) a refined analysis for VAEs, accounting for the generator's generalization, which was previously overlooked; (2) illustrating an explicit trade-off in generalization terms for DMs that depends on the diffusion time $T$; and (3) providing computable bounds for DMs based solely on the training data, allowing the selection of the optimal $T$ and the integration of such bounds into the optimization process to improve model performance. Empirical results on both synthetic and real datasets illustrate the validity of the proposed theory.
\end{abstract}

\section{Introduction}
Modeling complex data distributions with generative models has become a key focus in machine learning, driving transformative applications in various domains such as computer vision \citep{rombach2022high}, natural language processing \citep{brown2020language}, and scientific discovery \citep{hoogeboom2022equivariant}. Over the past decade, this field has seen rapid advancement with the emerging of frameworks such as Variational Auto-Encoders (VAEs) \citep{kingma2013auto,makhzani2015adversarial,tolstikhin2017wasserstein}, Generative Adversarial Networks (GANs) \citep{goodfellow2014generative, nowozin2016f,arjovsky2017wasserstein}, Diffusion Models (DMs) \citep{song2019generative, song2020score, song2021maximum}, as well as energy-based and auto-regressive models \citep{larochelle2011neural, van2016conditional}. Notably, deep latent diffusion models that combine VAEs and DMs have recently demonstrated exceptional success in generating high-resolution images \citep{rombach2022high} and videos \citep{peebles2023scalable}.

The empirical success and appeal of these models lie in their ability to generate new data rather than merely memorizing and reproducing training samples. However, recent studies have demonstrated memorization problems in GANs \citep{feng2021gans, meehan2020non}, VAEs \citep{van2021memorization}, DMs \citep{carlini2023extracting}, and auto-regressive Large Language Models (LLMs) \citep{carlini2022quantifying}, raising significant privacy and copyright concerns. Consequently, understanding the generalization ability of generative models in producing diverse, novel samples becomes critical and urgent. While various generalization theories have been developed for GANs \citep{arora2017generalization,biau2021some,mbacke2023pac,yang2022generalization,ji2021understanding} by adapting different theoretical tools originated in supervised learning (\eg, VC-dimension \citep{vapnik1994measuring}, Rademacher \citep{bartlett2002rademacher}, PAC-Bayes\citep{shawe1997pac,mcallester1998some}), their counterparts for VAEs and DMs remain comparatively underexplored.

In this paper, we aim to provide a novel generalization theory for VAEs and DMs. In comparison to previous works, our main contributions are as follows:

\textbf{1. Unified information-theoretic framework. } 
VAEs employ a \textit{probabilistic} encoder-generator pair, while DMs can be viewed as an infinite-length composition of such encoder-generator sequences \citep{tzen2019neural, huang2021variational}.  We thereby model the encoder and generator as randomized mappings and develop a novel and unified theory with information-theoretic learning tools, avoiding traditional methods designed for deterministic mappings, which are unsuitable for our analysis. The general theoretical results are presented in Sec.~\ref{sec:gen_results}, where the proposed bounds are algorithm- and data-dependent under the sub-Gaussian assumption.

\textbf{2. Improved analysis and tighter bounds for VAEs. } To the best of our knowledge, we are the first to consider the generalization properties of both the encoder and generator in VAEs, whereas  \citet{cherief2022pac} only consider guarantees for reconstruction loss and \citet{mbacke2024statistical} prove bounds for a fixed generator, ignoring its generalization. Moreover, compared to \citet{mbacke2024statistical}, we provide a tighter generalization bound for the encoder by directly bounding the generation error (defined in Sec.\ref{sec: gen_error}), removing the unnecessary Wasserstein-2 distance. 

\textbf{3. Computable bounds for diffusion models. } We provide non-vacuous upper bounds for DMs to measure the divergence between the generated data and the original data, which can be tractably estimated, as detailed in Theorems~\ref{thm:gen_error_DM} and ~\ref{thm:LD_MI}. Through these bounds, we show that: 

\textbf{(a)} There is an explicit trade-off between the generalization terms of both the encoder and generator that depends on the diffusion time $T$. As presented in Theorem~\ref{thm:gen_error_DM}, when $T$ approaches infinity, the encoder's generalization term vanishes, while the generator's term remains non-zero. Conversely, for small $T$, the encoder's generalization term dominates. Empirical validation on both synthetic and real datasets verifies this phenomenon. Notably, this result implies that \emph{longer diffusion time does not necessarily lead to better generalization.} To the best of our knowledge, this is the first explicit theoretical formulation of this trade-off in the context of generalization theory.

\textbf{(b)} The proposed bound provides practical guidance for hyperparameter selection, where previous methods fall short due to the difficulty of accurately estimating the divergences between generated and test data, as shown in Fig.~\ref{fig:realdata_bound}. Additionally, the bound can be estimated using only training data, providing a practical and sample-efficient way to select the optimal diffusion time $T$ or integrate the bounds into optimization for better model performance.

\section{Related Work}
In this section, we only discuss the most relevant related works, while other related works on diffusion models and convergence theory are presented in Appendix~\ref{sec:related work}. 

\textbf{Theories for VAEs. } Some previous works \citep{bozkurt2019rate,huang2020evaluating,bae2022multi} analyze VAEs with rate-distortion theory.   
{
Another approach involves deriving exact formulae under specific data distributions and high-dimensional limits. Assuming sample size $m = \infty$, \citet{refinetti2022dynamics} examines the test error for nonlinear two-layer autoencoders as $d \to \infty$. 
Focusing on linear $\beta$-VAEs, \citet{ichikawa2024learning} analyzes generalization error with SGD dynamics, where fixed-point analysis reveals posterior collapse when $\beta$ exceeds some threshold, suggesting appropriate KL annealing to accelerate convergence. \citet{ichikawa2023dataset} uses the Replica method to derive asymptotic generalization error for $\alpha = \frac{m}{d} = \Theta(1)$, showing a peak in error at small $\beta$. This disappears after $\beta$ exceeds some threshold, leading to posterior collapse regardless of the sample size.}
\citet{husain2019adversarial} build a connection between GAN and WAE, where the generalization is analyzed based on the concentration result of \cite{weed2019sharp}. \citet{cherief2022pac} applied PAC-Bayes theory to derive the generalization bound for the reconstruction loss. Recently, \citet{mbacke2024statistical} proved statistical guarantees with PAC-Bayes theory. However, their bounds only consider the generalization properties of the encoder. Instead, we provide tighter bounds for the encoder and use information-theoretic tools to derive generalization bounds for both the encoder and the generator, which are valid for non-linear deep-learning models.

\textbf{Generalization theory for diffusion models. } \cite{de2022convergence} prove statistical guarantees by simply bounding the Wasserstein-1 distance between the population and empirical data distribution without considering the algorithm or training dynamics.
\citet{pidstrigach2022score} discuss the errors in the initial condition and drift terms for SDEs used in the DMs, illustrating the drift explosion under the manifold assumption. They also propose that to avoid purely memorizing data, the exponential integral of the drift approximation error introduced by the score function must be kept infinite when minimizing the score-matching loss. However, their results are not quantitative. The most recent work \citep{li2024generalization} studies the generalization properties of DMs with the random feature model, extending results in \cite{song2021maximum} by providing separate generalization analysis for the score-matching loss. Its bound cannot capture the trade-off on diffusion time, which was introduced via an ELBO decomposition of the training loss of diffusion models by \citet{franzese2023much}. However, our work is the first to show that it exists in generalization, as well as why. Our theoretical results differ from all the mentioned approaches above. We leverage information-theoretic tools to obtain algorithm- and data-dependent bounds under sub-Gaussian assumption, in contrast with the data-dependent bound in \citet{li2024generalization} that assumes the target data distribution is a 2-mode Gaussian mixture. The proposed bound can provide non-vacuous estimation for the divergence between the original and the generated data distribution, demonstrating an explicit trade-off on the diffusion time.

\section{Problem Setup}
\paragraph{Notations.} Detailed notations are summarized in Table \ref{tab:notation}. We use upper case letters to denote random variables (\eg, $X, Z$) and corresponding calligraphic letters $\Xcal,\Zcal$ to denote their support sets. We write $\Pscr(\Xcal)$ as the set of all the probability measures over $\Xcal$. Then, we denote $P_X \in \Pscr(\mathcal{X})$ as the marginal probability distribution of $X$. Following \citet{husain2019adversarial}, we further use $\Fcal(\Xcal, \Zcal)\eqdef \{f:\Xcal\rightarrow \Zcal\}$ to denote the set of all the measurable functions from $\Xcal$ to $\Zcal$. For any $f\in\Fcal(\Xcal, \Zcal)$, the pushforward distribution of $P_X$ through $f$ is denoted as $P_Z^f \eqdef f\#P_X \in \Pscr(\Zcal)$. Given a Markov chain $X \rightarrow Z$, we use $P_{Z|X}$ to represent the distribution over a space $\Zcal$ conditioned on elements from $\Xcal$, which is also known as the Markov transition kernel from $\Xcal$ to $\Zcal$. 
We adopt similar notations for the Markov chain in a reverse direction $Z\rightarrow X$ but using $Q_Z, Q_{X|Z}$ to make a distinction. Let $\D{}{\cdot}{\cdot}$ denote the divergence between two distributions. To be used in our paper, we recall the definitions of Wasserstein distance, the Kullback-Leibler (KL), Jensen-Shannon (JS), and Fisher divergences in Appendix~\ref{def:divergences}. 
For any positive integer $m$, we denote $[m] \eqdef \{1,\dots, m\}$.




\subsection{Generalized Formulation}\label{sec:formulation}


Deep generative models typically transform a simple, easy-to-sample prior distribution over a latent space $\Zcal$ into a target data distribution defined on the input space $\Xcal$ though a generator  $G: \Zcal \rightarrow \Pscr(\Xcal)$. In most cases, the input and latent spaces are subsets of Euclidean spaces, i.e., $\Xcal \subseteq \Reals^{d_1}$ and $\Zcal \subseteq \Reals^{d_2}$. Ideally, when applied to an easy-to-sample prior distribution $Q_Z$ (\eg, a Gaussian), the optimal generator will induce an identical distribution as the target data distribution $P_X$. However, analogous to the no free lunch theorem in supervised learning \citep{shalev2014understanding}, the set of all possible generators $\Fcal(\Zcal, \Pscr(\Xcal))$ is too complex to be learnable without any prior knowledge, so we further suppose the learning is conducted within a generator hypothesis set $\Gcal \subset \Fcal(\Zcal, \Pscr(\Xcal))$. The primary goal is to learn a $G$ that matches $Q^G_X \eqdef G\#Q_Z$ to $P_X$, by minimizing their divergence $\D{}{P_X}{G\#Q_Z}$. For example, GAN considers the aforementioned goal by directly solving $\inf\limits_{G\in\Gcal} \D{JS}{P_X}{G\#Q_Z}$. 

However, directly learning $G$ may be hard and instable for some data distributions. In this paper, we focus on VAEs and diffusion models, which indirectly learn $G$ as the inverse process of an additional probabilistic encoder $E$, sharing a similar encoder-generator paradigm. 

\textbf{Encoder-Generator Structure.} Let us define the encoder hypothesis set $\Ecal \subset \Fcal(\Xcal, \Pscr(\Zcal))$. Then, the encoder is a function $E:\Xcal \rightarrow \Pscr(\Zcal), E \in \Ecal$ that maps an input data point $X \sim P_X$ to a conditional distribution over the latent space $\Zcal$, \ie, $E(X)$ can be alternately denoted as $P_{Z|X}^E$ at the population level. We further denote the probability densities of $E(X)$ and $P_Z^E\eqdef E\#P_X$ as $p_E(z|x)$ and $p_E(z)$, respectively. Similarly, the respective probabilistic densities of $G(Z)=Q^G_{X|Z}$ and $G\#Q_Z$ are denoted as $q_G(x|z)$ and $q_G(x)$. The above definitions cover the original auto-encoder, where $E$ is a deterministic encoder by restricting $\Ecal$ as the set of delta distributions. 
Since $\D{KL}{P_X}{G\#Q_Z} \leq \inf_{E\in\Ecal} [ \D{KL}{P_X \times E(X)}{Q_Z \times G(Z)}]$ with data processing inequality, the objective can be relaxed to solve the upper bound: 
\begin{equation}\label{eq:gen_objective}
\inf_{G\in\Gcal, E\in\Ecal} \D{KL}{P_X \times E(X)}{Q_Z \times G(Z)}\,.
\end{equation} 
Furthermore, we show in Appendix~\ref{proof:general_objective} that VAEs and score-based DMs, respectively, optimize this objective's two forms of decomposition.

\subsubsection{Variational Auto-Encoder (VAE)} 
Based on the general objective, VAE uses a decomposition that is equivalent to the common variational inference approach \citep{kingma2019introduction}, where the optimization objective is proportional to:
\begin{equation}\label{eq:VAE}
\begin{aligned}
\inf_{G\in\Gcal, E\in\Ecal} &\left[\EE_{X\sim P_X} \left(\EE_{Z\sim E(X)} \left[-\log q_G(X|Z)\right] + \D{KL}{E(X)}{Q_Z}\right)\right]\,.
\end{aligned}    
\end{equation}
Constraining $E$ and $G$ to specific distribution families leads to the traditional VAE objective.  In VAEs, the latent space typically has a much lower dimensionality than the input space, with $d_2 \ll d_1$.




\subsubsection{Diffusion Model (DM)} 
As discussed in \citep{huang2021variational,tzen2019neural, kingma2021variational}, one can treat DMs as infinitely deep hierarchical VAEs by sequentially composing $N$ probabilistic encoders $E_{1:N}\eqdef \{E_k\}_{k=1}^N$ and generators $G_{1:N}\eqdef\{G_k\}_{k=1}^N$ where $N \rightarrow \infty$. In another view, we can consider the encoders and generators as time-dependent randomized mappings that directly applied to the original data distribution and the latent prior distribution, respectively. The difference is shown in Fig.~\ref{fig:random-mapping}. {We further provide a detailed comparison of these two viewpoints in Appendix \ref{sec: comp_hvae} for hierarchical VAEs and in Appendix \ref{sec: comp_dm} for the other.} 
\begin{figure}[H]
    \centering
    \includegraphics[height=0.28\linewidth]{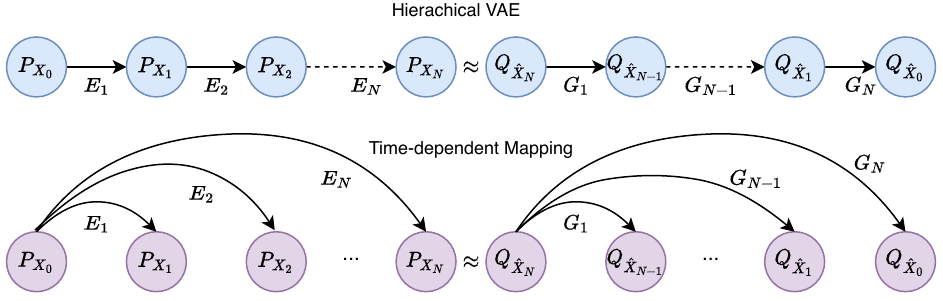}
    \caption{Illustration of DMs as hierarchical VAEs and time-dependent randomized mapping}
    \label{fig:random-mapping}
\end{figure}
\vspace{-0.5cm}
\textbf{Discrete-time stochastic process.} Without loss of generality, we consider the latent space the same as the input space with $d_1=d_2=d$ for typical DMs, where $E_k: \Xcal \rightarrow \Pscr(\Xcal), G_k: \Xcal \rightarrow \Pscr(\Xcal)$. For a forward \textit{discrete-time stochastic process}, we denote the marginal distribution at step $k$ as $P_{X_k}$. Then, we have $\forall k \in [N], X_k \sim E_k(X_{0}), X_0\sim P_X$, and $P_{X_k}=E_k\#P_{X_0}$, where $P_{X_0}=P_X$ is the initial data distribution. The forward process is often set to achieve some easy-to-sample noise distribution $\pi$ (\eg, $\pi=\mathcal{N}(\mathbf{0}, \rmI_{d}$)), where $P_{X_N}\approx \pi$, and $P_{X_N} \xrightarrow{N\rightarrow\infty} \pi$ almost surely. Conversely, the backward process starts from $Q_{\hat{X}_N}=\pi$ and aims to achieve $Q_{\hat{X}_0}\approx P_X$. Then, we denote the marginal distribution at step $k$ for the backward process introduced by the the generator sequence $G_{1:N}$ as $Q_{\hat{X}_{N-k}}=G_k\#Q_{\hat{X}_N}$. 


\textbf{Continuous-time diffusion process.} 
As in \citet{song2020score}, in the continuous-time limit, the above forward process can form a \textit{diffusion process} $\{X_t\}_{t=0}^T$ solving the following SDE over diffusion time T:
\begin{equation}\label{eq:F_SDE}
dX_t=f(X_t, t) dt + \lambda(t) dW_t, X_0 \sim P_X\,,
\end{equation}
where we have $P_{X_t}=E_t\#P_{X_0}$, $f(\cdot, t):\Xcal \rightarrow \Xcal$ is the drift coefficient, $\lambda(t)\in \Reals$ is the diffusion coefficient, and $\{W_t\}_{t\in[0,T]}$ is a Wiener process. With the appropriate selections of $f$ and {$\lambda$}, the above SDE can converge to the predefined prior distribution $\pi$. 

According to \citet{anderson1982reverse,haussmann1986time}, there exists a reverse-time diffusion process $\{\cev{X_t}\}_{t\in [0, T]}$ satisfying $\{X_t\}_{t=0}^T =\{\cev{X_{t}}\}_{t=0}^T$  under mild conditions, which is the solution from $t=T$ to $t=0$ of the following SDE:
\begin{equation}\label{eq:B_SDE}
d \cev{X_t} = [f(\cev{X_t},t) - \lambda(t)^2 \nabla \log p_t(\cev{X_t})]dt + \lambda(t)d\cev{W_t}, \cev{X_T} \sim P_{X_T}\,.   
\end{equation}
The above \textbf{ideal backward process} can be used to generate reference sample sequences to learn the generative dynamics, and we denote the ideal generator as $E^{-1}_t, \forall t \in [0,T]$. 
Then, by approximating $\nabla \log p_t(\cev{X_t})$ with $\nabla \log q_t(\hat{X}_t)$, the generator $G_t, \forall t \in [0,T]$ is characterized by the following SDE:
\begin{equation}\label{eq:G_SDE}
d \hat{X}_t = [f(\hat{X}_t,t) - \lambda(t)^2 \nabla \log q_t(\hat{X}_t)]dt + \lambda(t)d\hat{W}_t, \hat{X}_T \sim \pi\,.
\end{equation}
Define $Q_{\hat{X}_{T-t}}\eqdef G_t\# \pi$ as the generated distribution at time $t$. \citet{song2020score, song2021maximum} further proved that, under some regularity conditions, the KL-divergence between the real data distribution $P_X$ and the final-time generated distribution $G_T\#\pi$ is bounded by:
\begin{equation*}
\D{KL}{P_X}{G_T\#\pi} \leq \frac{1}{2}\int_{t=0}^T \lambda^2(t)\D{Fisher}{P_{X_t}}{Q_{\hat{X}_{t}}} dt + \D{KL}{P_{X_T}}{\pi}\,,
\end{equation*}
where the Fisher divergence is defined in Def.~\ref{def:fisher}.
As we show in Appendix~\ref{proof:general_objective}, this is an upper bound of another possible decomposition of the general objective in Eq.~(\ref{eq:gen_objective}).


\subsection{Setup for Generalization Analysis}\label{sec: gen_error}

So far, we have discussed the learning objectives of VAEs and DMs at the population level. However, due to the unknown nature of $P_X$, these learning objectives can only be estimated with a training dataset $S=\{X_i\}_{i=1}^m$ of $m$ examples, where each $X_i \sim P_X$ and $S\sim P_X^m$. The empirical distribution of these $m$ observations is then denoted as $\hat{P}_{X}\eqdef \frac{1}{m}\sum_{i=1}^m \delta_{X_i}$. By optimizing the encoder $E$ and generator $G$ w.r.t empirical learning objectives, they are learned from the data and mutually dependent. Intuitively, One can as consider them as respectively approximating the posteriors $P_{Z|X,S}$ and $Q_{\hat{X}|Z, S}$.


%

The encoder-generator process may overfit the empirical distribution $\hat{P}_X$, particularly when the number of training examples $m$ is limited.
To measure the performance gap between the population and empirical objectives for generative models, we further define the loss for the encoder-generator pair as $\Delta_{G}:\Xcal\times\Zcal\times \Xcal\rightarrow\Reals_0^+$. In particular, we use $\Delta_{G}(\hat{X}, Z, X)= \|\hat{X} - X\|$ in Corollary.~\ref{corollary:gen_error_W1} and $\Delta_{G}(\hat{X}, Z, X)= -\log q_G(X|Z)$ in Corollary.~\ref{corollary:gen_error_KL}. 

Let us first consider the \textbf{empirical reconstruction error} \footnote{Similar notion as distortion can be found in \citet{blau2019rethinking}.}, by which we can measure the input-output distortion in expectation to the empirical measure $\hat{P}_X$ of the train dataset $S$:
\begin{equation*}
\Lcal_{\hat{P}_X}(E, G) \eqdef  
\frac{1}{m}\sum_{i=1}^m\EE_{Z\sim E(X_i)}\EE_{\hat{X}\sim G(Z)} \Delta_{G}(\hat{X}, Z, X_i)\,.
\end{equation*}
Now, we define the \textbf{generation error} as the expected difference between any input $X$ sampled from the data distribution $P_X$ and any output generated by $G(Z)$ with $Z$ being sampled from the prior $\pi$:
\begin{equation*}
\Lcal_{P_X}^{\pi}(E, G) \eqdef  \EE_{X \sim P_X} \EE_{Z\sim \pi} \EE_{\hat{X}\sim G(Z)}[\Delta_{G}(\hat{X}, Z, X)]\,.
\end{equation*}
The dependence on encoder $E$ is implicit and specific to the encoder-generator paradigm, where the learning of $G$ relies on $E$. Based on the definitions above, we introduce the \textbf{generalization gap} that measures the difference between the generation error and the empirical reconstruction error:
\begin{equation*}
gen_{P_X}^{\Delta_{G}}(E, G, \pi) \eqdef \EE_{S\sim P_X^m} \left[\Lcal^{\pi}_{P_X}(E, G) - \Lcal_{\hat{P}_X}(E, G)\right]\,.
\end{equation*}
Intuitively, when the encoder and generator are learned from an empirical reconstruction process with very small reconstruction error (\ie, $\Lcal_{\hat{P}_X}(E, G)$ is small), the generalization gap reflects the average difference between the generated and original data.




\section{General Theoretical Results}\label{sec:gen_results}

In this section, we present general theoretical results for generative models that share the same encoder-generator paradigm, which can be directly extended to analyze models like VAEs and DMs. 
\begin{theorem}\label{thm:generation_gap} 
For any encoder $E\in \Ecal$ and generator $G\in \Gcal$ learned from the training data $S=\{X_i\}_{i=1}^m$, assume that the loss $\Delta_{G}(\tilde{\hat{X}}, \tilde{Z}, X)$ is $R$-sub-Gaussian (See definition in Def.~\ref{def:subgaussian}) under $P_{\tilde{\hat{X}},\tilde{Z}, X}=Q_{\hat{X}| Z} \times Q_Z\times P_X$, where $Z\sim Q_Z=\pi, \hat{X}\sim G(Z)$, $\tilde{\hat{X}}, \tilde{Z}$ are respective independent copy of $\hat{X}$ and $Z$ such that $\tilde{\hat{X}}, \tilde{Z} \indep X$. 
Then, $\forall X_i \in S, Z_i \sim E(X_i), \hat{X}_i \sim G(Z_i)$, the generalization gap admits the following bound:
\begin{equation*}
\begin{aligned}
|\text{gen}_{P_X}^{\Delta_{G}}(E, G, \pi)| 
&\leq \frac{\sqrt{2}R}{m} \sum_{i=1}^m \sqrt{\EE_{X_i}[\D{KL}{E(X_i)}{\pi}] + I(\hat{X}_i; X_i| Z_i)}\,.
\end{aligned}
\end{equation*}
\end{theorem}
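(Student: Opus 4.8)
The plan is to reduce the bound on $\lvert\text{gen}_{P_X}^{\Delta_G}(E,G,\pi)\rvert$ to an application of an information-theoretic generalization bound of the Xu--Raginsky / Bu--Zhou type, where the mutual-information term measures the dependence between the ``test'' data and the random variables fed through the loss $\Delta_G$. The key observation is that the generation error $\Lcal^\pi_{P_X}(E,G)$ uses $Z\sim\pi$ independent of $X$, while the empirical reconstruction error $\Lcal_{\hat P_X}(E,G)$ uses $Z_i\sim E(X_i)$, i.e. $Z_i$ is coupled to $X_i$. So the difference of the two quantities, in expectation over $S\sim P_X^m$, is exactly the kind of object controlled by the mutual information between a sample point $X_i$ and the pair $(\hat X_i, Z_i)$ that is actually processed through the loss in the empirical term, measured relative to the product distribution under which $\Delta_G$ is assumed $R$-sub-Gaussian.

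Concretely, I would proceed as follows. First, fix $i\in[m]$ and isolate the $i$-th summand of the generalization gap: by linearity and the i.i.d. structure of $S$, it suffices to bound $\lvert \EE_{S}[\EE_{X\sim P_X}\EE_{Z\sim\pi}\EE_{\hat X\sim G(Z)}\Delta_G(\hat X,Z,X) - \EE_{Z_i\sim E(X_i)}\EE_{\hat X_i\sim G(Z_i)}\Delta_G(\hat X_i,Z_i,X_i)]\rvert$ for each $i$ and average. Second, recognize the first term as $\EE$ under the product measure $P_{\tilde{\hat X},\tilde Z,X}=Q_{\hat X|Z}\times Q_Z\times P_X$ appearing in the hypothesis, and the second as $\EE$ under the ``coupled'' joint law of $(\hat X_i, Z_i, X_i)$. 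Third, apply the standard decoupling/Donsker--Varadhan estimate: for an $R$-sub-Gaussian loss under the product measure, the gap between its expectation under a joint law $P_{UV}$ and under the product of marginals is at most $\sqrt{2R^2\, I(U;V)}$. Here the relevant joint is $(X_i;(\hat X_i,Z_i))$, giving $\sqrt{2R^2\, I(X_i;\hat X_i,Z_i)}$. Fourth, expand $I(X_i;\hat X_i,Z_i)$ using the chain rule: $I(X_i;\hat X_i,Z_i) = I(X_i;Z_i) + I(X_i;\hat X_i\mid Z_i)$. The first piece, $I(X_i;Z_i)$, is the mutual information induced by the encoder channel $E$, and by the standard bound $I(X_i;Z_i)\le \EE_{X_i}[\D{KL}{E(X_i)}{\pi}]$ (since mutual information is the minimum over marginals of the conditional-KL, and $\pi$ is a valid comparator). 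The second piece is exactly $I(\hat X_i;X_i\mid Z_i)$. Summing over $i$ and using concavity of $\sqrt{\cdot}$ (or just keeping the sum inside) yields the stated bound
$$\lvert\text{gen}_{P_X}^{\Delta_G}(E,G,\pi)\rvert \le \frac{\sqrt2 R}{m}\sum_{i=1}^m \sqrt{\EE_{X_i}[\D{KL}{E(X_i)}{\pi}] + I(\hat X_i;X_i\mid Z_i)}\,.$$

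The main obstacle I anticipate is setting up the coupling carefully enough that the sub-Gaussianity hypothesis — which is stated only for the fully decoupled product law $Q_{\hat X|Z}\times Q_Z\times P_X$ — actually applies when we invoke the Donsker--Varadhan / decoupling lemma, since that lemma needs the moment-generating function control precisely under the reference product measure while the ``joint'' side must share the same pair of marginals. One has to check that the marginal of $(\hat X_i, Z_i)$ in the empirical term matches $Q_{\hat X|Z}\times Q_Z$ — this is where the identification $Q_Z=\pi$ and $\hat X\sim G(Z)$ in the theorem hypothesis is used — and that the only discrepancy is the $X$-coordinate's dependence. A secondary technical point is justifying the interchange of the expectation over $S$ with the decoupling inequality (i.e. bounding $\EE_S$ of a difference by $\EE_S$ of the per-instance bound, then using that the per-instance mutual information is itself an expectation over the draw of $S$), which is routine but should be spelled out to keep the randomized-mapping interpretation of $E,G$ (learned from $S$) consistent.
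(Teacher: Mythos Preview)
Your overall strategy is right---Donsker--Varadhan plus a chain-rule decomposition---and it is essentially what the paper does. But there is a genuine gap in your step 2/3, and it is precisely the point you flag as a ``check'' in your obstacles paragraph: that check fails.

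You invoke the Xu--Raginsky decoupling estimate in its mutual-information form, i.e.\ a bound of the type $\sqrt{2R^2\,I(X_i;\hat X_i,Z_i)}$, which compares the joint law $P_{\hat X_i,Z_i,X_i}$ to the \emph{product of its own marginals} $P_{\hat X_i,Z_i}\times P_{X_i}$. But the sub-Gaussianity hypothesis in the theorem is stated under the \emph{different} reference measure $Q_{\hat X|Z}\times\pi\times P_X$, and the generation error $\Lcal^\pi_{P_X}(E,G)$ is exactly the expectation of $\Delta_G$ under that same reference. These two references do not coincide: the $Z$-marginal of the coupled law is $P_{Z_i}=E\#P_X$, not $\pi$. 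So neither side of the mutual-information decoupling lemma matches what you need---the sub-Gaussian control is for the wrong product, and even if it held, the ``decoupled'' expectation would not equal the generation error. Your later step $I(X_i;Z_i)\le \EE_{X_i}[\D{KL}{E(X_i)}{\pi}]$ goes in the right direction and would recover the stated bound, but the intermediate inequality it rests on is not justified.

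The fix is simple and is what the paper does: apply Donsker--Varadhan directly with the \emph{given} reference $Q=Q_{\hat X|Z}\times\pi\times P_X$ (this is both where sub-Gaussianity holds and where $\EE_Q[\Delta_G]=\Lcal^\pi_{P_X}(E,G)$), obtaining a bound in terms of $\D{KL}{P_{\hat X_i,Z_i,X_i}}{Q}$ rather than the mutual information. Then decompose this KL by the chain rule for relative entropy (not the MI chain rule): the $X$-marginal contributes $0$, the $Z|X$ layer contributes $\EE_{X_i}[\D{KL}{E(X_i)}{\pi}]$, and the $\hat X|Z,X$ layer contributes $\EE_{X_i,Z_i}\D{KL}{P_{\hat X_i|Z_i,X_i}}{Q_{\hat X|Z_i}}$, which equals $I(\hat X_i;X_i\mid Z_i)$ because the reconstruction and generation processes use the same generator $G$, so $Q_{\hat X|Z_i}=P_{\hat X_i|Z_i}$. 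This yields the stated bound as an \emph{equality} inside the square root, with no need for the extra upper bound $I(X_i;Z_i)\le \EE_{X_i}[\D{KL}{E(X_i)}{\pi}]$.
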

\textbf{Discussion.} Since $\sqrt{a +b} \leq \sqrt{a} + \sqrt{b}, \forall a, b>0$, the above bound can be further decomposed as  $\frac{\sqrt{2}R}{m} \sum_{i=1}^m \sqrt{\EE_{X_i}[\D{KL}{E(X_i)}{\pi}]} + \frac{\sqrt{2}R}{m} \sum_{i=1}^m \sqrt{I(\hat{X}_i; X_i| Z_i)}$. The first term measures, on expectation of randomly drawing $m$ data points, the average divergence from their encoded latent distributions to the predefined prior $\pi$, which reflects the generalization of the encoder. The condition mutual information in the second term $I(\hat{X}_i; X_i| Z_i)$ measures the generalization of the generator $G$. 

This bound provides insight into how the generalization of the encoder and generator interact. Assuming the reconstruction error is small: If the first term approaches zero, \ie, $E(X_i)=\pi$, which means $Z_i$ contains no information of the training data, the generalization gap is entirely attributed to the generator. In contrast, if the encoder overfits to the training data and $Z_i$ fully captures the information from $X_i$, then $I(\hat{X}_i; X_i| Z_i)=0$, making the second term zero. In this case, the generalization gap is entirely due to the encoder.  In intermediate scenarios, both the encoder and generator contribute to the overall generalization. The detailed proof can be found in Appendix~\ref{proof:bound_gen}.


By specifying $\Delta_G$, we have the following two corollaries that measure the divergence between the true data distribution $P_X$ and the generated distribution $G\# \pi$, as formulated in Sec.~\ref{sec:formulation}. 


\begin{corollary}\label{corollary:gen_error_W1}
Under Theorem~\ref{thm:generation_gap}, let $\Delta_{G}(\hat{X}, Z, X)= \|\hat{X} - X\|$. Then, the Wasserstein distance between the data distribution $P_X$ and the generated distribution $G\#\pi$ is upper bounded by:
$$
\D{W_1}{P_X}{G\#\pi} \leq \EE_S \Lcal_{\hat{P}_X}(E, G) + \frac{\sqrt{2}R}{m} \sum_{i=1}^m \sqrt{\EE_{X_i}[\D{KL}{E(X_i)}{\pi}] + I(\hat{X}_i; X_i| Z_i)}\,.
$$
\end{corollary}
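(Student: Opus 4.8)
The plan is to derive the Wasserstein bound as a direct corollary of Theorem~\ref{thm:generation_gap} by exploiting the Kantorovich--Rubinstein duality. First I would recall that with the choice $\Delta_G(\hat X, Z, X) = \|\hat X - X\|$, the generation error becomes
\[
\Lcal_{P_X}^\pi(E,G) = \EE_{X\sim P_X}\EE_{Z\sim\pi}\EE_{\hat X\sim G(Z)}\|\hat X - X\|,
\]
which is an expected transport cost between a coupling of $P_X$ and $G\#\pi$. Since the Wasserstein-1 distance is the \emph{infimum} over all couplings of $\EE\|\hat X - X\|$, and the product coupling $P_X \times (G\#\pi)$ is one admissible coupling, we immediately get $\D{W_1}{P_X}{G\#\pi} \le \Lcal_{P_X}^\pi(E,G)$. (One should note here that $\|\hat X - X\|$ is automatically $1$-Lipschitz in each argument, so the sub-Gaussianity hypothesis of Theorem~\ref{thm:generation_gap} holds whenever $\Xcal$ is bounded, with $R$ controlled by the diameter; this is the only regularity check needed.)

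Next I would combine this with the definition of the generalization gap. By construction,
\[
\Lcal_{P_X}^\pi(E,G) = \EE_{S\sim P_X^m}\big[\Lcal_{\hat P_X}(E,G)\big] + \text{gen}_{P_X}^{\Delta_G}(E,G,\pi),
\]
because taking the expectation over $S$ of $\Lcal_{P_X}^\pi(E,G)$ leaves it unchanged (it does not depend on $S$ once $E,G$ are fixed), while subtracting off $\EE_S \Lcal_{\hat P_X}(E,G)$ recovers exactly the generalization gap. Substituting the bound from Theorem~\ref{thm:generation_gap} on $|\text{gen}_{P_X}^{\Delta_G}(E,G,\pi)|$ then yields
\[
\D{W_1}{P_X}{G\#\pi} \le \EE_S\,\Lcal_{\hat P_X}(E,G) + \frac{\sqrt 2 R}{m}\sum_{i=1}^m \sqrt{\EE_{X_i}[\D{KL}{E(X_i)}{\pi}] + I(\hat X_i; X_i \mid Z_i)},
\]
which is the claimed inequality.

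The only genuinely delicate point — and the step I expect to require the most care — is the interchange of the infimum-over-couplings characterization of $W_1$ with the specific product coupling induced by sampling $X\sim P_X$ and $\hat X\sim G\#\pi$ independently: one must be sure that $\hat X$ drawn as $Z\sim\pi,\ \hat X\sim G(Z)$ indeed has marginal law exactly $G\#\pi$ (true by definition of the pushforward through the Markov kernel $G$), and that the resulting joint law is a valid element of $\Pi(P_X, G\#\pi)$. Everything else is bookkeeping: unwinding the definitions of $\Lcal_{P_X}^\pi$, $\Lcal_{\hat P_X}$, and $\text{gen}_{P_X}^{\Delta_G}$, and then quoting Theorem~\ref{thm:generation_gap} verbatim. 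I would also remark that the bound is only non-trivial when the empirical reconstruction error term is small, matching the intuition flagged after the theorem statement.
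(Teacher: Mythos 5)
Your proposal is correct and follows essentially the same route as the paper: bound $\D{W_1}{P_X}{G\#\pi}$ by the transport cost of the independent coupling $P_X \times (G\#\pi)$ (note this is the primal coupling characterization of $W_1$, not Kantorovich--Rubinstein duality as you label it), identify that cost with $\Lcal_{P_X}^{\pi}(E,G)$, and invoke Theorem~\ref{thm:generation_gap}. The extra remarks on Lipschitzness and bounded support are unnecessary since sub-Gaussianity is already assumed, but they do no harm.
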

\begin{corollary}\label{corollary:gen_error_KL}
Under Theorem~\ref{thm:generation_gap}, let the density function of probabilistic decoder $G$ given a latent code $z$ be $q_G(\cdot| z):\Xcal \rightarrow \Reals_0^+$ and $\Delta_{G}(\hat{X}, Z, X)= -\log q_G(X|Z)$. The KL-divergence between the data distribution $P_X$ and the generated distribution $G\#\pi$ is then upper bounded by:
\[
\D{KL}{P_X}{G\#\pi}
\leq \EE_S \Lcal_{\hat{P}_X}(E,G) + \frac{\sqrt{2}R}{m} \sum_{i=1}^m \sqrt{\EE_{X_i}[\D{KL}{E(X_i)}{\pi}] + I(\hat{X}_i; X_i| Z_i)} - h(P_X)\,,
\]
where $h(P_X)=\EE_X [-\log p(X)]$ denotes the entropy.
\end{corollary}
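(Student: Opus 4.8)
\textbf{Proof plan for Corollary~\ref{corollary:gen_error_KL}.}

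The plan is to specialize Theorem~\ref{thm:generation_gap} to the loss $\Delta_G(\hat X, Z, X) = -\log q_G(X\mid Z)$ and then unwind the two loss functionals $\Lcal^\pi_{P_X}$ and $\Lcal_{\hat P_X}$ for this choice to expose the KL divergence. First I would verify that this loss satisfies the $R$-sub-Gaussian hypothesis of the theorem under $P_{\tilde{\hat X}, \tilde Z, X} = Q_{\hat X\mid Z}\times Q_Z \times P_X$ (this is the regularity condition one carries along; in the main text it is presumably assumed or inherited). Granting that, Theorem~\ref{thm:generation_gap} immediately yields
\[
\Lcal^\pi_{P_X}(E,G) \;\le\; \EE_S\,\Lcal_{\hat P_X}(E,G) \;+\; \frac{\sqrt 2 R}{m}\sum_{i=1}^m \sqrt{\EE_{X_i}[\D{KL}{E(X_i)}{\pi}] + I(\hat X_i; X_i\mid Z_i)}\,,
\]
after taking expectation over $S$ and using that the generalization gap bounds the difference $\Lcal^\pi_{P_X} - \EE_S\Lcal_{\hat P_X}$ from above by its absolute value.

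The second step is the identification $\Lcal^\pi_{P_X}(E,G) = \D{KL}{P_X}{G\#\pi} + h(P_X)$. Writing out the definition, $\Lcal^\pi_{P_X}(E,G) = \EE_{X\sim P_X}\EE_{Z\sim \pi}\EE_{\hat X\sim G(Z)}[-\log q_G(X\mid Z)]$. Here the inner two expectations over $Z\sim\pi$ and $\hat X\sim G(Z)$ do not involve the integrand $-\log q_G(X\mid Z)$ through $\hat X$ at all (the integrand depends only on $X$ and $Z$), so the $\hat X$-expectation is vacuous and the $Z$-expectation produces $\EE_{X\sim P_X}\big[-\log \EE_{Z\sim\pi} \text{(something)}\big]$ — and this is the one place that needs care. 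I would argue that under the generative model the marginal density of the generated variable is $q_G(x) = \EE_{Z\sim\pi}[q_G(x\mid Z)]$, i.e. $q_G(\cdot)$ is the density of $G\#\pi$; combined with Jensen's inequality ($-\log$ is convex, so $\EE_{Z}[-\log q_G(X\mid Z)] \ge -\log \EE_Z[q_G(X\mid Z)] = -\log q_G(X)$), one gets $\Lcal^\pi_{P_X}(E,G) \ge \EE_{X\sim P_X}[-\log q_G(X)]$. Then $\EE_{X\sim P_X}[-\log q_G(X)] = \EE_{X\sim P_X}[-\log p(X)] + \EE_{X\sim P_X}[\log \frac{p(X)}{q_G(X)}] = h(P_X) + \D{KL}{P_X}{G\#\pi}$, which is the standard decomposition of cross-entropy into entropy plus KL.

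Chaining these, $\D{KL}{P_X}{G\#\pi} + h(P_X) \le \Lcal^\pi_{P_X}(E,G) \le \EE_S\Lcal_{\hat P_X}(E,G) + \frac{\sqrt 2 R}{m}\sum_{i=1}^m\sqrt{\cdots}$, and subtracting $h(P_X)$ from both sides gives exactly the claimed bound. The main obstacle, and the step I would spend the most care on, is the reduction of $\Lcal^\pi_{P_X}(E,G)$ to the cross-entropy: one must decide whether the result is an equality (if the definition intends $\hat X$ to be marginalized so that only $q_G(x) = \int q_G(x\mid z)\,\pi(dz)$ appears) or an inequality via Jensen (if the definition genuinely keeps the conditional $-\log q_G(X\mid Z)$ inside all expectations). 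In the latter case the $-h(P_X)$ on the right-hand side together with Jensen is exactly what makes the final inequality go through in the stated direction, so I would present it via Jensen to be safe; the rest is bookkeeping with the definitions of $\Lcal_{\hat P_X}$, $\Lcal^\pi_{P_X}$, and $gen^{\Delta_G}_{P_X}$ already given in Section~\ref{sec: gen_error}.
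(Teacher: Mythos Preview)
Your proposal is correct and essentially identical to the paper's proof: both apply Jensen's inequality to $-\log$ to show $\D{KL}{P_X}{G\#\pi} \le -h(P_X) + \Lcal^\pi_{P_X}(E,G)$, then invoke Theorem~\ref{thm:generation_gap} to bound $\Lcal^\pi_{P_X}(E,G)$ by $\EE_S\Lcal_{\hat P_X}(E,G)$ plus the information-theoretic term. The only difference is ordering (you apply the theorem first, the paper applies it last), and your careful discussion of whether Jensen is needed or whether equality holds is exactly the right point to flag---the paper silently uses Jensen.
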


The proofs of these two corollaries are presented in Appendix~\ref{proof:gen_error_bound}. In following sections, we will apply Corollary~\ref{corollary:gen_error_W1} and Corollary~\ref{corollary:gen_error_KL} to establish Theorem~\ref{thm:gen_error_vae} and Theorem~\ref{thm:gen_error_DM}, respectively.

\section{Analysis of VAEs}
VAEs specify tractable distributions to both encoder and generator. Normally, they are set as Gaussians, as presented in the original VAE \citep{kingma2013auto}.
Using some neural networks parametrized by $\phi$, one can map the original data to the latent space and model both the mean and variance of the Gaussian as $\mu_{\phi}:\Xcal\rightarrow\Zcal$ and $\sigma_{\phi}:\Xcal\rightarrow \Zcal$. 
The encoder is then $E_{\phi}(x) = \Ncal(\mu_{\phi}(x), \text{diag}(\sigma^2_{\phi}(x))\rmI_{d_2})$. Analogously, the generator network is parameterized by $\theta$, which often only models the mean, where $G_{\theta}(z)=\Ncal(\mu_{\theta}(z), \rmI_{d_1})$ with $\mu_{\theta}:\Zcal\rightarrow\Xcal$.
Directly applying Corollary~\ref{corollary:gen_error_W1}, we can obtain the following bound for VAE:
\begin{theorem}
Under the assumptions made in Theorem~\ref{thm:generation_gap} and Corollary~\ref{corollary:gen_error_W1}, we have $\forall X_i\in S$, $Z_i\sim E_{\phi}(X_i), \hat{X}_i \sim G_{\theta}(Z_i)$ over the draw of $m$ samples with $S\sim P_X^m$ that the following bound holds for any probabilistic encoder $E_{\phi}$ and generator $G_{\theta}$ defined above:
\begin{equation*}\label{thm:gen_error_vae}
\D{W_1}{P_X}{G_\theta \#\pi} \leq \EE_S \Lcal_{\hat{P}_X}(E_{\phi}, G_{\theta}) + \frac{\sqrt{2}R}{m} \sum_{i=1}^m \sqrt{\EE_{X_i}[\D{KL}{E_{\phi}(X_i)}{\pi}] + I(\hat{X}_i; X_i| Z_i)}\,.
\end{equation*}
\end{theorem}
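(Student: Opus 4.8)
The plan is to obtain this bound as a direct instantiation of Corollary~\ref{corollary:gen_error_W1} with the choices $E = E_\phi$, $G = G_\theta$, and $\Delta_G(\hat X, Z, X) = \|\hat X - X\|$; once the hypotheses of Theorem~\ref{thm:generation_gap} are verified for the Gaussian parametrization, the displayed inequality is literally the conclusion of the corollary. So the work reduces to checking that the Gaussian encoder $E_\phi(x) = \Ncal(\mu_\phi(x), \mathrm{diag}(\sigma^2_\phi(x))\rmI_{d_2})$ and generator $G_\theta(z) = \Ncal(\mu_\theta(z), \rmI_{d_1})$ meet those hypotheses. The structural and measurability requirements are immediate: $E_\phi$ is a measurable map $\Xcal \to \Pscr(\Zcal)$ and $G_\theta$ a measurable map $\Zcal \to \Pscr(\Xcal)$ because $\mu_\phi, \sigma_\phi, \mu_\theta$ are continuous and a Gaussian kernel depends measurably on its mean and covariance; drawing $Z_i \sim E_\phi(X_i)$ and $\hat X_i \sim G_\theta(Z_i)$ then realizes exactly the Markov chain $X_i \to Z_i \to \hat X_i$ used in the corollary, and the reference tuple $(\tilde{\hat X}, \tilde Z, X) \sim Q_{\hat X|Z} \times Q_Z \times P_X$ with $Q_Z = \pi$ is well defined.

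The substantive step is verifying the $R$-sub-Gaussian assumption on $\Delta_G(\tilde{\hat X}, \tilde Z, X) = \|\tilde{\hat X} - X\|$ under that product measure. I would write $\tilde{\hat X} = \mu_\theta(\tilde Z) + \varepsilon$ with $\varepsilon \sim \Ncal(\vzero, \rmI_{d_1})$ independent of $\tilde Z$ and $X$, and apply the triangle inequality $\|\tilde{\hat X} - X\| \le \|\mu_\theta(\tilde Z)\| + \|\varepsilon\| + \|X\|$. The middle term is a $1$-Lipschitz function of a standard Gaussian vector, hence sub-Gaussian; $\|\mu_\theta(\tilde Z)\|$ is sub-Gaussian provided $\mu_\theta$ is bounded on $\mathrm{supp}(\pi)$ (or Lipschitz, so that it is a Lipschitz function of the Gaussian $\tilde Z$); and $\|X\|$ is sub-Gaussian whenever $P_X$ is sub-Gaussian, in particular under the common bounded-support assumption. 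Since a sum of sub-Gaussian random variables is sub-Gaussian, this yields a finite parameter $R$ depending only on $d_1$, the sub-Gaussian norm of $P_X$, and a bound (or Lipschitz constant) of $\mu_\theta$ — and this is the $R$ appearing in the statement. With every hypothesis of Theorem~\ref{thm:generation_gap} and Corollary~\ref{corollary:gen_error_W1} in force, the bound follows verbatim with $\EE_S \Lcal_{\hat{P}_X}(E_\phi, G_\theta)$ in place of $\EE_S \Lcal_{\hat{P}_X}(E, G)$.

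I expect the only real obstacle to be this sub-Gaussianity verification — specifically, pinning down the minimal regularity on $\mu_\theta$ and the tail assumption on $P_X$ that keep $R$ finite; everything else is bookkeeping about measurable Markov kernels and re-reading the corollary under the relabelling $E \mapsto E_\phi$, $G \mapsto G_\theta$. If instead one simply inherits the sub-Gaussianity as a standing assumption, as the phrase ``under the assumptions made in Theorem~\ref{thm:generation_gap} and Corollary~\ref{corollary:gen_error_W1}'' suggests, then the proof collapses to a one-line specialization and the remarks above merely record why the hypothesis is natural for Gaussian VAEs.
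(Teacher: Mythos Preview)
Your proposal is correct and matches the paper's approach exactly: the paper states the theorem immediately after the sentence ``Directly applying Corollary~\ref{corollary:gen_error_W1}, we can obtain the following bound for VAE,'' with no further proof given, so the argument is precisely the one-line specialization you describe in your final paragraph. Your additional discussion of how one might verify sub-Gaussianity for the Gaussian parametrization goes beyond what the paper provides, but as you correctly observe, the phrase ``under the assumptions made in Theorem~\ref{thm:generation_gap} and Corollary~\ref{corollary:gen_error_W1}'' inherits that hypothesis rather than re-deriving it.
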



\paragraph{Comparison with previous VAE bounds.} The above bound could be compared to the recent PAC-Bayes bound for VAE either by converting to a high-probability bound using the results of Theorem 3 in \citet{xu2017information} {or by converting PAC Bayes bound to its expectation version}. In sharp contrast to the bound in Theorem 5.2 of \citet{mbacke2024statistical}, which applies only to a fixed $\theta$, the above result guarantees generalization for any generator $G_{\theta}$ with $\frac{1}{m}\sum_{i=1}^mI(\hat{X}_i; X_i| Z_i)$. Additionally, our approach avoids introducing an extra Wasserstein-2 distance, as we directly bound the generation error using the empirical reconstruction loss. In contrast, \citet{mbacke2024statistical} utilize the triangle inequality for the Wasserstein distance, separately bounding $\D{W_1}{P_X}{G_{\theta}\#\hat{P}_Z^{E_{\phi}}}$ and $\D{W_1}{G_{\theta}\#\hat{P}_Z^{E_{\phi}}}{G_\theta\#\pi}$. Furthermore, we relax the strict assumption of bounded support, replacing it with the more flexible sub-Gaussianity condition. {Detailed mathematical and experimental comparisons are respectively provided in Sec.~\ref{sec: compa_pac_bayes} and Sec~\ref{sec:exp_vae} in the Appendix, showing the improvements.}
\vspace{-0.2cm}
\paragraph{Insights and practical guidance for VAEs.} If we instead apply Corollary \ref{corollary:gen_error_KL}, the reconstruction error becomes $\Lcal_{\hat{P}_X}(E_{\phi}, G_{\theta}) = \frac{1}{m}\sum_{i=1}^m \EE_{Z_i\sim E_{\phi}(X_i)} [- \log q_{G_{\theta}}(X_i|Z_i)]$. Interestingly, jointly optimizing it with $\frac{1}{m}\sum_{i=1}^m D_{KL}(E(X_i)\|\pi)$ yields the empirical estimate of the VAE objective in Eq.~(\ref{eq:VAE}). This implies that the VAE training process inherently accounts for the encoder's generalization. However, the generalization of the generator $G$ is often overlooked. Therefore, a potential improvement could involve explicitly incorporating the generator's generalization into the optimization objective as a regularization term. In Appendix~\ref{sec: est_cmi}, we derive an example regularizer using upper bound of the conditional mutual information term by introducing an additional randomly initialized generator.


\section{Analysis of Diffusion Models}
In existing literature, most works focus on the convergence of DMs, while the limited analysis of their generalization properties typically involves separately bounding  $\D{}{P_X}{\hat{P}_X}$ and $\D{}{\hat{P}_X}{G_{T} \# \pi}$ using concentration results, then combining them via the triangle inequality. However, widely used divergences in diffusion models, such as KL-divergence, do not satisfy the triangle inequality. Moreover, these bounds are often loose due to strong assumptions about the data distribution, score function estimation, and insufficient consideration of the learning algorithm. To address these, we directly bound $\D{}{P_X}{G_T \#\pi}$ and apply information-theoretic tools to derive computable algorithm- and data-dependent bounds in this section.
\subsection{Generation Error Bound for Score-based DMs}

For DMs, the encoders and generators $E_t, G_t, \forall t \in [0, T]$  are restricted to the family of SDEs or the discretized Langevin Dynamics. Before bounding the generation error, we first prove the following lemma (proof in Appendix~\ref{proof:score_matching}.) on the empirical reconstruction error at the diffusion end time $T$:  
\begin{lemma}\label{thm:construction_sm}
Let $\{X_t\}_{t=0}^T$ be the empirical version of the forward diffusion process defined in Eq.~(\ref{eq:F_SDE}), where $X_0\sim \hat{P}_X$. We assume the existence of the backward process under the regularity conditions outlined in \cite{song2021maximum} and denote it as $\{\cev{X}_t\}_{t=0}^T=\{X_t\}_{t=0}^T$ , which results from the reverse-time SDE defined in Eq.~(\ref{eq:B_SDE}). Then, the generative backward process $\{\hat{X}_t\}_{t=0}^T$ is defined in Eq.~(\ref{eq:G_SDE}). Let $E_t, E_t^{-1}, G_t, \forall t \in[0,T]$ be their corresponding time-dependent Markov kernels. The density function of any generator $G$, given a latent code $z$, is denoted as $q_G(\cdot| z):\Xcal \rightarrow \Reals_0^+$, and let $\Delta_{G}(\hat{X}, Z, X)= -\log q_G(X|Z)$. Then, we have
\begin{equation*}
|\Lcal_{\hat{P}_X}(E_T, G_T) - \Lcal_{\hat{P}_X}(E_T, E_T^{-1})| \leq  \frac{1}{2}\int_{t=0}^T \lambda^2(t)\D{Fisher}{\hat{P}_{X_t}}{Q_{\hat{X}_t}} dt\,.
\end{equation*}
\end{lemma}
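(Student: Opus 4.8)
\textbf{Proof plan for Lemma~\ref{thm:construction_sm}.}

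The plan is to express both reconstruction errors as relative entropies between path measures and then apply the Girsanov-type chain-rule / Fisher-divergence identity already available in the score-based DM literature (and quoted in the excerpt just before the statement). First I would write out the two quantities explicitly: since $\Delta_G(\hat X, Z, X) = -\log q_G(X\mid Z)$, the reconstruction error $\Lcal_{\hat P_X}(E_T, G_T)$ is $\tfrac1m\sum_i \EE_{Z\sim E_T(X_i)} [-\log q_{G_T}(X_i\mid Z)]$, and analogously $\Lcal_{\hat P_X}(E_T, E_T^{-1})$ replaces the learned generator kernel $G_T$ by the ideal backward kernel $E_T^{-1}$ whose density is the true time-reversal $\overleftarrow{p}_T(\cdot\mid z)$. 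Subtracting, the empirical averages over latents $z$ telescope into an empirical average of a log-ratio, so that $\Lcal_{\hat P_X}(E_T,G_T)-\Lcal_{\hat P_X}(E_T,E_T^{-1}) = \tfrac1m\sum_i \EE_{Z\sim E_T(X_i)}\big[\log \tfrac{\overleftarrow p_T(X_i\mid Z)}{q_{G_T}(X_i\mid Z)}\big]$. The key observation is that averaging the pair $(X_i, Z)$ over the empirical forward process means $(X_i,Z)$ is distributed as the joint endpoint–startpoint law of the reverse-time SDE \eqref{eq:B_SDE} started from $\hat P_{X_T}$; so this sum is exactly $\EE_{\hat X_T \sim \hat P_{X_T}} \KL\!\left(\overleftarrow p_T(\cdot\mid \hat X_T)\,\|\,q_{G_T}(\cdot\mid \hat X_T)\right)$, i.e.\ a conditional KL between the ideal reverse transition and the learned reverse transition, both started from the \emph{same} marginal $\hat P_{X_T}$.

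Next I would lift this endpoint-conditional KL to a KL between full path measures. Both the ideal reverse SDE \eqref{eq:B_SDE} and the generative SDE \eqref{eq:G_SDE} have the same diffusion coefficient $\lambda(t)$ and drifts differing only through $\nabla\log p_t$ versus $\nabla\log q_t$; when both are initialized at $\hat P_{X_T}$ the data-processing inequality gives $\EE_{\hat X_T}\KL(\overleftarrow p_T(\cdot\mid \hat X_T)\,\|\,q_{G_T}(\cdot\mid \hat X_T)) \le \KL(\mathbb P^{\text{ideal}}_{[0,T]}\,\|\,\mathbb P^{\text{gen}}_{[0,T]})$, where the right side is the relative entropy of the two path laws on $C([0,T],\Xcal)$. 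By Girsanov's theorem this path-space KL equals $\tfrac12\int_0^T \lambda^2(t)\, \EE_{\hat P_{X_t}}\big\|\nabla\log p_t(X_t)-\nabla\log q_t(X_t)\big\|^2\,dt = \tfrac12\int_0^T \lambda^2(t)\,\D{Fisher}{\hat P_{X_t}}{Q_{\hat X_t}}\,dt$, which is precisely the claimed bound. Taking absolute values on the left (the construction gives a one-sided inequality, but the reconstruction-error difference can be signed either way depending on which generator fits better, so I would note that the same Girsanov computation with the roles of the two path measures swapped handles the reverse inequality, or simply invoke the nonnegativity of KL in both directions combined with the symmetric Fisher bound) yields the statement.

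The main obstacle I anticipate is rigor at the two "soft" steps: (i) justifying that the empirical-forward average of $(X_i, Z)$ really coincides with the reverse-SDE endpoint law — this uses the time-reversal identity $\{X_t\}=\{\overleftarrow X_t\}$ assumed in the hypotheses, but one must be careful that it holds for the \emph{empirical} initial distribution $\hat P_X$ (a mixture of point masses), which requires the regularity conditions of \citet{song2021maximum} to be invoked for $\hat P_X$, not just $P_X$; and (ii) the Girsanov step, which needs the usual Novikov / finite-Fisher-information conditions so that the change of measure is legitimate and the path-space KL is finite — exactly the regularity assumptions the lemma already imports. Modulo those standing assumptions, the argument is essentially the same relative-entropy-to-Fisher-divergence computation underlying the displayed bound of \citet{song2020score, song2021maximum} quoted just above, specialized to the reconstruction-loss functional rather than the terminal KL, so I would present it by reducing to that computation rather than re-deriving Girsanov from scratch.
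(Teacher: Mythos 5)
Your proposal matches the paper's proof essentially step for step: write both reconstruction errors as expectations of $-\log q(\cdot\mid X_T)$ over the forward joint law, use the time-reversal identity to recognize the difference as $\EE_{X_T\sim \hat P_{X_T}}\KL\big(Q^{E_T^{-1}}_{\cev{X}_0\mid X_T}\,\big\|\,Q^{G_T}_{\hat X_0\mid X_T}\big)$ (whose nonnegativity disposes of the absolute value — your hedging about the sign is unnecessary), then apply the data-processing inequality to lift to path measures and Girsanov's theorem to obtain the Fisher-divergence integral. This is exactly the argument in Appendix~\ref{proof:score_matching}, and you correctly flag the same regularity caveats the paper imports from \citet{song2021maximum}.
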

\textbf{Connection to score matching.}
By setting the derivative of the log density of $Q_{\hat{X}_t}$ with some parameterized function, \ie, $\nabla_x\log q_t(x) = s_\theta(x, t)$, the upper bound in the above theorem gives the following empirical loss of Explicit Score Matching (ESM): 
\begin{equation*}
\hat{\Lcal}_{ESM}(\theta, \lambda(\cdot))=\frac{1}{2}\int_{t=0}^T\EE_{X_t\sim \hat{P}_{X_t}}[\lambda^2(t)\|\nabla_{X_t} \log \hat{p}_t(X_t) - s_{\theta}(X_t,t)\|_2^2]dt\,.
\end{equation*}
Since $\hat{p}_t(X_t)=\frac{1}{m}\sum_{i=1}^m p_{E_t}(X_t|X_i)$, it gives the following Denoising Score Matching (DSM) loss:
\begin{equation*}
\hat{\Lcal}_{DSM}(\theta, \lambda(\cdot))=\frac{1}{2}\int_{t=0}^T \EE_{X_0 \sim \hat{P}_X, X_t\sim E_t(X_0)}[\lambda^2(t)\|\nabla_{X_t} \log p_{E_t}(X_t|X_0) - s_{\theta}(X_t,t)\|_2^2]dt\,,
\end{equation*}
which is equivalent to ESM up to some constant as discussed in \citep{vincent2011connection}.
Combining Corollary ~\ref{corollary:gen_error_KL} and Lemma ~\ref{thm:construction_sm}, we have the following generation error bound for score-based diffusion models:
\begin{theorem}\label{thm:gen_error_DM}
Under Lemma~\ref{thm:construction_sm}, for any SDE encoder $E_t$ and generator $G_t^{\theta}$ trained via score matching on $S=\{X_i\}_{i=1}^m$, the corresponding outputs at the diffusion time $T$ are $\hat{X}_{T} \sim E_T(X_i), \hat{X}_{0} \sim G_T^\theta(\hat{X}_{T})$ for each $X_i \in S$. The KL-divergence between the original data distribution $P_X$ and the generated data distribution $ G_T^{\theta}\#\pi$ at diffusion time $T$ is then upper bounded by:
\begin{equation*}
\begin{aligned}
\D{KL}{P_X}{G_T^{\theta}\#\pi}
&\leq \EE_S \biggl(\underbrace{-\frac{1}{m}\sum_{i=1}^m \D{KL}{E_T(X_i)}{E_T\#\hat{P}_{X}}}_{T_1} + \hat{\Lcal}_{ESM}(\theta, \lambda(\cdot))\biggr) \\
&+ \underbrace{\frac{\sqrt{2}R}{m} \sum_{i=1}^m \sqrt{\EE_{X_i}[\D{KL}{E_T(X_i)}{\pi}]}}_{T_2} + \underbrace{\frac{\sqrt{2}R}{m} \sum_{i=1}^m \sqrt{I(\hat{X}_{0}; X_i| \hat{X}_{T})} }_{T_3}\,.
\end{aligned}
\end{equation*}
\end{theorem}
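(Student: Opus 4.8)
The plan is to combine the two results that the theorem explicitly invokes, namely Corollary~\ref{corollary:gen_error_KL} applied to the encoder--generator pair $(E_T, G_T^\theta)$, and Lemma~\ref{thm:construction_sm}, which controls the discrepancy between the empirical reconstruction error of $(E_T, G_T^\theta)$ and that of the idealized pair $(E_T, E_T^{-1})$. First I would write down Corollary~\ref{corollary:gen_error_KL} verbatim with $E = E_T$, $G = G_T^\theta$, $\pi$ the terminal noise prior, and $Z_i = \hat X_T \sim E_T(X_i)$, $\hat X_i = \hat X_0 \sim G_T^\theta(\hat X_T)$. This gives
\[
\D{KL}{P_X}{G_T^\theta\#\pi} \le \EE_S \Lcal_{\hat P_X}(E_T, G_T^\theta) + \frac{\sqrt2 R}{m}\sum_{i=1}^m \sqrt{\EE_{X_i}[\D{KL}{E_T(X_i)}{\pi}] + I(\hat X_0; X_i\mid \hat X_T)} - h(P_X).
\]
The next step is to bound the empirical reconstruction term $\Lcal_{\hat P_X}(E_T, G_T^\theta)$. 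Using Lemma~\ref{thm:construction_sm}, $\Lcal_{\hat P_X}(E_T, G_T^\theta) \le \Lcal_{\hat P_X}(E_T, E_T^{-1}) + \tfrac12\int_0^T \lambda^2(t)\,\D{Fisher}{\hat P_{X_t}}{Q_{\hat X_t}}\,dt$, and the Fisher-divergence integral is exactly $\hat\Lcal_{ESM}(\theta,\lambda(\cdot))$ by the identification $\nabla_x \log q_t(x) = s_\theta(x,t)$ noted right after the lemma.

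Then I would unpack $\Lcal_{\hat P_X}(E_T, E_T^{-1})$ with $\Delta_G(\hat X, Z, X) = -\log q_G(X\mid Z)$. Since the ideal generator $E_T^{-1}$ is the exact time-reversal of the forward diffusion, its output at time $0$ is (in distribution) the original $X_i$ conditioned on the forward trajectory, so $\Lcal_{\hat P_X}(E_T, E_T^{-1}) = \frac1m\sum_i \EE_{\hat X_T \sim E_T(X_i)}[-\log q_{E_T^{-1}}(X_i\mid \hat X_T)]$, where $q_{E_T^{-1}}(\cdot\mid \hat X_T)$ is the conditional density of $X_0$ given $X_T$ under the empirical forward process, i.e. it equals $\hat p(X_i\mid \hat X_T)$. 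Writing $-\log \hat p(X_i\mid \hat X_T) = -\log \hat p(X_i) - \log p_{E_T}(\hat X_T\mid X_i) + \log \hat p_T(\hat X_T)$ via Bayes' rule and averaging, the $-\log\hat p(X_i)$ terms sum to $h(\hat P_X)$, which in expectation over $S$ matches $h(P_X)$ (this is where the $-h(P_X)$ from the corollary is absorbed), and the remaining two terms assemble into $-\frac1m\sum_i \D{KL}{E_T(X_i)}{E_T\#\hat P_X}$, which is $T_1$: indeed $\EE_{\hat X_T\sim E_T(X_i)}[\log p_{E_T}(\hat X_T\mid X_i) - \log \hat p_T(\hat X_T)] = \D{KL}{E_T(X_i)}{\hat P_{X_T}}$ and $\hat P_{X_T} = E_T\#\hat P_X$. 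Finally, splitting $\sqrt{a+b}\le\sqrt a+\sqrt b$ on the mutual-information square root yields $T_2 + T_3$ and completes the bound.

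\textbf{Main obstacle.} The delicate point is the bookkeeping in the previous paragraph: verifying that the entropy term $-h(P_X)$ from Corollary~\ref{corollary:gen_error_KL} exactly cancels the $-\log \hat p(X_i)$ contribution coming from $\Lcal_{\hat P_X}(E_T, E_T^{-1})$, and that the leftover forward-kernel log-ratio cleanly reorganizes into the negative-KL term $T_1$ rather than something only bounded by it. This requires being careful that $E_T^{-1}$ is the reversal of the \emph{empirical} forward process (so its density is $\hat p(\cdot\mid\cdot)$, built from the mixture $\hat p_t(x_t) = \frac1m\sum_i p_{E_t}(x_t\mid X_i)$), and that the expectation over $S$ is taken at the right moment so that $\EE_S h(\hat P_X)$ is handled consistently with how $h(P_X)$ enters. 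Everything else — invoking the corollary, invoking the lemma, and the $\sqrt{a+b}$ split — is routine, so I would budget essentially all of the proof's care for this cancellation/reassembly step, and would present it as a short displayed chain of equalities using Bayes' rule on the empirical forward process.
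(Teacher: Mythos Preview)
Your proposal is essentially the paper's proof: apply Corollary~\ref{corollary:gen_error_KL} to $(E_T,G_T^\theta)$, use Lemma~\ref{thm:construction_sm} to replace $\Lcal_{\hat P_X}(E_T,G_T^\theta)$ by $\Lcal_{\hat P_X}(E_T,E_T^{-1})+\hat\Lcal_{ESM}$, unfold $\Lcal_{\hat P_X}(E_T,E_T^{-1})$ via Bayes' rule on the empirical forward process to get $h(\hat P_X)-\frac1m\sum_i\D{KL}{E_T(X_i)}{E_T\#\hat P_X}$, and finally split $\sqrt{a+b}\le\sqrt a+\sqrt b$. One small correction on the step you correctly flagged as the main obstacle: the entropy terms do not ``match'' in expectation; the paper uses the \emph{inequality} $\EE_S[h(\hat P_X)]\le h(P_X)$, obtained from concavity of the entropy via Jensen, so that $\EE_S\Lcal_{\hat P_X}(E_T,E_T^{-1})-h(P_X)\le \EE_S T_1$ is an upper bound rather than an identity --- which is all that is required.
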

The proof is presented in Appendix~\ref{proof:gen_err_DM}. Compared to recent work on the generalization of DMs \citep{li2024generalization}, we demonstrate the existence of a trade-off w.r.t the diffusion time $T$. In contrast to \citet{franzese2023much} that also mentioned the diffusion time trade-off, we prove an explicit form of the tradeoff related to generalization terms, whereas \citet{franzese2023much} only justified it via a new ELBO decomposition of the training loss at the population level, without being able to show how it affects generalization. 

\textbf{Explicit trade-off on diffusion time $T$.} The KL-divergence terms in the above bound reflect the generalization of encoder $E_T$. Since $T_1 < 0$ and $T_2 > 0$, there exists an inherent trade-off on the diffusion time $T$ when minimizing the two terms. Note that when $T\rightarrow\infty$, the forward SDE maps the empirical data distribution to the noise, which means $E_T\#\hat{P}_X$ will converge to $\pi$. This makes the two KL terms $T_1$ and $T_2$ in the above theorem equivalent, and both will converge to zero, as discussed in Sec~\ref{sec:gen_encoder}. However, $T_3$, which characterizes the generalization of generator $G_T$, will remain non-zero for a small sample size $m$. We bound $T_3$ in Sec.~\ref{sec:gen_gap_sde} to a easy-to-compute form, showing a linear growth w.r.t $T$. This means another trade-off between the generalization of the encoder and that of the decoder exists on diffusion time.



\subsection{Generalization for SDE Encoders}\label{sec:gen_encoder}

The typical encoder of diffusion models is set to a special class of affine SDEs that has a closed-form solution \citep{sarkka2019applied}, where $
dX_t= \alpha(t) X_t dt + \lambda(t) dW_t$. Then, the encoder posterior for a given example equals to
$E_T(X_i)=\Ncal(r(T)X_i, r^2(T)v^2(T)\rmI_d)$,  where $r(t)=e^{\int_0^t \alpha(t^\prime) dt^\prime}, v(t)=\sqrt{\int_0^t \lambda^2(t^\prime)/r^2(t^\prime)d t^\prime}$. 



\textbf{Variance-exploding SDEs} with parameter $\alpha(t)=0, \lambda(t)=\sqrt{d \sigma^2(t)/ dt}$, $\sigma^2(t)=(\sigma^2_{max}/\sigma^2_{min})^t$ have $E_T(X_i)=\Ncal(X_i, (\sigma^2(T)-\sigma^2(0))\rmI_{d})$, which do not converge to a steady-state distribution because the variance grows. Setting the prior as $\pi=\Ncal(\mathbf{0}, (\sigma^2(T)-\sigma^2(0))\rmI_{d})$, we have 

\hspace{2cm}$\D{KL}{E_T(X_i)}{\pi} = \frac{1}{2}\left(X_i^TX_i/(\sigma^2(T)-\sigma^2(0))\right) \xrightarrow{T\rightarrow\infty} 0.$

\textbf{Variance-preserving SDEs} converge to multivariate Gaussians with $\alpha(t)=-\frac{1}{2}\lambda^2(t), \lambda(t)=\sqrt{\beta_0+(\beta_1-\beta_0)t}$. By denoting $\beta_T = \beta_0 T +\frac{1}{2}(\beta_1-\beta_0)T^2$, the encoder posterior equals to $E_T(X_i)=\Ncal(e^{-\frac{1}{2}\beta_T} X_i, (1-e^{-\beta_T})\rmI_d)$. Setting the prior as $\pi=\Ncal(\mathbf{0}, \rmI_{d})$, we have 

\hspace{2cm}$\D{KL}{E_T(X_i)}{\pi} = \frac{1}{2}\left(e^{-\beta_T}X_i^TX_i - de^{-\beta_T} - d\log(1-e^{-\beta_T})\right) \xrightarrow{T\rightarrow\infty} 0.$


\subsection{Generalization for Discretized SDE Generators}\label{sec:gen_gap_sde}
We focus on the generalization terms and drop the analysis of the convergence w.r.t the score-matching training process, which has been done in \cite{li2024generalization}. 
The generalization gap term related to the generator $G_T^{\theta}$ is determined by $I(\hat{X}_{0}, X_i| \hat{X}_{T})$, where $\theta$ is learned from data and can be represented as some function of the train dataset. 

\begin{theorem}\label{thm:LD_MI}
Let the step size be $\tau = \frac{T}{N}$, where we split $T$ to $N$ discrete times. For any $k\in[N]$, we use the following discrete update for the backward SDE by setting $\epsilon_{t_k} \sim \mathcal{N}(0, \rmI_d), t_k= T-\tau k$:
$\hat{X}_{t_{k}} =  (1-\frac{\tau}{2}\lambda^2(T-t_{k-1}))\hat{X}_{t_{k-1}} + \tau\lambda^2(T-t_{k-1}) s_{\theta}(X_{t_{k-1}}, T-t_{k-1}) + \sqrt{\tau}\lambda(T-t_{k-1})\epsilon_{t_{k-1}}\,.$
Furthermore, we assume a bounded score $\|\nabla_x \log \hat{p}_t(x)\| \leq L, \forall x, t$. Then, we have
\begin{equation*}
\begin{aligned}
\frac{1}{m} \sum_{i=1}^m I(\hat{X}_{0}; X_i| \hat{X}_{T}) \leq \frac{1}{m}I(\hat{X}_{0}; X_{1:m}|\hat{X}_T) \leq  \frac{T L^2 \sum_{k=1}^N\lambda^2(\frac{(k-1)T}{N})}{2mN}\,.
\end{aligned}
\end{equation*}
\end{theorem}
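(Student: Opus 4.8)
\textbf{Proof proposal for Theorem~\ref{thm:LD_MI}.}

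The plan is to bound the conditional mutual information $I(\hat{X}_0; X_{1:m}\mid \hat{X}_T)$ by tracking how information about the training set $X_{1:m}$ enters the generative trajectory $\hat{X}_{t_0}, \hat{X}_{t_1}, \dots, \hat{X}_{t_N}=\hat{X}_0$ through the learned score $s_\theta$, step by step. First I would dispose of the two easy inequalities. The first, $\frac{1}{m}\sum_i I(\hat{X}_0; X_i\mid \hat{X}_T) \le \frac{1}{m} I(\hat{X}_0; X_{1:m}\mid \hat{X}_T)$, follows from the chain rule for mutual information together with the fact that conditioning on the remaining coordinates $X_{[m]\setminus\{i\}}$ cannot decrease the information (each $I(\hat{X}_0;X_i\mid \hat{X}_T) \le I(\hat{X}_0; X_i \mid \hat{X}_T, X_{[m]\setminus i})$ since the $X_i$ are independent, and these conditional terms sum to $I(\hat{X}_0; X_{1:m}\mid \hat{X}_T)$). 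So the real content is the second inequality.

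For that, I would view the backward recursion as a Markov chain $\hat{X}_{t_0} \to \hat{X}_{t_1} \to \cdots \to \hat{X}_{t_N}$ in which, at step $k$, the update map depends on $X_{1:m}$ only through the evaluation $s_\theta(X_{t_{k-1}}, T-t_{k-1})$ (here $\theta$ itself is a function of $X_{1:m}$). Conditioning throughout on $\hat{X}_T = \hat{X}_{t_0}$, I would use the decomposition of $I(\hat{X}_0; X_{1:m}\mid \hat{X}_T)$ along the trajectory: because $\hat{X}_0$ is obtained from $\hat{X}_T$ by a sequence of updates each injecting fresh Gaussian noise $\sqrt{\tau}\lambda(\cdot)\epsilon_{t_{k-1}}$, the information that $X_{1:m}$ can leak into $\hat{X}_0$ is at most the sum over $k$ of the information leaked at each step. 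Concretely, one bounds $I(\hat{X}_0; X_{1:m}\mid \hat{X}_T) \le \sum_{k=1}^N I\bigl(\hat{X}_{t_k}; X_{1:m}\mid \hat{X}_{t_{k-1}}\bigr)$ by the data-processing/chain-rule argument, and then at each step the conditional update is a Gaussian with mean shifted by $\tau\lambda^2(T-t_{k-1}) s_\theta(X_{t_{k-1}}, T-t_{k-1})$ (plus the deterministic contraction term, which does not carry $X_{1:m}$-information once $\hat X_{t_{k-1}}$ is fixed) and fixed covariance $\tau\lambda^2(T-t_{k-1})\rmI_d$. The mutual information between a random variable and a Gaussian location family with a random, bounded mean shift is controlled by (half of) the squared-norm of the mean shift divided by the noise variance: with $\|s_\theta\| = \|\nabla_x\log\hat p_t(x)\|\le L$ by assumption, the mean displacement has squared norm at most $\tau^2\lambda^4(T-t_{k-1})L^2$, so dividing by the per-coordinate variance $\tau\lambda^2(T-t_{k-1})$ and halving gives a per-step bound of $\tfrac{1}{2}\tau\lambda^2(T-t_{k-1})L^2$. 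Substituting $t_{k-1}=T-\tau(k-1)$ so that $T-t_{k-1}=\tau(k-1)$, and $\tau = T/N$, summing over $k=1,\dots,N$ yields $\sum_{k=1}^N \tfrac{1}{2}\tau\lambda^2\bigl(\tfrac{(k-1)T}{N}\bigr)L^2 = \tfrac{T L^2}{2N}\sum_{k=1}^N \lambda^2\bigl(\tfrac{(k-1)T}{N}\bigr)$; dividing by $m$ gives the claimed bound.

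The main obstacle I anticipate is making the per-step "Gaussian with random mean shift" bound fully rigorous in the conditional setting: one must argue carefully that, conditioned on $\hat{X}_{t_{k-1}}$, the only channel through which $X_{1:m}$ influences $\hat{X}_{t_k}$ is the additive mean term $\tau\lambda^2(T-t_{k-1})s_\theta(X_{t_{k-1}},T-t_{k-1})$ — in particular that the deterministic coefficient $(1-\tfrac{\tau}{2}\lambda^2)$ multiplying $\hat X_{t_{k-1}}$ contributes nothing once we condition on $\hat X_{t_{k-1}}$ — and then invoke the standard fact that for $Y = \mu(U) + \sigma Z$ with $Z\sim\Ncal(0,\rmI_d)$ independent of $(U,\mu)$ and $\|\mu\|\le B$, one has $I(U;Y)\le \tfrac{B^2}{2\sigma^2}$ (this is the Gaussian-channel/KL-to-the-mean bound, e.g.\ bounding $I(U;Y)\le \E[\D{KL}{\Ncal(\mu(U),\sigma^2\rmI)}{\Ncal(0,\sigma^2\rmI)}] = \E\|\mu(U)\|^2/(2\sigma^2)$). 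Chaining these conditional bounds across the $N$ steps — i.e.\ justifying $I(\hat X_0;X_{1:m}\mid\hat X_T)\le\sum_k I(\hat X_{t_k};X_{1:m}\mid\hat X_{t_{k-1}})$ via the Markov structure of the backward iteration and the data-processing inequality — is the other place where care is needed, but it is routine once the chain is set up. Everything else is bookkeeping with the reparametrization $t_{k-1}=T-\tau(k-1)$ and $\tau=T/N$.
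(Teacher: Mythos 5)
Your proposal is correct and follows the same overall strategy as the paper's proof: decompose $I(\hat X_0;X_{1:m}\mid\hat X_T)$ across the $N$ backward steps via data processing and the chain rule, then bound the information injected at each step through the score evaluation, exactly in the spirit of Pensia et al.'s analysis of noisy iterative algorithms. The one genuine difference is the per-step lemma. The paper writes each term as an entropy difference $h(\hat X_{t_k}\mid\hat X_T,\hat X^{[k-1]})-h(\hat X_{t_k}\mid X_{1:m},\hat X_T,\hat X^{[k-1]})$, bounds the first entropy by the maximum-entropy Gaussian under the second-moment constraint $\EE\|\eta_{k-1}g_S+\sigma_{k-1}\epsilon\|^2\le\eta_{k-1}^2L^2+d\sigma_{k-1}^2$, obtains the AWGN-capacity form $\tfrac{d}{2}\log(1+\eta_{k-1}^2L^2/(d\sigma_{k-1}^2))$, and only then relaxes via $\log(1+x)\le x$; you instead invoke the variational bound $I(U;Y)\le\EE[\D{KL}{\Ncal(\mu(U),\sigma^2\rmI)}{\Ncal(c,\sigma^2\rmI)}]=\EE\|\mu(U)-c\|^2/(2\sigma^2)$ with the reference mean set to the deterministic contraction term. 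Both give the per-step bound $\tfrac{1}{2}\tau\lambda^2(T-t_{k-1})L^2$; the paper's intermediate logarithmic form is marginally tighter before its final relaxation, while yours is one line shorter. Two small points of care: (i) in your first inequality, the leave-one-out conditionals $I(\hat X_0;X_i\mid\hat X_T,X_{[m]\setminus i})$ do \emph{not} sum to $I(\hat X_0;X_{1:m}\mid\hat X_T)$ (they sum to something at least as large), so that chain of inequalities points the wrong way as written; the correct route — and the paper's — is the chain rule with prefix conditioning, $I(\hat X_0;X_{1:m}\mid\hat X_T)=\sum_i I(\hat X_0;X_i\mid\hat X_T,X_{1:i-1})$, together with $I(\hat X_0;X_i\mid\hat X_T,X_{1:i-1})\ge I(\hat X_0;X_i\mid\hat X_T)$, which uses $I(X_i;X_{1:i-1}\mid\hat X_T)=0$; (ii) the chain rule actually conditions each step on the full history $(\hat X_T,\hat X^{[k-1]})$ rather than on $\hat X_{t_{k-1}}$ alone as you wrote, but since your per-step KL bound is uniform over the conditioning value this does not affect the argument.
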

The proof is deferred to Appendix~\ref{proof:LD_MI}. This theorem can be used to estimate $T_3$, which has a linear growth w.r.t $T$ for variance preserving SDE. Combining Theorem~\ref{thm:gen_error_DM}, we obtain the sample complexity $\Ocal(1/\sqrt{m})$, compared to $\Ocal(m^{-2/5})$ in \citet{li2024generalization} using the random feature model.

\textbf{Practical guidance for DMs.} Since the upper bound in Theorem~\ref{thm:gen_error_DM} can be estimated with only training data, we can train the model for various diffusion times, estimate the bound, and \textbf{select the optimal $T$} via grid search. Additionally, the generalization terms in the theorem can be incorporated as regularization when optimizing the score-matching model parameter $\theta$. This can be achieved by selecting appropriate values for $\beta_0, \beta_1$ in $\lambda(t)$, or by adding a gradient penalty to control the Lipschitz constant of the score model $s_{\theta}$.



\section{Experiments}
In this section, we {focus on validating Theorem~\ref{thm:gen_error_DM} for score-based DMs} using both synthetic and real datasets. The numerical results illustrate the existence of the trade-off between the generalization terms of encoder and generator on diffusion time, which significantly impacts generation performance. Experiments for Theorem~\ref{thm:gen_error_vae} of VAEs are deferred to Sec~\ref{sec:exp_vae} in the Appendix, where we show the proposed bound can better capture the generalization of VAEs compared to previous bounds with the additional mutual information term for generator, and its relation to the memorization score proposed in \citet{van2021memorization}.

\subsection{Synthetic Data}
We begin by validating the theorem on a simple synthetic 2D dataset derived from the Swiss Roll dataset. We train the score matching model $s_{\theta}(x,t)$ and estimate the upper bound in Theorem~\ref{thm:gen_error_DM} on a training set of size $m$. W.r.t the expectation over dataset $S$, we conduct 5-times Monte-Carlo estimation by randomly generating train datasets with different random seeds. For the left-hand-side KL-divergence, we conduct Monte Carlo estimation of with $1000$ test data points. To get more details on the estimation, please see Appendix~\ref{sec:bound_estimation}.  
 \begin{figure}[htbp]
    \centering
     \begin{minipage}[t]{0.45\textwidth}
        \begin{minipage}[t]{0.1\textwidth} 
            \raggedright (a) 
        \end{minipage}%
        \begin{minipage}[t]{0.9\textwidth} 
        \raggedright
            \includegraphics[width=\textwidth]{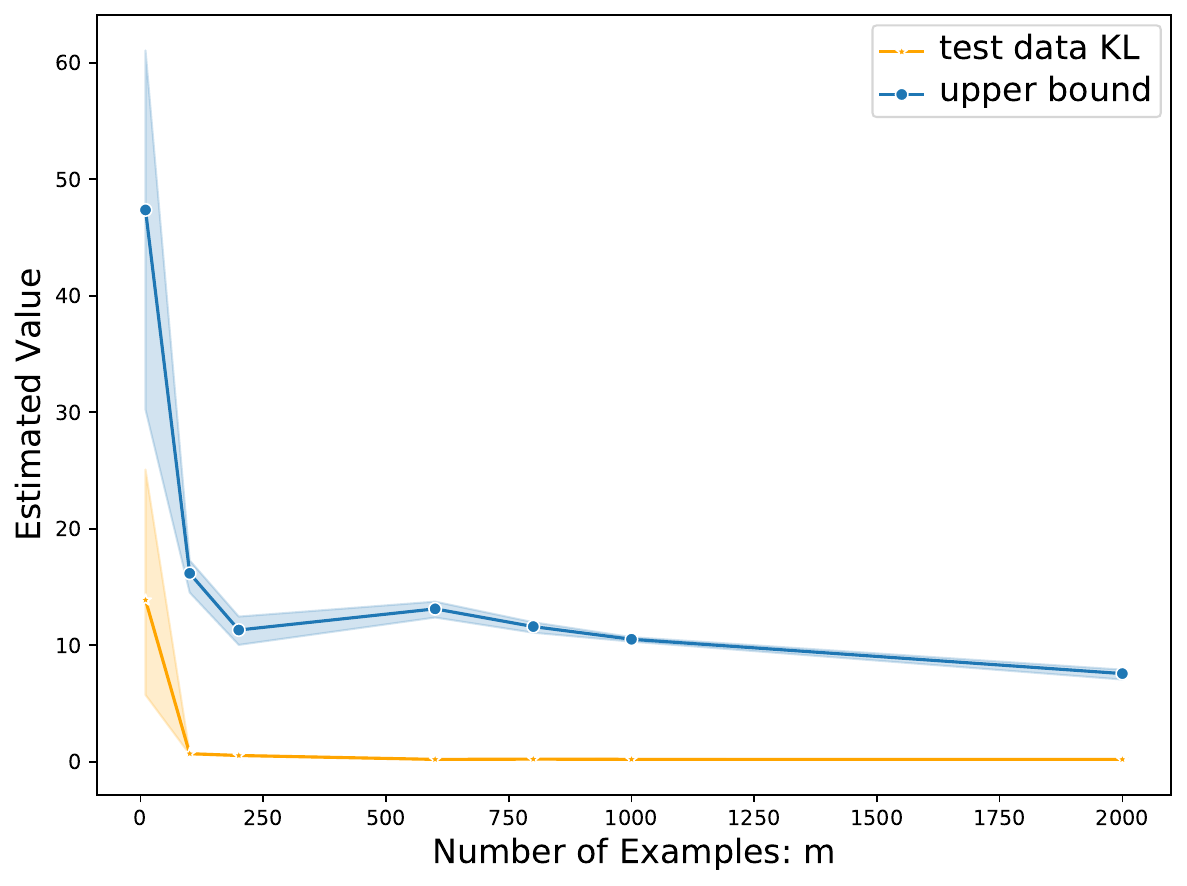} 
        \end{minipage}
    \end{minipage}%
     \begin{minipage}[t]{0.45\textwidth}
    \begin{minipage}[t]{0.1\textwidth} 
        \raggedright (b) 
    \end{minipage}%
    \begin{minipage}[t]{0.9\textwidth} 
        \includegraphics[width=\textwidth]{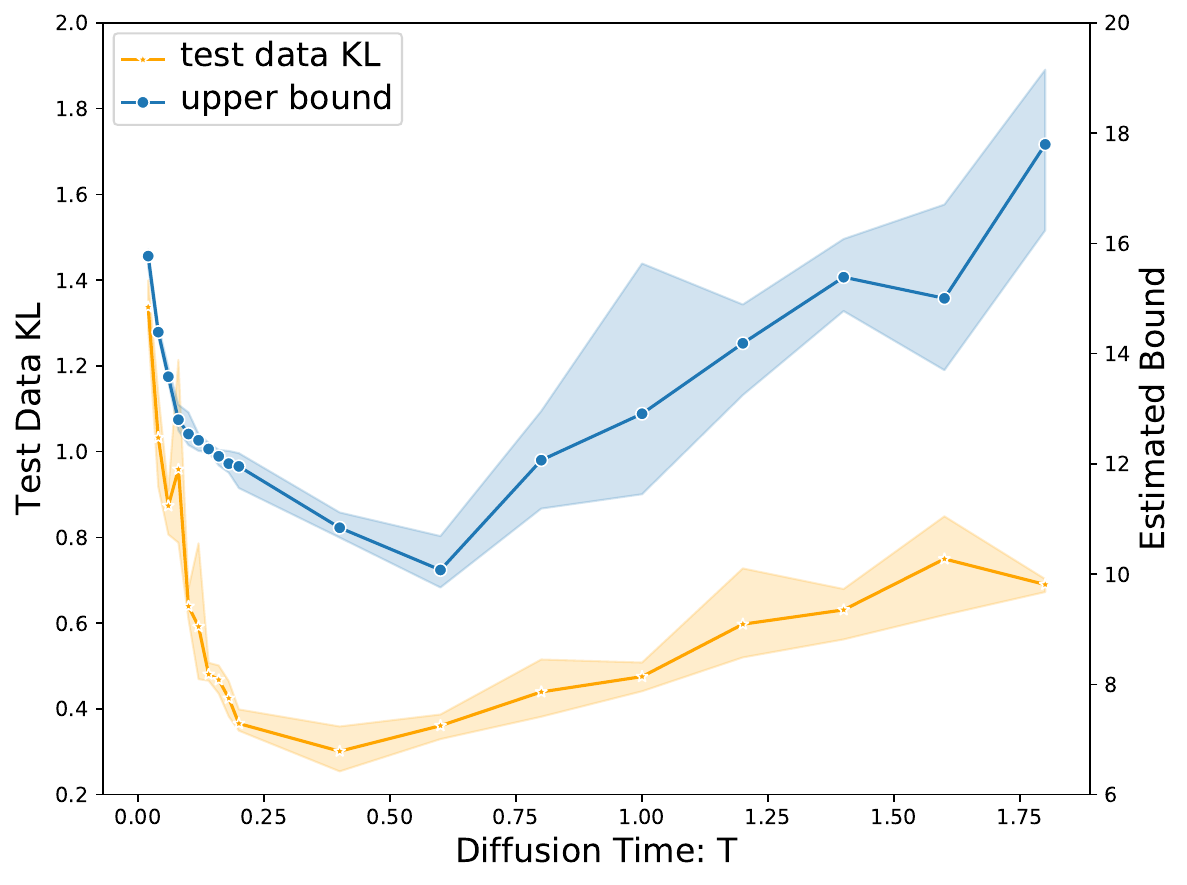} 
    \end{minipage}
    \end{minipage}%
    \caption{Evolution of bound and test-data KL-divergence estimation w.r.t (a) train dataset size $m$ when diffusion time is $T=1$, (b) diffusion time $T$ when the training dataset size is $m=200$.}
    \label{fig:bound}
\end{figure}
\vspace{-0.5cm}
\paragraph{Sample complexity.}  We can observe in Fig.~\ref{fig:bound}(a) that both the estimated test-data KL divergence and the upper bound decrease with the increase in train dataset size $m$, corresponding to the diminish of $T_3$ with order $\Ocal(1/\sqrt{m})$ as $m\rightarrow \infty$. However, they will not converge to zero, which is due to the diffusion time $T=1\neq\infty$ with non-zero $T_1, T_2$ and the optimization error of score matching loss. The quality of generated data for models trained on different sample size $m$ aligns with human perception, as presented in Fig.~\ref{fig:generated_data_m} in Appendix.
\paragraph{Trade-off on diffusion time.} In Fig.~\ref{fig:bound} (b), we can observe the trade-off on diffusion time $T$ on both the bound and the estimated KL divergence. This indicates the proposed bounds are non-vacuous and can capture the algorithm and data distribution well. In addition, the optimal diffusion time lies in the range of 0.4 to 0.6. We visualize the generated data for each diffusion time in Fig.~\ref{fig:generated_data_t} in the Appendix, and we can observe the generated data points that fit the test data best in human perception also take values at $T=0.4$ or $T=0.6$.

\subsection{Real Data}
We further estimate the bound and the test data KL divergence {(or log densities) by training DMs on MNIST and CIFAR10 datasets with few-shot data ($m=16$) and full train dataset}. {For the full data setting, in Fig.~\ref{fig:realdata_bound} (c),  we can observe the trade-off on both the estimated bound using training data and log-likelihood (in bit per dimension (BPD)) estimated on $10000$ test data points.} {For the few-shot scenario, }{we observe a trade-off between noise and duplicate (or entirely black/white) images in the generated data with respect to the diffusion time $T$ across both datasets, as shown} in Fig.~\ref{fig:realdata_bound} (a). In Fig.\ref{fig:realdata_bound} (b), the estimated bound can verify the diffusion time trade-off, where the optimal $T$ is around $0.8$ for both datasets, more detailed results are presented in Appendix (Fig.~\ref{fig:mnist} and Fig.~\ref{fig:cifar}). However, the test data KL divergence and log-likelihood do not reflect the trade-off. This is consistent with the conclusion in \citet{theis2015note} that it's challenging to obtain accurate estimation of the KL divergence and the BPD for high-dimensional data distribution with limited data. 

\begin{figure}[htbp]
    \centering
    \begin{minipage}[t]{0.6\textwidth}
        \begin{minipage}[t]{0.1\textwidth} 
            \raggedright (a) 
        \end{minipage}%
        \begin{minipage}[t]{0.9\textwidth} 
            \includegraphics[width=\textwidth]{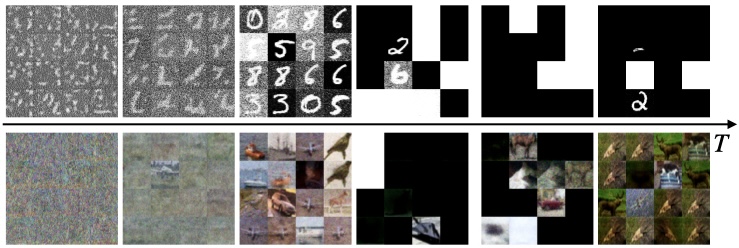} 
        \end{minipage}
    \end{minipage}%
    \quad
     \begin{minipage}[t]{0.45\textwidth}
      \hspace{-1cm}
    \begin{minipage}[t]{0.1\textwidth} 
        \raggedleft (b) 
    \end{minipage}%
    \begin{minipage}[t]{0.85\textwidth} 
        \includegraphics[height=\textwidth]{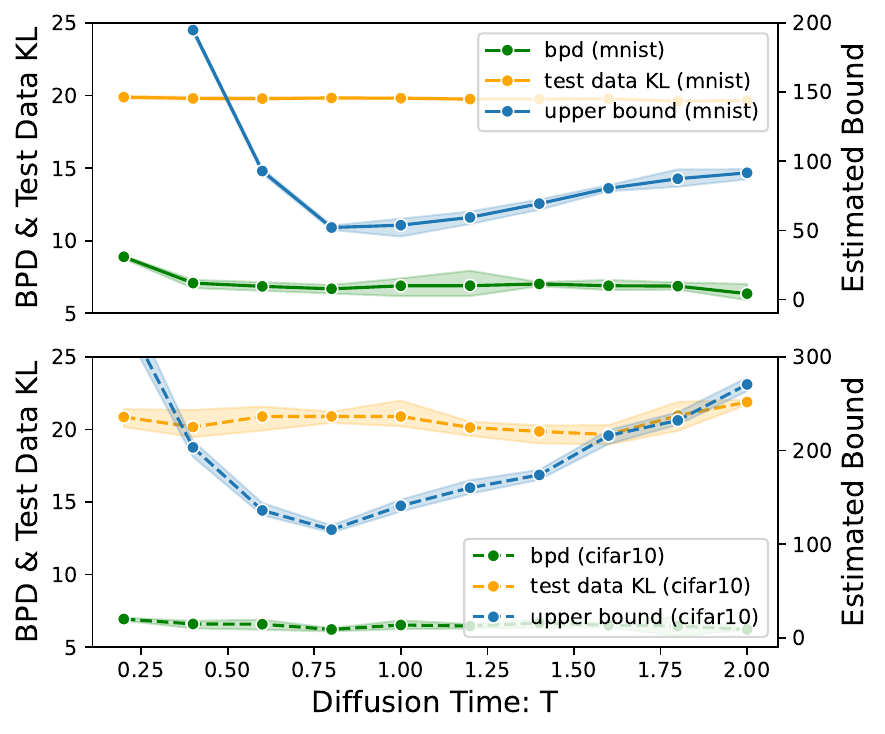} 
    \end{minipage}
     \end{minipage}%
     \begin{minipage}[t]{0.45\textwidth}
     \hspace{-0.4cm}
    \begin{minipage}[t]{0.1\textwidth} 
        \raggedleft (c) 
    \end{minipage}%
    \begin{minipage}[t]{0.85\textwidth} 
        \includegraphics[height=\textwidth]{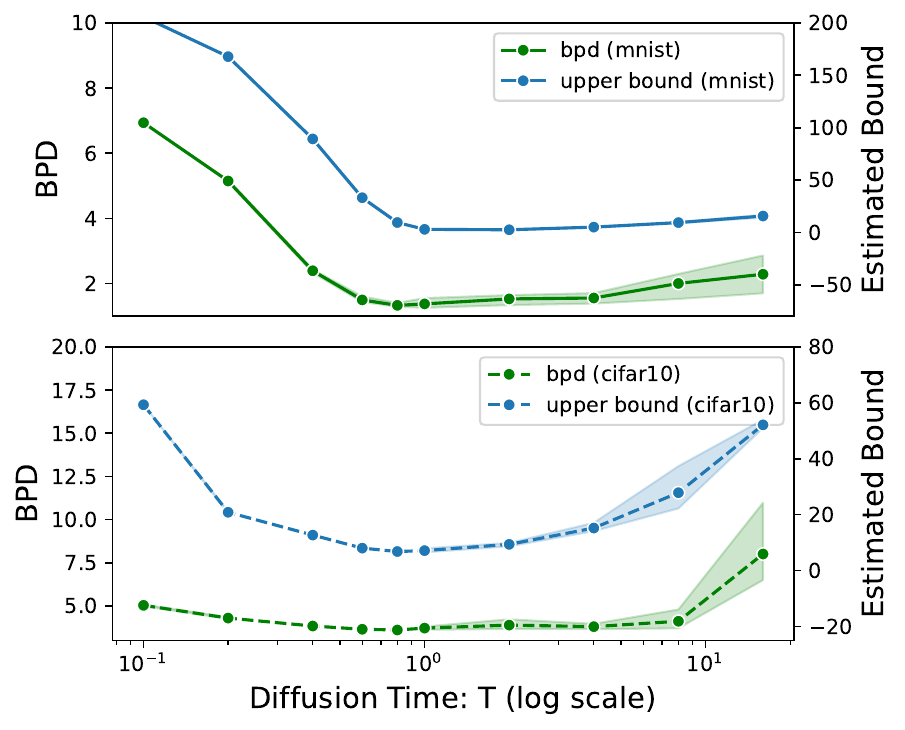} 
    \end{minipage}
\end{minipage}%
    \caption{(a) {For DM trained on few-shot MNIST and CIFAR10 data ($m=16$):} The trade-off on diffusion time reflects on the generated images with the growing of $T$ ; (b) The bounds estimated on train data, KL divergences, and log densities (bpd--bits per dimension) {estimated on $100$ test samples}. (c) The bound estimated on train data and log densities {estimated on $10000$ test samples for DM trained on full MNIST ($m=60000$) and CIFAR10 dataset ($m=50000$).}}
    \label{fig:realdata_bound}
\end{figure}

\section{Conclusion}
In this work, we provided a unified information-theoretic analysis for encoder-generator-type generative models, offering a better understanding of their generalization properties. Our results improved the analysis of VAEs, provided meaningful generalization bounds for DMs, and explicitly unveiled the trade-off on the choice of the diffusion time $T$. Empirical validation on both synthetic and real data verifies our theoretical results. For a discussion of limitations and broader impacts, see Appendix~\ref{sec:broader}.

\section*{Acknowledgement}
We appreciate constructive feedback from anonymous reviewers and meta-reviewers. Qi Chen is
supported by grant number DSI-PDFY3R1P11 from the Data Sciences Institute at the University of Toronto. Florian Shkurti is supported by the Canada First Research Excellence Fund (CFREF).



\bibliography{iclr2025_conference}

\begin{thebibliography}{94}
\providecommand{\natexlab}[1]{#1}
\providecommand{\url}[1]{\texttt{#1}}
\expandafter\ifx\csname urlstyle\endcsname\relax
  \providecommand{\doi}[1]{doi: #1}\else
  \providecommand{\doi}{doi: \begingroup \urlstyle{rm}\Url}\fi

\bibitem[Alquier et~al.(2024)]{alquier2024user}
Pierre Alquier et~al.
\newblock User-friendly introduction to pac-bayes bounds.
\newblock \emph{Foundations and Trends{\textregistered} in Machine Learning}, 17\penalty0 (2):\penalty0 174--303, 2024.

\bibitem[Anderson(1982)]{anderson1982reverse}
Brian~DO Anderson.
\newblock Reverse-time diffusion equation models.
\newblock \emph{Stochastic Processes and their Applications}, 12\penalty0 (3):\penalty0 313--326, 1982.

\bibitem[Arjovsky et~al.(2017)Arjovsky, Chintala, and Bottou]{arjovsky2017wasserstein}
Martin Arjovsky, Soumith Chintala, and L{\'e}on Bottou.
\newblock Wasserstein generative adversarial networks.
\newblock In \emph{International conference on machine learning}, pp.\  214--223. PMLR, 2017.

\bibitem[Arora et~al.(2017)Arora, Ge, Liang, Ma, and Zhang]{arora2017generalization}
Sanjeev Arora, Rong Ge, Yingyu Liang, Tengyu Ma, and Yi~Zhang.
\newblock Generalization and equilibrium in generative adversarial nets (gans).
\newblock In \emph{International conference on machine learning}, pp.\  224--232. PMLR, 2017.

\bibitem[Bae et~al.(2022)Bae, Zhang, Ruan, Wang, Hasegawa, Ba, and Grosse]{bae2022multi}
Juhan Bae, Michael~R Zhang, Michael Ruan, Eric Wang, So~Hasegawa, Jimmy Ba, and Roger Grosse.
\newblock Multi-rate vae: Train once, get the full rate-distortion curve.
\newblock \emph{arXiv preprint arXiv:2212.03905}, 2022.

\bibitem[Bansal et~al.(2024)Bansal, Borgnia, Chu, Li, Kazemi, Huang, Goldblum, Geiping, and Goldstein]{bansal2024cold}
Arpit Bansal, Eitan Borgnia, Hong-Min Chu, Jie Li, Hamid Kazemi, Furong Huang, Micah Goldblum, Jonas Geiping, and Tom Goldstein.
\newblock Cold diffusion: Inverting arbitrary image transforms without noise.
\newblock \emph{Advances in Neural Information Processing Systems}, 36, 2024.

\bibitem[Bartlett \& Mendelson(2002)Bartlett and Mendelson]{bartlett2002rademacher}
Peter~L Bartlett and Shahar Mendelson.
\newblock Rademacher and gaussian complexities: Risk bounds and structural results.
\newblock \emph{Journal of Machine Learning Research}, 3\penalty0 (Nov):\penalty0 463--482, 2002.

\bibitem[Biau et~al.(2021)Biau, Sangnier, and Tanielian]{biau2021some}
Gerard Biau, Maxime Sangnier, and Ugo Tanielian.
\newblock Some theoretical insights into wasserstein gans.
\newblock \emph{Journal of Machine Learning Research}, 22\penalty0 (119):\penalty0 1--45, 2021.

\bibitem[Blau \& Michaeli(2019)Blau and Michaeli]{blau2019rethinking}
Yochai Blau and Tomer Michaeli.
\newblock Rethinking lossy compression: The rate-distortion-perception tradeoff.
\newblock In \emph{International Conference on Machine Learning}, pp.\  675--685. PMLR, 2019.

\bibitem[Boucheron et~al.(2013)Boucheron, Lugosi, and Massart]{boucheron2013concentration}
St{\'e}phane Boucheron, G{\'a}bor Lugosi, and Pascal Massart.
\newblock \emph{Concentration inequalities: A nonasymptotic theory of independence}.
\newblock Oxford university press, 2013.

\bibitem[Bousquet \& Elisseeff(2002)Bousquet and Elisseeff]{bousquet2002stability}
Olivier Bousquet and Andr{\'e} Elisseeff.
\newblock Stability and generalization.
\newblock \emph{The Journal of Machine Learning Research}, 2:\penalty0 499--526, 2002.

\bibitem[Bozkurt et~al.(2019)Bozkurt, Esmaeili, Tristan, Brooks, Dy, and van~de Meent]{bozkurt2019rate}
Alican Bozkurt, Babak Esmaeili, Jean-Baptiste Tristan, Dana~H Brooks, Jennifer~G Dy, and Jan-Willem van~de Meent.
\newblock Rate-regularization and generalization in vaes.
\newblock \emph{arXiv preprint arXiv:1911.04594}, 2019.

\bibitem[Brown et~al.(2020)Brown, Mann, Ryder, Subbiah, Kaplan, Dhariwal, Neelakantan, Shyam, Sastry, Askell, et~al.]{brown2020language}
Tom Brown, Benjamin Mann, Nick Ryder, Melanie Subbiah, Jared~D Kaplan, Prafulla Dhariwal, Arvind Neelakantan, Pranav Shyam, Girish Sastry, Amanda Askell, et~al.
\newblock Language models are few-shot learners.
\newblock \emph{Advances in neural information processing systems}, 33:\penalty0 1877--1901, 2020.

\bibitem[Bu et~al.(2020)Bu, Zou, and Veeravalli]{bu2020tightening}
Yuheng Bu, Shaofeng Zou, and Venugopal~V Veeravalli.
\newblock Tightening mutual information-based bounds on generalization error.
\newblock \emph{IEEE Journal on Selected Areas in Information Theory}, 1\penalty0 (1):\penalty0 121--130, 2020.

\bibitem[Carlini et~al.(2022)Carlini, Ippolito, Jagielski, Lee, Tramer, and Zhang]{carlini2022quantifying}
Nicholas Carlini, Daphne Ippolito, Matthew Jagielski, Katherine Lee, Florian Tramer, and Chiyuan Zhang.
\newblock Quantifying memorization across neural language models.
\newblock \emph{arXiv preprint arXiv:2202.07646}, 2022.

\bibitem[Carlini et~al.(2023)Carlini, Hayes, Nasr, Jagielski, Sehwag, Tramer, Balle, Ippolito, and Wallace]{carlini2023extracting}
Nicolas Carlini, Jamie Hayes, Milad Nasr, Matthew Jagielski, Vikash Sehwag, Florian Tramer, Borja Balle, Daphne Ippolito, and Eric Wallace.
\newblock Extracting training data from diffusion models.
\newblock In \emph{32nd USENIX Security Symposium (USENIX Security 23)}, pp.\  5253--5270, 2023.

\bibitem[Chen et~al.(2023{\natexlab{a}})Chen, Lee, and Lu]{chen2023improved}
Hongrui Chen, Holden Lee, and Jianfeng Lu.
\newblock Improved analysis of score-based generative modeling: User-friendly bounds under minimal smoothness assumptions.
\newblock In \emph{International Conference on Machine Learning}, pp.\  4735--4763. PMLR, 2023{\natexlab{a}}.

\bibitem[Chen \& Marchand(2023)Chen and Marchand]{chen2023algorithm}
Qi~Chen and Mario Marchand.
\newblock Algorithm-dependent bounds for representation learning of multi-source domain adaptation.
\newblock In \emph{International Conference on Artificial Intelligence and Statistics}, pp.\  10368--10394. PMLR, 2023.

\bibitem[Chen et~al.(2021)Chen, Shui, and Marchand]{chen2021generalization}
Qi~Chen, Changjian Shui, and Mario Marchand.
\newblock Generalization bounds for meta-learning: An information-theoretic analysis.
\newblock \emph{Advances in Neural Information Processing Systems}, 34:\penalty0 25878--25890, 2021.

\bibitem[Chen et~al.(2023{\natexlab{b}})Chen, Shui, Han, and Marchand]{chen2023stability}
Qi~Chen, Changjian Shui, Ligong Han, and Mario Marchand.
\newblock On the stability-plasticity dilemma in continual meta-learning: Theory and algorithm.
\newblock \emph{Advances in Neural Information Processing Systems}, 36:\penalty0 27414--27468, 2023{\natexlab{b}}.

\bibitem[Chen et~al.(2022)Chen, Chewi, Li, Li, Salim, and Zhang]{chen2022sampling}
Sitan Chen, Sinho Chewi, Jerry Li, Yuanzhi Li, Adil Salim, and Anru~R Zhang.
\newblock Sampling is as easy as learning the score: theory for diffusion models with minimal data assumptions.
\newblock \emph{arXiv preprint arXiv:2209.11215}, 2022.

\bibitem[Ch{\'e}rief-Abdellatif et~al.(2022)Ch{\'e}rief-Abdellatif, Shi, Doucet, and Guedj]{cherief2022pac}
Badr-Eddine Ch{\'e}rief-Abdellatif, Yuyang Shi, Arnaud Doucet, and Benjamin Guedj.
\newblock On pac-bayesian reconstruction guarantees for vaes.
\newblock In \emph{International conference on artificial intelligence and statistics}, pp.\  3066--3079. PMLR, 2022.

\bibitem[De~Bortoli(2022)]{de2022convergence}
Valentin De~Bortoli.
\newblock Convergence of denoising diffusion models under the manifold hypothesis.
\newblock \emph{arXiv preprint arXiv:2208.05314}, 2022.

\bibitem[De~Bortoli et~al.(2021)De~Bortoli, Thornton, Heng, and Doucet]{de2021diffusion}
Valentin De~Bortoli, James Thornton, Jeremy Heng, and Arnaud Doucet.
\newblock Diffusion schr{\"o}dinger bridge with applications to score-based generative modeling.
\newblock \emph{Advances in Neural Information Processing Systems}, 34:\penalty0 17695--17709, 2021.

\bibitem[Duchi(2016)]{duchi2016lecture}
John Duchi.
\newblock Lecture notes for statistics 311/electrical engineering 377.
\newblock \emph{URL: https://stanford. edu/class/stats311/Lectures/full\_notes. pdf. Last visited on}, 2:\penalty0 23, 2016.

\bibitem[Feng et~al.(2021)Feng, Guo, Benitez-Quiroz, and Martinez]{feng2021gans}
Qianli Feng, Chenqi Guo, Fabian Benitez-Quiroz, and Aleix~M Martinez.
\newblock When do gans replicate? on the choice of dataset size.
\newblock In \emph{Proceedings of the IEEE/CVF International Conference on Computer Vision}, pp.\  6701--6710, 2021.

\bibitem[Franzese et~al.(2023)Franzese, Rossi, Yang, Finamore, Rossi, Filippone, and Michiardi]{franzese2023much}
Giulio Franzese, Simone Rossi, Lixuan Yang, Alessandro Finamore, Dario Rossi, Maurizio Filippone, and Pietro Michiardi.
\newblock How much is enough? a study on diffusion times in score-based generative models.
\newblock \emph{Entropy}, 25\penalty0 (4):\penalty0 633, 2023.

\bibitem[Goodfellow et~al.(2014)Goodfellow, Pouget-Abadie, Mirza, Xu, Warde-Farley, Ozair, Courville, and Bengio]{goodfellow2014generative}
Ian Goodfellow, Jean Pouget-Abadie, Mehdi Mirza, Bing Xu, David Warde-Farley, Sherjil Ozair, Aaron Courville, and Yoshua Bengio.
\newblock Generative adversarial nets.
\newblock \emph{Advances in neural information processing systems}, 27, 2014.

\bibitem[Haghifam et~al.(2021)Haghifam, Dziugaite, Moran, and Roy]{haghifam2021towards}
Mahdi Haghifam, Gintare~Karolina Dziugaite, Shay Moran, and Dan Roy.
\newblock Towards a unified information-theoretic framework for generalization.
\newblock \emph{Advances in Neural Information Processing Systems}, 34:\penalty0 26370--26381, 2021.

\bibitem[Haghifam et~al.(2022)Haghifam, Moran, Roy, and Dziugiate]{haghifam2022understanding}
Mahdi Haghifam, Shay Moran, Daniel~M Roy, and Gintare~Karolina Dziugiate.
\newblock Understanding generalization via leave-one-out conditional mutual information.
\newblock In \emph{2022 IEEE International Symposium on Information Theory (ISIT)}, pp.\  2487--2492. IEEE, 2022.

\bibitem[Han et~al.(2024)Han, Wen, Chen, Zhang, Song, Ren, Gao, Stathopoulos, He, Chen, et~al.]{han2024proxedit}
Ligong Han, Song Wen, Qi~Chen, Zhixing Zhang, Kunpeng Song, Mengwei Ren, Ruijiang Gao, Anastasis Stathopoulos, Xiaoxiao He, Yuxiao Chen, et~al.
\newblock Proxedit: Improving tuning-free real image editing with proximal guidance.
\newblock In \emph{Proceedings of the IEEE/CVF Winter Conference on Applications of Computer Vision}, pp.\  4291--4301, 2024.

\bibitem[Haussmann \& Pardoux(1986)Haussmann and Pardoux]{haussmann1986time}
Ulrich~G Haussmann and Etienne Pardoux.
\newblock Time reversal of diffusions.
\newblock \emph{The Annals of Probability}, pp.\  1188--1205, 1986.

\bibitem[Hellstr{\"o}m \& Durisi(2022)Hellstr{\"o}m and Durisi]{hellstrom2022new}
Fredrik Hellstr{\"o}m and Giuseppe Durisi.
\newblock A new family of generalization bounds using samplewise evaluated cmi.
\newblock \emph{Advances in Neural Information Processing Systems}, 35:\penalty0 10108--10121, 2022.

\bibitem[Higgins et~al.(2017)Higgins, Matthey, Pal, Burgess, Glorot, Botvinick, Mohamed, and Lerchner]{higgins2017beta}
Irina Higgins, Loic Matthey, Arka Pal, Christopher~P Burgess, Xavier Glorot, Matthew~M Botvinick, Shakir Mohamed, and Alexander Lerchner.
\newblock beta-vae: Learning basic visual concepts with a constrained variational framework.
\newblock \emph{ICLR (Poster)}, 3, 2017.

\bibitem[Ho et~al.(2020)Ho, Jain, and Abbeel]{ho2020denoising}
Jonathan Ho, Ajay Jain, and Pieter Abbeel.
\newblock Denoising diffusion probabilistic models.
\newblock \emph{Advances in neural information processing systems}, 33:\penalty0 6840--6851, 2020.

\bibitem[Hoogeboom et~al.(2022)Hoogeboom, Satorras, Vignac, and Welling]{hoogeboom2022equivariant}
Emiel Hoogeboom, V{\i}ctor~Garcia Satorras, Cl{\'e}ment Vignac, and Max Welling.
\newblock Equivariant diffusion for molecule generation in 3d.
\newblock In \emph{International conference on machine learning}, pp.\  8867--8887. PMLR, 2022.

\bibitem[Huang et~al.(2021)Huang, Lim, and Courville]{huang2021variational}
Chin-Wei Huang, Jae~Hyun Lim, and Aaron~C Courville.
\newblock A variational perspective on diffusion-based generative models and score matching.
\newblock \emph{Advances in Neural Information Processing Systems}, 34:\penalty0 22863--22876, 2021.

\bibitem[Huang et~al.(2020)Huang, Makhzani, Cao, and Grosse]{huang2020evaluating}
Sicong Huang, Alireza Makhzani, Yanshuai Cao, and Roger Grosse.
\newblock Evaluating lossy compression rates of deep generative models.
\newblock In \emph{International Conference on Machine Learning}, pp.\  4444--4454. PMLR, 2020.

\bibitem[Huberman-Spiegelglas et~al.(2024)Huberman-Spiegelglas, Kulikov, and Michaeli]{huberman2024edit}
Inbar Huberman-Spiegelglas, Vladimir Kulikov, and Tomer Michaeli.
\newblock An edit friendly ddpm noise space: Inversion and manipulations.
\newblock In \emph{Proceedings of the IEEE/CVF Conference on Computer Vision and Pattern Recognition}, pp.\  12469--12478, 2024.

\bibitem[Husain et~al.(2019)Husain, Nock, and Williamson]{husain2019adversarial}
Hisham Husain, Richard Nock, and Robert~C Williamson.
\newblock Adversarial networks and autoencoders: The primal-dual relationship and generalization bounds.
\newblock \emph{arXiv preprint arXiv:1902.00985}, 2019.

\bibitem[Ichikawa \& Hukushima(2023)Ichikawa and Hukushima]{ichikawa2023dataset}
Yuma Ichikawa and Koji Hukushima.
\newblock Dataset size dependence of rate-distortion curve and threshold of posterior collapse in linear vae.
\newblock \emph{arXiv preprint arXiv:2309.07663}, 2023.

\bibitem[Ichikawa \& Hukushima(2024)Ichikawa and Hukushima]{ichikawa2024learning}
Yuma Ichikawa and Koji Hukushima.
\newblock Learning dynamics in linear vae: Posterior collapse threshold, superfluous latent space pitfalls, and speedup with kl annealing.
\newblock In \emph{International Conference on Artificial Intelligence and Statistics}, pp.\  1936--1944. PMLR, 2024.

\bibitem[Ji et~al.(2021)Ji, Zhou, and Liang]{ji2021understanding}
Kaiyi Ji, Yi~Zhou, and Yingbin Liang.
\newblock Understanding estimation and generalization error of generative adversarial networks.
\newblock \emph{IEEE Transactions on Information Theory}, 67\penalty0 (5):\penalty0 3114--3129, 2021.

\bibitem[Karatzas \& Shreve(2014)Karatzas and Shreve]{karatzas2014brownian}
Ioannis Karatzas and Steven Shreve.
\newblock \emph{Brownian motion and stochastic calculus}, volume 113.
\newblock springer, 2014.

\bibitem[Kingma et~al.(2021)Kingma, Salimans, Poole, and Ho]{kingma2021variational}
Diederik Kingma, Tim Salimans, Ben Poole, and Jonathan Ho.
\newblock Variational diffusion models.
\newblock \emph{Advances in neural information processing systems}, 34:\penalty0 21696--21707, 2021.

\bibitem[Kingma \& Welling(2013)Kingma and Welling]{kingma2013auto}
Diederik~P Kingma and Max Welling.
\newblock Auto-encoding variational bayes.
\newblock \emph{arXiv preprint arXiv:1312.6114}, 2013.

\bibitem[Kingma et~al.(2019)Kingma, Welling, et~al.]{kingma2019introduction}
Diederik~P Kingma, Max Welling, et~al.
\newblock An introduction to variational autoencoders.
\newblock \emph{Foundations and Trends{\textregistered} in Machine Learning}, 12\penalty0 (4):\penalty0 307--392, 2019.

\bibitem[Larochelle \& Murray(2011)Larochelle and Murray]{larochelle2011neural}
Hugo Larochelle and Iain Murray.
\newblock The neural autoregressive distribution estimator.
\newblock In \emph{Proceedings of the fourteenth international conference on artificial intelligence and statistics}, pp.\  29--37. JMLR Workshop and Conference Proceedings, 2011.

\bibitem[Lee et~al.(2023)Lee, Lu, and Tan]{lee2023convergence}
Holden Lee, Jianfeng Lu, and Yixin Tan.
\newblock Convergence of score-based generative modeling for general data distributions.
\newblock In \emph{International Conference on Algorithmic Learning Theory}, pp.\  946--985. PMLR, 2023.

\bibitem[Li et~al.(2019)Li, Luo, and Qiao]{li2019generalization}
Jian Li, Xuanyuan Luo, and Mingda Qiao.
\newblock On generalization error bounds of noisy gradient methods for non-convex learning.
\newblock \emph{arXiv preprint arXiv:1902.00621}, 2019.

\bibitem[Li et~al.(2024)Li, Li, Zhang, and Bian]{li2024generalization}
Puheng Li, Zhong Li, Huishuai Zhang, and Jiang Bian.
\newblock On the generalization properties of diffusion models.
\newblock \emph{Advances in Neural Information Processing Systems}, 36, 2024.

\bibitem[Makhzani et~al.(2015)Makhzani, Shlens, Jaitly, Goodfellow, and Frey]{makhzani2015adversarial}
Alireza Makhzani, Jonathon Shlens, Navdeep Jaitly, Ian Goodfellow, and Brendan Frey.
\newblock Adversarial autoencoders.
\newblock \emph{arXiv preprint arXiv:1511.05644}, 2015.

\bibitem[Mbacke et~al.(2023)Mbacke, Clerc, and Germain]{mbacke2023pac}
Sokhna~Diarra Mbacke, Florence Clerc, and Pascal Germain.
\newblock Pac-bayesian generalization bounds for adversarial generative models.
\newblock In \emph{International Conference on Machine Learning}, pp.\  24271--24290. PMLR, 2023.

\bibitem[Mbacke et~al.(2024)Mbacke, Clerc, and Germain]{mbacke2024statistical}
Sokhna~Diarra Mbacke, Florence Clerc, and Pascal Germain.
\newblock Statistical guarantees for variational autoencoders using pac-bayesian theory.
\newblock \emph{Advances in Neural Information Processing Systems}, 36, 2024.

\bibitem[McAllester(1998)]{mcallester1998some}
David~A McAllester.
\newblock Some pac-bayesian theorems.
\newblock In \emph{Proceedings of the eleventh annual conference on Computational learning theory}, pp.\  230--234, 1998.

\bibitem[Meehan et~al.(2020)Meehan, Chaudhuri, and Dasgupta]{meehan2020non}
Casey Meehan, Kamalika Chaudhuri, and Sanjoy Dasgupta.
\newblock A non-parametric test to detect data-copying in generative models.
\newblock In \emph{International Conference on Artificial Intelligence and Statistics}, 2020.

\bibitem[Negrea et~al.(2019)Negrea, Haghifam, Dziugaite, Khisti, and Roy]{negrea2019information}
Jeffrey Negrea, Mahdi Haghifam, Gintare~Karolina Dziugaite, Ashish Khisti, and Daniel~M Roy.
\newblock Information-theoretic generalization bounds for sgld via data-dependent estimates.
\newblock In \emph{Advances in Neural Information Processing Systems}, pp.\  11015--11025, 2019.

\bibitem[Nowozin et~al.(2016)Nowozin, Cseke, and Tomioka]{nowozin2016f}
Sebastian Nowozin, Botond Cseke, and Ryota Tomioka.
\newblock f-gan: Training generative neural samplers using variational divergence minimization.
\newblock \emph{Advances in neural information processing systems}, 29, 2016.

\bibitem[Oksendal(2013)]{oksendal2013stochastic}
Bernt Oksendal.
\newblock \emph{Stochastic differential equations: an introduction with applications}.
\newblock Springer Science \& Business Media, 2013.

\bibitem[Park et~al.(2019)Park, Kim, and Kim]{park2019variational}
Yookoon Park, Chris Kim, and Gunhee Kim.
\newblock Variational laplace autoencoders.
\newblock In \emph{International conference on machine learning}, pp.\  5032--5041. PMLR, 2019.

\bibitem[Peebles \& Xie(2023)Peebles and Xie]{peebles2023scalable}
William Peebles and Saining Xie.
\newblock Scalable diffusion models with transformers.
\newblock In \emph{Proceedings of the IEEE/CVF International Conference on Computer Vision}, pp.\  4195--4205, 2023.

\bibitem[Pensia et~al.(2018)Pensia, Jog, and Loh]{pensia2018generalization}
Ankit Pensia, Varun Jog, and Po-Ling Loh.
\newblock Generalization error bounds for noisy, iterative algorithms.
\newblock In \emph{2018 IEEE International Symposium on Information Theory (ISIT)}, pp.\  546--550. IEEE, 2018.

\bibitem[Pidstrigach(2022)]{pidstrigach2022score}
Jakiw Pidstrigach.
\newblock Score-based generative models detect manifolds.
\newblock \emph{Advances in Neural Information Processing Systems}, 35:\penalty0 35852--35865, 2022.

\bibitem[Refinetti \& Goldt(2022)Refinetti and Goldt]{refinetti2022dynamics}
Maria Refinetti and Sebastian Goldt.
\newblock The dynamics of representation learning in shallow, non-linear autoencoders.
\newblock In \emph{International Conference on Machine Learning}, pp.\  18499--18519. PMLR, 2022.

\bibitem[Rombach et~al.(2022)Rombach, Blattmann, Lorenz, Esser, and Ommer]{rombach2022high}
Robin Rombach, Andreas Blattmann, Dominik Lorenz, Patrick Esser, and Bj{\"o}rn Ommer.
\newblock High-resolution image synthesis with latent diffusion models.
\newblock In \emph{Proceedings of the IEEE/CVF conference on computer vision and pattern recognition}, pp.\  10684--10695, 2022.

\bibitem[Russo \& Zou(2019)Russo and Zou]{russo2019much}
Daniel Russo and James Zou.
\newblock How much does your data exploration overfit? controlling bias via information usage.
\newblock \emph{IEEE Transactions on Information Theory}, 66\penalty0 (1):\penalty0 302--323, 2019.

\bibitem[Rybkin et~al.(2021)Rybkin, Daniilidis, and Levine]{rybkin2021simple}
Oleh Rybkin, Kostas Daniilidis, and Sergey Levine.
\newblock Simple and effective vae training with calibrated decoders.
\newblock In \emph{International conference on machine learning}, pp.\  9179--9189. PMLR, 2021.

\bibitem[S{\"a}rkk{\"a} \& Solin(2019)S{\"a}rkk{\"a} and Solin]{sarkka2019applied}
Simo S{\"a}rkk{\"a} and Arno Solin.
\newblock \emph{Applied stochastic differential equations}, volume~10.
\newblock Cambridge University Press, 2019.

\bibitem[Shalev-Shwartz \& Ben-David(2014)Shalev-Shwartz and Ben-David]{shalev2014understanding}
Shai Shalev-Shwartz and Shai Ben-David.
\newblock \emph{Understanding machine learning: From theory to algorithms}.
\newblock Cambridge university press, 2014.

\bibitem[Shawe-Taylor \& Williamson(1997)Shawe-Taylor and Williamson]{shawe1997pac}
John Shawe-Taylor and Robert~C Williamson.
\newblock A pac analysis of a bayesian estimator.
\newblock In \emph{Proceedings of the tenth annual conference on Computational learning theory}, pp.\  2--9, 1997.

\bibitem[Shekhovtsov et~al.(2021)Shekhovtsov, Schlesinger, and Flach]{shekhovtsov2021vae}
Alexander Shekhovtsov, Dmitrij Schlesinger, and Boris Flach.
\newblock Vae approximation error: Elbo and exponential families.
\newblock \emph{arXiv preprint arXiv:2102.09310}, 2021.

\bibitem[Shi et~al.(2020)Shi, Zhou, Miao, and Li]{shi2020dispersed}
Wenxian Shi, Hao Zhou, Ning Miao, and Lei Li.
\newblock Dispersed exponential family mixture vaes for interpretable text generation.
\newblock In \emph{International Conference on Machine Learning}, pp.\  8840--8851. PMLR, 2020.

\bibitem[Shui et~al.(2022)Shui, Chen, Li, Wang, and Gagn{\'e}]{shui2022fair}
Changjian Shui, Qi~Chen, Jiaqi Li, Boyu Wang, and Christian Gagn{\'e}.
\newblock Fair representation learning through implicit path alignment.
\newblock In \emph{International Conference on Machine Learning}, pp.\  20156--20175. PMLR, 2022.

\bibitem[Sohl-Dickstein et~al.(2015)Sohl-Dickstein, Weiss, Maheswaranathan, and Ganguli]{sohl2015deep}
Jascha Sohl-Dickstein, Eric Weiss, Niru Maheswaranathan, and Surya Ganguli.
\newblock Deep unsupervised learning using nonequilibrium thermodynamics.
\newblock In \emph{International conference on machine learning}, pp.\  2256--2265. PMLR, 2015.

\bibitem[Song et~al.(2020{\natexlab{a}})Song, Meng, and Ermon]{song2020denoising}
Jiaming Song, Chenlin Meng, and Stefano Ermon.
\newblock Denoising diffusion implicit models.
\newblock \emph{arXiv preprint arXiv:2010.02502}, 2020{\natexlab{a}}.

\bibitem[Song \& Ermon(2019)Song and Ermon]{song2019generative}
Yang Song and Stefano Ermon.
\newblock Generative modeling by estimating gradients of the data distribution.
\newblock \emph{Advances in neural information processing systems}, 32, 2019.

\bibitem[Song et~al.(2020{\natexlab{b}})Song, Sohl-Dickstein, Kingma, Kumar, Ermon, and Poole]{song2020score}
Yang Song, Jascha Sohl-Dickstein, Diederik~P Kingma, Abhishek Kumar, Stefano Ermon, and Ben Poole.
\newblock Score-based generative modeling through stochastic differential equations.
\newblock \emph{arXiv preprint arXiv:2011.13456}, 2020{\natexlab{b}}.

\bibitem[Song et~al.(2021)Song, Durkan, Murray, and Ermon]{song2021maximum}
Yang Song, Conor Durkan, Iain Murray, and Stefano Ermon.
\newblock Maximum likelihood training of score-based diffusion models.
\newblock \emph{Advances in neural information processing systems}, 34:\penalty0 1415--1428, 2021.

\bibitem[Steinke \& Zakynthinou(2020)Steinke and Zakynthinou]{steinke2020reasoning}
Thomas Steinke and Lydia Zakynthinou.
\newblock Reasoning about generalization via conditional mutual information.
\newblock In \emph{Conference on Learning Theory}, pp.\  3437--3452. PMLR, 2020.

\bibitem[Theis et~al.(2015)Theis, Oord, and Bethge]{theis2015note}
Lucas Theis, A{\"a}ron van~den Oord, and Matthias Bethge.
\newblock A note on the evaluation of generative models.
\newblock \emph{arXiv preprint arXiv:1511.01844}, 2015.

\bibitem[Tolstikhin et~al.(2017)Tolstikhin, Bousquet, Gelly, and Schoelkopf]{tolstikhin2017wasserstein}
Ilya Tolstikhin, Olivier Bousquet, Sylvain Gelly, and Bernhard Schoelkopf.
\newblock Wasserstein auto-encoders.
\newblock \emph{arXiv preprint arXiv:1711.01558}, 2017.

\bibitem[Tzen \& Raginsky(2019)Tzen and Raginsky]{tzen2019neural}
Belinda Tzen and Maxim Raginsky.
\newblock Neural stochastic differential equations: Deep latent gaussian models in the diffusion limit.
\newblock \emph{arXiv preprint arXiv:1905.09883}, 2019.

\bibitem[Vahdat et~al.(2021)Vahdat, Kreis, and Kautz]{vahdat2021score}
Arash Vahdat, Karsten Kreis, and Jan Kautz.
\newblock Score-based generative modeling in latent space.
\newblock \emph{Advances in neural information processing systems}, 34:\penalty0 11287--11302, 2021.

\bibitem[{Van den Burg} \& Williams(2021){Van den Burg} and Williams]{van2021memorization}
G.~J.~J. {Van den Burg} and C.~K.~I. Williams.
\newblock On memorization in probabilistic deep generative models.
\newblock In \emph{Advances in Neural Information Processing Systems}, volume~34, 2021.

\bibitem[Van~den Oord et~al.(2016)Van~den Oord, Kalchbrenner, Espeholt, Vinyals, Graves, et~al.]{van2016conditional}
Aaron Van~den Oord, Nal Kalchbrenner, Lasse Espeholt, Oriol Vinyals, Alex Graves, et~al.
\newblock Conditional image generation with pixelcnn decoders.
\newblock \emph{Advances in neural information processing systems}, 29, 2016.

\bibitem[Vapnik et~al.(1994)Vapnik, Levin, and Le~Cun]{vapnik1994measuring}
Vladimir Vapnik, Esther Levin, and Yann Le~Cun.
\newblock Measuring the vc-dimension of a learning machine.
\newblock \emph{Neural computation}, 6\penalty0 (5):\penalty0 851--876, 1994.

\bibitem[Verd{\'u}(2019)]{verdu2019empirical}
Sergio Verd{\'u}.
\newblock Empirical estimation of information measures: A literature guide.
\newblock \emph{Entropy}, 21\penalty0 (8):\penalty0 720, 2019.

\bibitem[Vincent(2011)]{vincent2011connection}
Pascal Vincent.
\newblock A connection between score matching and denoising autoencoders.
\newblock \emph{Neural computation}, 23\penalty0 (7):\penalty0 1661--1674, 2011.

\bibitem[Wang \& Mao(2023)Wang and Mao]{wang2023tighter}
Ziqiao Wang and Yongyi Mao.
\newblock Tighter information-theoretic generalization bounds from supersamples.
\newblock \emph{arXiv preprint arXiv:2302.02432}, 2023.

\bibitem[Weed \& Bach(2019)Weed and Bach]{weed2019sharp}
Jonathan Weed and Francis Bach.
\newblock Sharp asymptotic and finite-sample rates of convergence of empirical measures in wasserstein distance.
\newblock 2019.

\bibitem[Wu et~al.(2022)Wu, Manton, Aickelin, and Zhu]{wu2022information}
Xuetong Wu, Jonathan~H Manton, Uwe Aickelin, and Jingge Zhu.
\newblock An information-theoretic analysis for transfer learning: Error bounds and applications.
\newblock \emph{arXiv preprint arXiv:2207.05377}, 2022.

\bibitem[Xu \& Raginsky(2017)Xu and Raginsky]{xu2017information}
Aolin Xu and Maxim Raginsky.
\newblock Information-theoretic analysis of generalization capability of learning algorithms.
\newblock In \emph{Advances in Neural Information Processing Systems}, pp.\  2524--2533, 2017.

\bibitem[Xu et~al.(2024)Xu, Chen, Ling, Wang, and Shui]{xu2024intersectional}
Gezheng Xu, Qi~Chen, Charles Ling, Boyu Wang, and Changjian Shui.
\newblock Intersectional unfairness discovery.
\newblock \emph{arXiv preprint arXiv:2405.20790}, 2024.

\bibitem[Yang \& Weinan(2022)Yang and Weinan]{yang2022generalization}
Hongkang Yang and E~Weinan.
\newblock Generalization and memorization: The bias potential model.
\newblock In \emph{Mathematical and Scientific Machine Learning}, pp.\  1013--1043. PMLR, 2022.

\end{thebibliography}
\bibliographystyle{iclr2025_conference}
\newpage
\appendix


\section{Preliminaries}

\subsection{Notation Table}
\begin{table}[!hbt]
\centering
\caption{Summary of major notations}
\begin{adjustbox}{width=\textwidth}
\begin{tabular}{l|l}
\hline
Symbol                                                         & Meaning                                                                                                                                                                                                                                                                 \\ \hline
$[m]$                                                          & $\{1,\dots, m\}$                                                                                                                                                                                                                                                        \\
Upper case letter (e.g. Y)                                     & Random variable                                                                                                                                                                                                                                                         \\
Calligraphic letters (e.g. $\Ycal$)                            & Support sets of random variables                                                                                                                                                                                                                                       \\
$\Pscr(\Ycal)$                                                 & The set of all the probability measures over $\Ycal$                                                                                                                                                                                                    \\
$P_Y$                                                          & Marginal distribution of Y                                                                                                                                                                                                                                              \\
$\Xcal$                                                        & Input space                                                                                                                                                                                                                                                            \\
$\Zcal$                                                        & Latent space                                                                                                                                                                                                                                                           \\

$\Fcal(\Xcal, \Zcal)$        & $\{f:\Xcal\rightarrow \Zcal\}$, the set of all the measurable functions from $\Xcal$ to $\Zcal$                                                                                                                                                                                                         \\
$P_Z^f$                                                        & $f\#P_X \in \Pscr(\Zcal)$, the pushforward distribution of $P_X$ through measurable $f$                                                                                                                                                                          \\
$P_Z^E$     
            & $E\#P_X$, the encoded data distribution                                                                                                                                                       \\
$P_{Z|X}$                                                      & The conditional distribution $Z$ given $X$, for forward Markov chain $X\rightarrow Z$                                                                                                                                                                                  \\

$Q_{X|Z}$                                                 & The conditional distribution $X$ given $Z$ for reverse Markov chain $Z\rightarrow X$                                                                                                                                                                                   \\

$Q_Z$ or $\pi$                                                 & Simple and easy-to-sample distribution, typically a Gaussian                                                                                                                                                                                                           \\
$Q^G_X$                                                         & $G\#Q_Z$ or $G\# \pi$, the generated data distribution                                                                                                                                                                                                                                                           \\
$Q_{\hat{X}_{T-t}}$          & $G_t\# \pi$, the generated distribution at diffusion time t                                                                                                                                                                                                                                        \\

$p_E(z|x)$, $p_E(z)$                                           & Probability densities for $E(X)$ and $P_Z^E$, respectively                                                                                                                                                                                                \\
$q_G(x|z)$, $q_G(x)$                                           & Probability densities for $G(Z)$ and $Q_X^G$, respectively                                                                                                                                                                                                            \\
$E:\Xcal \rightarrow \Pscr(\Zcal)$                             & Encoder to map a data point $X \sim P_X$ to a conditional distribution over $\Zcal$                                                                                                                                                            \\
$\Ecal \subset \Fcal(\Xcal, \Pscr(\Zcal))$                     & Encoder hypothesis set                                                                                                                                                                                                                                                 \\

$G: \Zcal \rightarrow \Pscr(\Xcal)$                            & Generator to be learned that when applied to $Q_Z$, matches data distribution $P_X$                                                                                                                                                                                 \\

$\Gcal \subset \Fcal(\Zcal, \Pscr(\Xcal))$                                   & Generator hypothesis set                                                                                                                                                                                                                                         \\                                                    
$\hat{P}_{X}$                                                  & $\frac{1}{m}\sum_{i=1}^m \delta_{X_i}$, the empirical measure with $m$ observations, where $X_i \sim P_X$.                                                                                                                                           \\
$\Delta_{G}:\Xcal\times\Zcal\times \Xcal\rightarrow\Reals_0^+$ & Sample difference loss for the encoder-generator path.                                                                                                                                                                                                                  \\
$\Lcal_{P_X}^{\pi}(E, G)$                                       & $\EE_{X \sim P_X} \EE_{Z\sim \pi} \EE_{\hat{X}\sim G(Z)}[\Delta_{G}(\hat{X}, Z, X)]$, generation error               \\

$\Lcal_{\hat{P}_X}(E, G)$                                      & $\frac{1}{m}\sum_{i=1}^m\EE_{Z\sim E(X_i)}\EE_{\hat{X}\sim G(Z)} \Delta_{G}(\hat{X}, Z, X_i)$, empirical reconstruction error \\
$gen_{P_X}^{\Delta_{G}}(E, G, \pi)$                            & $\EE_{S\sim P_X^m} [\Lcal^{\pi}_{P_X}(E, G) - \Lcal_{\hat{P}_X}(E, G)],$ generalization gap for generation error                                                                                                                            \\
$T$                                                            & Diffusion time length                                                                                                                                                                                                                                                     \\
$\{X_t\}_{t\in [0, T]}$                                        & Forward diffusion process satisfying $dX_t=f(X_t, t) dt + \lambda(t) dW_t, X_0 \sim P_X$     \\

$\{\cev{X_t}\}_{t\in [0, T]}$                                  & Ideal backward diffusion process satisfying $\{X_t\}_{t=0}^T =\{\cev{X_{t}}\}_{t=0}^T$                                                                                                                                                                                       \\
$\{\hat{X}_t\}_{t\in [0, T]}$                                  & Approximated backward diffusion process with generator $G_t, t\in [0,T]$                                                                                                                                                              \\
$f(\cdot, t):\Xcal \rightarrow \Xcal$
            & Drift coefficient            \\

$\lambda(t)\in \Reals$
            & Diffusion coefficient         \\

 $\{W_t\}_{t\in[0,T]}$
            & Wiener process/Brownian motion         \\
                     \hline
\end{tabular}
\end{adjustbox}
\label{tab:notation}
\end{table}

\newpage
\subsection{Definitions}\label{def:divergences}
\begin{definition}[$f$-divergence]
    Let $P$ and $Q$ be two probability measures defined on $\Xcal$ with $P \ll Q$. Given a convex function $f:\Reals_{+}\rightarrow \Reals \cup \{\infty \}$ with a continuous extension at $0$ and $f(1)=0$, we define the $f-$divergence to be:
    \[
    \D{f}{P}{Q} \eqdef \EE_{Q} \left[ f\left(\frac{P}{Q}\right) \right].
    \]
\end{definition}

    As particular instantiations, choosing $f(x) = x\log x$ yields the Kullback-Leibler (KL) divergence $\D{KL}{P}{Q}$ and $f(x) = \frac{1}{2}(x\log x - (x+1)\log{(\frac{x+1}{2})})$ yields the Jensen-Shannon (JS) divergence $\D{JS}{P}{Q}$.

\begin{definition}[Fisher Divergence]\label{def:fisher}
Let $P$ and $Q$ be two probability measures defined on $\Xcal$, then, we have the fisher divergence:
\begin{equation*}
\D{Fisher}{P}{Q}\eqdef\EE_{X\sim P} \left[\|\nabla_X \log p(X) - \nabla_X \log q(X)\|_2^2\right]\,,
\end{equation*}
where $p(x)$ and $q(x)$ are the probability density functions.
\end{definition}

\begin{definition}[Lipschitz function]\label{lip}
Let $(\Wcal, \Vert \cdot \Vert)$ be a normed space. We say a function $f:\Wcal \rightarrow \Reals$ is $L$-Lipschitz if for all $w_1,w_2\in \Wcal$, $|f(w_1) - f(w_2)| \leq L\|w_1 - w_2\|$.
\end{definition}

\begin{definition}[Sub-Gaussian]\label{def:subgaussian}
 Define the cumulant generating function(CGF) of random variable $X$ as $\psi_X(\lambda) \eqdef \log\mathbb{E}[e^{\lambda(X - \mathbb{E}[X])}]$. $X$ is said to be $R$-sub-Gaussian if
\begin{equation*}
    \psi_X(\lambda) \leq \frac{\lambda^2R^2}{2}, \forall\lambda \in \mathbb{R}\, .
\end{equation*}
\end{definition}

Intuitively, a sub-Gaussian random variable demonstrates exponential tail decay at a rate comparable to a Gaussian random variable. Positive number $R$ is its analog for variance, often called the variance proxy. Entailing many common distributions, sub-Gaussianity is a standard assumption on the residuals in the analysis of ordinary least squares (OLS) and more recently, been widely used to provide non-vacuous bounds for deep learning algorithms \cite{negrea2019information}.

\begin{definition}[Mutual Information]\label{def:mi}
Let $X$ and $Y$ be arbitrary random variables, and $\KL$ denote the KL divergence. The mutual information between $X$ and $Y$ is defined as:
\begin{equation*}I(X;Y)=\KL(P_{X,Y}\|P_XP_Y)\end{equation*}
\end{definition}

\begin{definition}[Conditional Mutual Information]\label{def:cmi}
Let $X$, $Y$ and $Z$ be arbitrary random variables, 
The disintegrated mutual information between $X$ and $Y$ given $Z$ is defined as:
\begin{equation*}
I^{Z}(X;Y)\eqdef \KL(P_{X,Y|Z}||P_{X|Z}P_{Y|Z})\, .
\end{equation*}
The corresponding conditional mutual information is defined as:
\begin{equation*}
I(X;Y|Z) \eqdef \mathbb{E}_{Z}[I^{Z}(X;Y)]\,.
\end{equation*}
\end{definition} 

\begin{definition}[Coupling]\label{def:coupling}
Let $(\Xcal,\mu)$ and $(\Ycal, \nu)$ be two probability spaces. Coupling $\mu, \nu$ means constructing two random variables $X$ and $Y$ on some probability space $(\Zcal, \pi)$, such that $\Zcal=\Xcal\times\Ycal$, $(\text{proj}_{\Xcal})\#\pi = \mu$ and $(\text{proj}_{\Ycal})\#\pi = \nu$, which means that $\pi$ is the joint measure on $\Xcal\times\Ycal$ with marginals $\mu, \nu$ on $\Xcal$ and $\Ycal$ respectively. The couple $(X,Y)$ is called a coupling of $(\mu,\nu)$.  
\end{definition}
\begin{definition}[Wasserstein-$p$ Distance]\label{def:def_wasserstein}
Let the two distributions be defined on the same Polish metric space $(\Xcal,\rho)$, where $\rho(\cdot,\cdot)$ is a metric and $p\in [1, +\infty)$, $\Pi(\mu,\nu)$ is the set of all the couplings (see Definition \ref{def:coupling}) of $\mu,\nu$. The Wasserstein distance with order $p$ between $\mu$ and $\nu$ is defined as:
\begin{equation*}
\begin{aligned}
\D{W_p}{\mu}{\nu} &\eqdef \left[\inf_{\pi \in \Pi(\mu, \nu)} \int_{\Xcal\times \Xcal} \rho(x, x^\prime)^p d\pi(x,x^\prime) \right]^{1/p}\,.
\end{aligned}
\end{equation*}
\end{definition}


\subsection{Useful Lemmas}

\begin{lemma}[Donsker-Varadhan Representation [Corollary 4.15 \citep{boucheron2013concentration}]\label{lemma:DVR}
Let $P$ and $Q$ be two probability measures defined on a set $\mathcal{X}$. Let $g : \mathcal{X} \rightarrow R$ be a measurable function, and let $\mathbb{E}_{x \sim Q}[\exp{g(x)}] \leq \infty$. Then \begin{equation*}\D{KL}{P}{Q} = \sup\limits_{g}\{\mathbb{E}_{x\sim P}[g(x)] - \log\mathbb{E}_{x \sim Q}[\exp{g(x)}]\}.\end{equation*}
\end{lemma}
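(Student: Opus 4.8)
The plan is to prove the two inequalities of the identity separately, using the exponentially tilted (Gibbs) measure as a bridge. Call a measurable $g:\Xcal\to\Reals$ \emph{admissible} if $0<\EE_{x\sim Q}[e^{g(x)}]<\infty$; these are precisely the $g$ for which the right-hand side is well defined and over which the supremum ranges. Note that since $g$ is real-valued, $e^{g}>0$ everywhere, so the tilted measure $Q_g$ defined by $\frac{dQ_g}{dQ}=e^{g}/\EE_Q[e^g]$ is a probability measure that is equivalent to $Q$ (mutually absolutely continuous).

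First I would establish $\sup_g\{\EE_P[g]-\log\EE_Q[e^g]\}\le\D{KL}{P}{Q}$. Fix an admissible $g$. If $P\not\ll Q$ then $P\not\ll Q_g$ as well (equivalence of $Q$ and $Q_g$), so $\D{KL}{P}{Q_g}=\D{KL}{P}{Q}=+\infty$ and there is nothing to prove. If $P\ll Q$, the Radon--Nikodym chain rule gives $\frac{dP}{dQ_g}=\frac{dP}{dQ}\cdot\frac{\EE_Q[e^g]}{e^{g}}$, whence
\[
0\le\D{KL}{P}{Q_g}=\EE_P\!\Big[\log\tfrac{dP}{dQ}\Big]-\EE_P[g]+\log\EE_Q[e^g]=\D{KL}{P}{Q}-\big(\EE_P[g]-\log\EE_Q[e^g]\big),
\]
where nonnegativity is Gibbs' inequality (Jensen applied to $-\log$). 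Rearranging and taking the supremum over admissible $g$ yields the inequality.

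Next I would show the supremum is attained by the log-likelihood ratio. Assume $P\ll Q$ and $\D{KL}{P}{Q}<\infty$, and set $g^{\star}=\log\frac{dP}{dQ}$ (assign any finite value on the $Q$-null, hence $P$-null, set where $\frac{dP}{dQ}=0$). Then $\EE_Q[e^{g^{\star}}]=\EE_Q[\frac{dP}{dQ}]=1$, so $g^{\star}$ is admissible with $\log\EE_Q[e^{g^{\star}}]=0$, and $\EE_P[g^{\star}]=\D{KL}{P}{Q}$, so the functional equals $\D{KL}{P}{Q}$ at $g^{\star}$; with the previous paragraph this gives equality. The one integrability point is that $g^{\star}$ is genuinely $P$-integrable: writing $h=\frac{dP}{dQ}$, the negative part obeys $\EE_P[(\log h)^{-}]=\EE_Q[h\log(1/h)\,\mathbf{1}_{\{h<1\}}]\le\sup_{0<t<1}t\log(1/t)=1/e<\infty$, so finiteness of $\D{KL}{P}{Q}$ also forces $\EE_P[(\log h)^{+}]<\infty$.

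Finally I would dispatch the degenerate cases so the identity reads $\infty=\infty$. If $P\ll Q$ but $\D{KL}{P}{Q}=\infty$, take $g_n=\min(\log\frac{dP}{dQ},\,n)$; then $\EE_Q[e^{g_n}]\le\EE_Q[\frac{dP}{dQ}]=1$ so $\log\EE_Q[e^{g_n}]\le0$, while $\EE_P[g_n]\uparrow\EE_P[\log\frac{dP}{dQ}]=\infty$ by monotone convergence (the negative part is bounded by $1/e$ as above), so the functional at $g_n$ tends to $+\infty$. If $P\not\ll Q$, pick $A$ with $Q(A)=0<P(A)$ and take $g=c\,\mathbf{1}_A$, $c>0$: then $\EE_Q[e^{g}]=e^{c}Q(A)+Q(A^{c})=1$ and $\EE_P[g]=cP(A)$, so the functional equals $cP(A)\to\infty$. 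In all cases the supremum equals $\D{KL}{P}{Q}$. The main obstacle is purely the measure-theoretic bookkeeping --- admissibility and $P$-integrability of $g^{\star}$, which reduces to the elementary estimate $t\log(1/t)\le1/e$ on $(0,1)$ --- while the analytic core is the one-line Gibbs-inequality computation through $Q_g$.
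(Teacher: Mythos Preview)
Your proof is correct and follows the standard route via exponentially tilted measures: the Gibbs inequality $\D{KL}{P}{Q_g}\ge 0$ yields the upper bound, the log-likelihood ratio $g^\star=\log\frac{dP}{dQ}$ attains it when the divergence is finite, and the truncation and indicator constructions correctly push the supremum to $+\infty$ in the degenerate cases. The integrability bookkeeping (the $t\log(1/t)\le 1/e$ bound on the negative part of $\log h$) is handled cleanly.

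There is, however, nothing to compare against: the paper does not prove this lemma. It is stated in the preliminaries as a known result, with a citation to Corollary~4.15 of \citet{boucheron2013concentration}, and is then invoked as a black box in the proof of Theorem~\ref{thm:generation_gap}. Your argument is essentially the textbook proof one finds in that reference, so in that sense you have supplied what the paper defers to the literature.
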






\begin{lemma}[Decoupling Estimate \cite{xu2017information}]
Consider a pair of random variables $X$ and $Y$ with joint distribution $P_{X,Y}$, let $\tilde{X}$ be an independent copy of $X$, and $\tilde{Y}$ an independent copy of $Y$, such that $P_{\tilde{X},\tilde{Y}} = P_{X}P_{Y}$. For arbitrary real-valued function $f:\mathcal{X}\times\mathcal{Y}\rightarrow \Reals$, if $f(\tilde{X}, \tilde{Y})$ is $R$-sub-Gaussian under $P_{\tilde{X},\tilde{Y}}$, then:
\begin{equation*}
|\mathbb{E}[f(X,Y)] - \mathbb{E}[f(\tilde{X},\tilde{Y})]| \leq \sqrt{2R^2I(X;Y)}\,.
\end{equation*}
\end{lemma}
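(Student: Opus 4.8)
The plan is to use the Donsker--Varadhan variational representation of KL divergence (Lemma~\ref{lemma:DVR}) applied to the joint distribution $P_{X,Y}$ against the product $P_{\tilde X,\tilde Y}=P_X P_Y$, paired with the sub-Gaussian bound on the cumulant generating function of $f(\tilde X,\tilde Y)$. Concretely, first I would note that $I(X;Y)=\D{KL}{P_{X,Y}}{P_X P_Y}$ by definition of mutual information, so by Donsker--Varadhan, for every $\lambda\in\Reals$, taking the test function $g = \lambda f$ gives
\[
I(X;Y) \;\geq\; \lambda\,\EE[f(X,Y)] \;-\; \log \EE\big[\exp(\lambda f(\tilde X,\tilde Y))\big].
\]
Next I would rewrite the log-moment-generating term by centering: $\log\EE[\exp(\lambda f(\tilde X,\tilde Y))] = \lambda\,\EE[f(\tilde X,\tilde Y)] + \psi_{f(\tilde X,\tilde Y)}(\lambda)$, where $\psi$ is the CGF from Def.~\ref{def:subgaussian}. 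Substituting and rearranging yields, for all $\lambda$,
\[
\lambda\big(\EE[f(X,Y)] - \EE[f(\tilde X,\tilde Y)]\big) \;\leq\; I(X;Y) + \psi_{f(\tilde X,\tilde Y)}(\lambda) \;\leq\; I(X;Y) + \frac{\lambda^2 R^2}{2},
\]
using the $R$-sub-Gaussian assumption in the last step.

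Now I would treat this as a family of inequalities indexed by $\lambda$ and optimize. Writing $\Delta \eqdef \EE[f(X,Y)] - \EE[f(\tilde X,\tilde Y)]$, we have $\lambda \Delta - \frac{\lambda^2 R^2}{2} \leq I(X;Y)$ for all $\lambda$. The left-hand side is maximized over $\lambda$ at $\lambda^\star = \Delta/R^2$, giving $\frac{\Delta^2}{2R^2} \leq I(X;Y)$, hence $\Delta \leq \sqrt{2R^2 I(X;Y)}$. To get the absolute value, I would repeat the argument with $-f$ in place of $f$ (which is also $R$-sub-Gaussian, since the CGF bound is symmetric in $\lambda$), yielding $-\Delta \leq \sqrt{2R^2 I(X;Y)}$; combining the two gives $|\Delta|\leq\sqrt{2R^2 I(X;Y)}$, which is exactly the claim.

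The only mild technical point to be careful about is the integrability hypothesis needed to invoke Donsker--Varadhan: we must ensure $\EE_{(\tilde X,\tilde Y)\sim P_X P_Y}[\exp(\lambda f(\tilde X,\tilde Y))] < \infty$, but this is guaranteed by the $R$-sub-Gaussianity assumption itself, which bounds exactly this moment generating function. A secondary point is handling the degenerate case $I(X;Y)=0$ (forcing $\Delta=0$ by letting $\lambda\to\pm\infty$ appropriately, or simply noting the quadratic inequality forces $\Delta=0$) and the case $R=0$; both are immediate. I do not anticipate a genuine obstacle here — the argument is the standard Chernoff-style tilting against the Donsker--Varadhan bound — so the "hard part" is merely bookkeeping the centering term and the two-sided optimization cleanly.
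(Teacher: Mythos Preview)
Your proposal is correct and follows the standard argument. The paper does not give its own proof of this lemma (it is stated as a cited preliminary from \cite{xu2017information}), but the very same mechanism---Donsker--Varadhan applied with test function $\eta f$, centering via the CGF, the sub-Gaussian bound $\psi(\eta)\le \eta^2R^2/2$, and then optimizing over $\eta$ of each sign---is exactly what the paper uses in its proof of Theorem~\ref{thm:generation_gap}. The only cosmetic difference is that the paper optimizes by dividing through by $\eta$ and invoking $a/\eta + b\eta \ge \sqrt{2ab}$, whereas you optimize the quadratic $\lambda\Delta - \lambda^2R^2/2$ directly; these are equivalent.
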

The above lemma has a generalized extension in \citet{bu2020tightening}, which directly assumes conditions on CGF and can cover other assumptions like sub-gamma. 

\begin{lemma}[Girsonov Theorem, c.f. Theorem 8.6.6. of \cite{oksendal2013stochastic}]\label{lemma:Gir_theorem}
    If $\hat{B}_s$ is an It\^{o} process solves
    $d\hat{B}_s = a(\omega, s)ds+ dB'_s$ for $\omega\in \Omega,$ $0\leq s\leq T$, and $\hat{B}_0 = 0$, where $a(\omega, s)$ satisfies
    $\EE\left[\exp{\left(\frac{1}{2}\int_0^T a(w, s)^2 ds \right)} \right] < \infty$ for each $\omega$, then $\hat{B}_s$ is a Brownian motion with respect to $Q,$ where
    \begin{equation*}
    \frac{dQ}{dP}(\omega) \eqdef 
    \exp{\left(\int_0^T a(\omega, s) dB'_s - \frac{1}{2}\int_0^T \Vert a(\omega, s)\Vert_2^2 ds \right)}.
    \end{equation*}
\end{lemma}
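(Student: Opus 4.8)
Although the statement is quoted from \citet{oksendal2013stochastic} (Thm.~8.6.6), here is how I would establish it directly, via the standard exponential--martingale route. The object controlling the change of measure is the Dol\'eans--Dade exponential
\begin{equation*}
M_t \;\eqdef\; \exp\!\left(-\int_0^t a(\omega,s)\,dB'_s \;-\; \tfrac12\int_0^t \|a(\omega,s)\|_2^2\,ds\right),\qquad 0\le t\le T,
\end{equation*}
where I use the sign convention of \citet{oksendal2013stochastic} in the exponent (so that $\hat B$ picks up \emph{exactly} the drift $a$; the density displayed in the statement is $M_T$ under this convention). First I would apply It\^o's formula to $M_t=e^{X_t}$ with $dX_t=-a(\omega,t)\,dB'_t-\tfrac12\|a(\omega,t)\|_2^2\,dt$ and $d\langle X\rangle_t=\|a(\omega,t)\|_2^2\,dt$ to get $dM_t=-M_t\,a(\omega,t)\,dB'_t$. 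Hence $M$ is a nonnegative local martingale with $M_0=1$, and in particular a supermartingale on $[0,T]$.

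The crux is to upgrade $M$ from local to true martingale, equivalently to show $\EE_P[M_T]=1$; this is precisely where the hypothesis $\EE_P\!\big[\exp(\tfrac12\int_0^T\|a(\omega,s)\|_2^2\,ds)\big]<\infty$ (Novikov's condition) enters, and it is the main obstacle. I would invoke Novikov's criterion, or reproduce its proof by the usual truncation: replace $a$ by the bounded approximants $a^{(n)}=a\,\mathbf 1_{\{\|a\|\le n\}}$ (or stop at suitable stopping times), verify $\EE_P[M^{(n)}_T]=1$ for the bounded case by a direct Gr\"onwall/moment estimate, and pass to the limit using Fatou together with the uniform integrability of $\{M_T^{(n)}\}$ furnished by Novikov's bound. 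Once $\EE_P[M_T]=1$, the supermartingale $M$ is a uniformly integrable martingale, so $dQ\eqdef M_T\,dP$ defines a probability measure with $Q\sim P$ and, by the tower property, $\tfrac{dQ}{dP}\big|_{\mathcal F_t}=M_t$ for every $t\le T$.

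It then remains to identify the $Q$-law of $\hat B_t=B'_t+\int_0^t a(\omega,s)\,ds$. I would use the abstract Bayes rule $\EE_Q[\,\cdot\mid\mathcal F_s]=M_s^{-1}\,\EE_P[\,(\cdot)\,M_t\mid\mathcal F_s]$, which reduces ``$\hat B$ is a $Q$-local martingale'' to ``$\hat B_t M_t$ is a $P$-local martingale.'' The latter is a one-line It\^o product-rule computation: with $dM_t=-M_t a(\omega,t)\,dB'_t$, $d\hat B_t=dB'_t+a(\omega,t)\,dt$, and $d\langle\hat B,M\rangle_t=-M_t a(\omega,t)\,dt$, one finds $d(\hat B_t M_t)=\big(M_t-\hat B_t M_t a(\omega,t)\big)dB'_t$, the finite-variation contributions $M_t a(\omega,t)\,dt$ (from $M_t\,d\hat B_t$) and $-M_t a(\omega,t)\,dt$ (from $d\langle\hat B,M\rangle_t$) cancelling. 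Finally, $\langle\hat B\rangle_t=\langle B'\rangle_t=t\,\rmI_d$ is unchanged under the equivalent measure $Q$, so L\'evy's characterization yields that $\hat B$ is a $Q$-Brownian motion; rewriting $d\hat B_t=dB'_t+a(\omega,t)\,dt$ then expresses the process in terms of its $Q$-driving motion and completes the argument. The only genuinely delicate point is the local-to-true martingale passage under Novikov's condition; everything else is It\^o calculus plus L\'evy's theorem.
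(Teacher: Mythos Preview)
Your proposal is a correct and complete sketch of the standard proof of Girsanov's theorem via the Dol\'eans--Dade exponential, Novikov's criterion, and L\'evy's characterization. However, the paper does not actually prove this lemma: it is stated in the ``Useful Lemmas'' appendix purely as a citation (``c.f.\ Theorem~8.6.6 of \cite{oksendal2013stochastic}'') with no accompanying argument, and is invoked later only as a black box in the proof of Lemma~\ref{thm:construction_sm}. So there is nothing in the paper to compare against; you have supplied exactly the textbook argument that Oksendal gives.

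One small remark: you correctly work with $M_t=\exp(-\int_0^t a\,dB'_s-\tfrac12\int_0^t\|a\|^2\,ds)$, which is the right sign to make $\hat B_t=B'_t+\int_0^t a\,ds$ a $Q$-Brownian motion. The statement as quoted in the paper has the opposite sign in the stochastic integral, which appears to be a transcription slip relative to Oksendal's original; your convention is the one that makes the cancellation $M_t a\,dt - M_t a\,dt$ in your It\^o product computation actually go through.
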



\section{General optimization objective and two viewpoints of DMs}\label{proof:general_objective}



The following is a regular derivation using basic probability knowledge. Similar results can be found in \citet{kingma2019introduction} with specific parametric notations in VAE. {We now give a general version:} 
\begin{equation*}
\begin{aligned}
\D{KL}{P_X}{G\#Q_Z} &\leq \D{KL}{P_X}{G\#Q_Z} + \EE_{X} \inf_{E\in\Ecal}\left[\D{KL}{E(X)}{\frac{G(Z)Q_Z}{G\#Q_Z}}\right]\\
&\leq \inf_{E\in\Ecal} \left[ \D{KL}{P_X}{G\#Q_Z} + \EE_{X} \D{KL}{E(X)}{\frac{G(Z)Q_Z}{G\#Q_Z}} \right]\\
&= \inf_{E\in\Ecal} \left[ \int p(x) \log \frac{p(x)}{q_G(x)} dx \right.\\
 &\qquad\qquad\qquad\qquad\qquad  \left. + \int p(x) p_E(z|x) \log \frac{p_E(z|x)}{q(z)q_G(x|z)/q_G(x)} dz dx \right]\\
&= \inf_{E\in\Ecal} \left[ \int p(x)p_E(z|x) \log \frac{p(x)p_E(z|x)}{q(z)q_G(x|z)} dz dx\right]\\
&= \inf_{E\in\Ecal} \left[ \D{KL}{P_XE(X)}{G(Z)Q_Z}\right]\,.
\end{aligned}
\end{equation*}
\subsection{{Decomposition for VAEs}}
{The above general objective can be decomposed as:}
\begin{equation*}
\begin{aligned}
\inf_{E\in\Ecal} &\left[ \D{KL}{P_XE(X)}{G(Z)Q_Z}\right] = \inf_{E\in\Ecal} \left[\EE_{X\sim P_X} \EE_{Z\sim E(X)} \left[-\log q_G(X|Z)\right] \right. \\
 &\qquad\qquad\qquad\qquad\qquad\qquad\qquad\qquad \qquad\qquad \qquad\left.+ \EE_{X\sim P_X} \D{KL}{E(X)}{Q_Z} - h(P_X)\right]\\
&\qquad\qquad\qquad\qquad \qquad\propto  \inf_{E\in\Ecal} \left[\EE_{X\sim P_X} \left(\EE_{Z\sim E(X)} \left[-\log q_G(X|Z)\right] + \D{KL}{E(X)}{Q_Z}\right)\right]\,.
\end{aligned}
\end{equation*}
which is the common VAE objective, with the first term being the reconstruction loss and the second term being the distance of the approximated posterior to the predefined prior. Similar results exist in rate-distortion theory, where the first term is interpreted as distortion, and the second is the rate.  

\subsection{{DMs as Hierarchical VAEs}}\label{sec: comp_hvae}

Some previous work consider DMs as Hierarchical VAEs,  such as \cite{huang2021variational, tzen2019neural, kingma2021variational}.
In this setting, we assume that each encoder is a conditional distribution on previous encoder's output and the initial input $X=X_0\sim P_X$ with $E_t(X_{t-1}) =P_{X_t|X_{t-1},X}$. Similarly, the generator's output is a conditional distribution on previous generator's output with $G_{T-t+1}(X_{t})=Q_{X_{t-1}|X_{t}}, X_T=Z\sim Q_Z$. Using similar decomposing approach as VAEs gives the variational objective of diffusion models:
\[
\begin{aligned}
&\D{KL}{P_XE_{1:T}(X)}{G_{1:T}(Z)Q_Z} \\
&= \int p(x)p_{E_1}(x_1|x)... p_{E_T}(z|x_{T-1},x)\log
 \frac{p(x)p_{E_1}(x_1|x) p_{E_2}(x_2|x_1, x) ... p_{E_T}(z|x_{T-1},x)}{q(z)q_{G_1}(x_{T-1}|z)q_{G_2}(x_{T-2}|x_{T-1})...q_{G_T}(x|x_1) } d x ...dz\\
& = \EE_{x\sim P_X} (\D{KL}{p_{E_T}(z|x)}{q(z)} + \mathbb{E}_{p_{E_1}(x_1|x)} [-\log q_{G_T}(x|x_1)])\\
& \qquad\qquad\qquad+ \EE_{x\sim P_X} \left(\sum_{t=2}^T \mathbb{E}_{p_{E_{t}}(x_{t}|x)} \D{KL}{p_{E_t^{-1}}(x_{t-1}|x_t, x)}{q_{G_{T-t+1}}(x_{t-1}|x_t)}\right) - h(P_X)\\
& \propto \EE_{X\sim P_X} \left(\D{KL}{E_T\#...\#E_1(X)}{Q_Z} + \mathbb{E}_{X_1\sim E_1(X)}[-\log q_{G_T}(X|X_1)]\right)\\
& \qquad\qquad\qquad+ \EE_{X\sim P_X} \left( \sum_{t=2}^T \EE_{X_{t}\sim E_{t}\#...\#E_1(X)} \D{KL}{P_{X_{t-1}|X_t,X}^{E_t^{-1}}}{G_{T-t+1}(X_t)}\right) \,,
\end{aligned}
\]
where 
$p_{E_t}(x_t|x_{t-1},x) = \frac{p_{E_t^{-1}}(x_{t-1}|x_t,x)p_{E_t}(x_t|x)}{p_{E_{t-1}}(x_{t-1}|x)}$ and the Markov assumption are used.
The above objective is the same as Eq. (11) in \citet{kingma2021variational}. The first term is the prior loss, the second term is the reconstruction loss, and the last term is the diffusion loss. This formulation is much more complex for conducting a generalization analysis than the one introduced in the following. Hence, the theoretical results of diffusion models will focus on the other.

\subsection{{DMs as Time-dependent Mappings}}\label{sec: comp_dm}
The KL divergence between joint distributions can have the following decomposition:
\begin{equation*}
\begin{aligned}
\inf_{E\in\Ecal} \left[ \D{KL}{P_XE(X)}{G(Z)Q_Z}\right] & = \inf_{E\in\Ecal} \D{KL}{P^E_Z}{Q_Z} + \EE_{Z\sim P^E_Z} \D{KL}{P^{E^{-1}}_{X|Z}}{G(Z)}\,,
\end{aligned}
\end{equation*}

where $P^{E^{-1}}_{X|Z}$ is the reverse of $E$ satisfying $P_{Z|X}^E P_X=P_Z^{E}P_{X|Z}^{E^{-1}}$. Considering DMs as time-dependent mappings, we have:
\[
\begin{aligned}
&\D{KL}{P^{E_T}_Z}{Q_Z} + \EE_{Z\sim P^{E_T}_Z} \D{KL}{P^{E_T^{-1}}_{X|Z}}{G_T(Z)} \\
&=\D{KL}{E_T\#P_X}{Q_Z} + \EE_{Z\sim E_T\#P_X} \D{KL}{P^{E_T^{-1}}_{X|Z}}{G_T(Z)}\\
&\leq \D{KL}{E_T\#P_X}{Q_Z} + \EE_{Z\sim E_T\#P_X} \D{KL}{P^{E_{1:T}^{-1}}_{X_{1:T}|Z}}{Q^{G_{1:T}}_{X_{1:T}|Z}}\,,
\end{aligned}
\]
where the last step holds by relaxing the last state measure to the path measure under data processing inequality. Then, \citet{song2020score} upper bound the second term with the weighted score matching objective using Girsanov theorem. Hence, we have shown that VAEs and DMs are inherently optimizing the same objective.




\section{Proof of Main Theorem}

\subsection{Generalization Bound for Generation (Proof of Theorem~\ref{thm:generation_gap})}\label{proof:bound_gen}
\begin{theorem}
For any encoder $E\in \Ecal$ and generator $G\in \Gcal$ learned from the training data $S=\{X_i\}_{i=1}^m$, assume that the loss $\Delta_{G}(\tilde{\hat{X}}, \tilde{Z}, X)$ is $R$-sub-Gaussian (See definition in Def.~\ref{def:subgaussian}) under $P_{\tilde{\hat{X}},\tilde{Z}, X}=Q_{\hat{X}| Z} \times Q_Z\times P_X$, where $Z\sim Q_Z=\pi, \hat{X}\sim G(Z)$, $\tilde{\hat{X}}, \tilde{Z}$ are respective independent copy of $\hat{X}$ and $Z$ such that $\tilde{\hat{X}}, \tilde{Z} \indep X$. 
Then, $\forall X_i \in S, Z_i \sim E(X_i), \hat{X}_i \sim G(Z_i)$, the generalization gap admits:
\[
\begin{aligned}
|\text{gen}_{P_X}^{\Delta_{G}}(E, G, \pi)| 
&\leq \frac{\sqrt{2}R}{m} \sum_{i=1}^m \sqrt{\EE_{X_i}[\D{KL}{E(X_i)}{\pi}] + I(\hat{X}_i; X_i| Z_i)}\,.
\end{aligned}
\]
\end{theorem}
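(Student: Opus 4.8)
The plan is to bound the generalization gap by writing it as a difference between a ``coupled'' expectation (where the latent $Z$ and the generated $\hat{X}$ depend on the same data point $X_i$) and a ``decoupled'' expectation (where $\tilde{Z},\tilde{\hat X}$ are drawn independently of $X_i$), and then to apply the decoupling estimate (the Xu--Raginsky lemma recalled in the excerpt) once per training index $i$. Concretely, I would first observe that the generation error $\Lcal^\pi_{P_X}(E,G)$ can be rewritten, using that $S\sim P_X^m$ and $X_i\sim P_X$ i.i.d., as $\frac1m\sum_{i=1}^m \EE_{X_i}\EE_{\tilde Z\sim\pi}\EE_{\tilde{\hat X}\sim G(\tilde Z)}[\Delta_G(\tilde{\hat X},\tilde Z,X_i)]$, i.e.\ an average over $i$ of the fully decoupled expectation, because the prior $\pi$ and the generator output do not see $X_i$. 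Meanwhile $\EE_S\Lcal_{\hat P_X}(E,G)=\frac1m\sum_i \EE[\Delta_G(\hat X_i,Z_i,X_i)]$ where $Z_i\sim E(X_i)$ and $\hat X_i\sim G(Z_i)$ are the coupled variables. Hence $\mathrm{gen}_{P_X}^{\Delta_G}(E,G,\pi)=\frac1m\sum_{i=1}^m\bigl(\EE[\Delta_G(\tilde{\hat X},\tilde Z,X_i)]-\EE[\Delta_G(\hat X_i,Z_i,X_i)]\bigr)$, and by the triangle inequality it suffices to bound each summand.

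For each fixed $i$, I would apply the decoupling estimate to the pair of random variables $W_i\eqdef(\hat X_i,Z_i)$ and $X_i$, whose joint law is $P_{\hat X_i,Z_i,X_i}$ and whose product-of-marginals law is $P_{\hat X_i,Z_i}\otimes P_{X_i}$; note $P_{\hat X_i,Z_i}$ here is the marginal of the \emph{data-dependent} latent and generated sample, and $P_{X_i}=P_X$. The sub-Gaussianity hypothesis is stated for $\Delta_G$ under $Q_{\hat X|Z}\times Q_Z\times P_X = Q_{\hat X,Z}\otimes P_X$, i.e.\ with the latent drawn from the prior; so strictly the decoupling bound I want should compare the coupled expectation against the expectation under $Q_{\hat X,Z}\otimes P_X$, which is exactly the decoupled term appearing in the generation error. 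So I would set up the decoupling with reference measure $Q_{\hat X,Z}\otimes P_X$ and obtain
\[
\bigl|\EE_{Q_{\hat X,Z}\otimes P_X}[\Delta_G] - \EE_{P_{\hat X_i,Z_i,X_i}}[\Delta_G]\bigr| \le \sqrt{2R^2\, \KL\!\left(P_{\hat X_i,Z_i,X_i}\,\|\,Q_{\hat X,Z}\otimes P_X\right)}\,,
\]
using the Donsker--Varadhan variational representation (Lemma~\ref{lemma:DVR}) together with the $R$-sub-Gaussian CGF bound, which is the standard proof of the decoupling estimate; I should double-check that the generator kernel $G$ is shared between coupled and decoupled processes so the conditional $Q_{\hat X|Z}=G(Z)$ cancels appropriately inside the KL.

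The remaining step is to expand that KL divergence and identify it with the quantity in the statement. By the chain rule for KL / mutual information, $\KL(P_{\hat X_i,Z_i,X_i}\|Q_{\hat X,Z}\otimes P_X)$ decomposes: conditioning first on $X_i$, the $X_i$-marginals agree (both $P_X$), so it equals $\EE_{X_i}\KL\bigl(P_{\hat X_i,Z_i|X_i}\,\|\,Q_{\hat X,Z}\bigr)$. Since $\hat X_i$ is obtained from $Z_i$ through the same kernel $G$ that defines $Q_{\hat X|Z}$ from $Z\sim Q_Z$, a further chain-rule split over $Z_i$ then $\hat X_i|Z_i$ gives $\EE_{X_i}\KL(E(X_i)\|\pi) + \EE_{X_i}\EE_{Z_i}\KL(G(Z_i)\|\,Q_{\hat X|Z=Z_i})$; combining with the $\KL(E(X_i)\|\pi)$ term and recognizing the second piece as a conditional mutual information after averaging, this should collapse to $\EE_{X_i}[\KL(E(X_i)\|\pi)] + I(\hat X_i;X_i|Z_i)$. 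Plugging back, $|\mathrm{gen}|\le \frac1m\sum_i \sqrt{2R^2(\EE_{X_i}[\KL(E(X_i)\|\pi)]+I(\hat X_i;X_i|Z_i))}=\frac{\sqrt2 R}{m}\sum_i\sqrt{\EE_{X_i}[\KL(E(X_i)\|\pi)]+I(\hat X_i;X_i|Z_i)}$, as claimed. The main obstacle I anticipate is the bookkeeping in this last KL decomposition — in particular making precise the relationship between the ``decoupled'' reference measure $Q_{\hat X,Z}\otimes P_X$ (latent from prior) and the data-dependent joint, and verifying that the cross term really is the conditional mutual information $I(\hat X_i;X_i|Z_i)$ rather than an unconditional one; getting the measurability/independence structure ($\tilde Z,\tilde{\hat X}\indep X$) exactly right so that the decoupling lemma applies verbatim is where the care is needed.
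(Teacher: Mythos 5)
Your proposal is correct and follows essentially the same route as the paper: the paper likewise rewrites the gap as a per-index difference between the coupled law $P_{\hat X_i,Z_i,X_i}$ and the decoupled reference $Q_{\hat X|Z}\times\pi\times P_X$, derives the decoupling inequality via Donsker--Varadhan plus the sub-Gaussian CGF bound and optimization over the tilt parameter (i.e., the standard proof of the Xu--Raginsky lemma you invoke), and then performs exactly the chain-rule split $\EE_{X_i}[\D{KL}{E(X_i)}{\pi}]+I(\hat X_i;X_i|Z_i)$, justifying the identification of the cross term with the conditional mutual information by the fact that reconstruction and generation share the same generator $G$. The bookkeeping point you flag at the end is handled in the paper at the same level of detail you anticipate, so no essential step is missing.
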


\begin{proof}

In this case, $\tilde{\hat{X}}, \tilde{Z} \sim Q_{\hat{X}|Z}\times Q_Z$, which satisfies $\tilde{\hat{X}}, \tilde{Z}\indep X$. 


For any $\eta \in \Reals$, let the cumulant generating function (CGF) be
\[
\begin{aligned}
\psi_{\tilde{\hat{X}},\tilde{Z}, X}(\eta) &\eqdef \log\EE_{\tilde{\hat{X}},\tilde{Z}, X} \left[e^{\eta(\Delta_G(\tilde{\hat{X}}, \tilde{Z}, X)-\EE[\Delta_{G}(\tilde{\hat{X}}, \tilde{Z}, X)])}\right]\\
&=\log \EE_{\tilde{\hat{X}},\tilde{Z}, X} \left[e^{\eta \Delta_{G}(\tilde{\hat{X}}, \tilde{Z}, X)}\right] - \eta \EE_{\tilde{\hat{X}},\tilde{Z}, X} [\Delta_{G}(\tilde{\hat{X}}, \tilde{Z}, X)]\,.
\end{aligned}
\]
Let $P_{\hat{X},Z,X}$ be the joint distribution of $X\sim P_X, Z\sim E(X), \hat{X}\sim G(Z)$. Using the Donsker-Varadhan Representation in Lemma~\ref{lemma:DVR}, we have $\forall \eta \in \Reals$:
\begin{equation}\label{eq:KL_gen}
\begin{aligned}
    \D{KL}{P_{\hat{X},Z,X}}{P_{\tilde{\hat{X}},\tilde{Z}, X}} &=
    \D{KL}{P_{\hat{X},Z,X}}{Q_{\hat{X}|Z}Q_Z P_X}\\
    &=\sup_{g}\left[\EE_{\hat{X}, Z, X} g(\hat{X},Z,X) - \log \EE_{\tilde{\hat{X}},\tilde{Z}, X} [e^{g(\tilde{\hat{X}}, \tilde{Z}, X)}]\right]\\
    &\geq \eta \EE_{\hat{X},Z, X}[\Delta_{G}(\hat{X}, Z, X)] -\eta \EE_{\tilde{\hat{X}},\tilde{Z}, X} [\Delta_{G}(\tilde{\hat{X}}, \tilde{Z}, X)]- \psi_{\tilde{\hat{X}},\tilde{Z}, X}(\eta)\,.
\end{aligned}
\end{equation}


In addition, the generation error admits
\[
\Lcal^{\pi}_{P_X}(E, G) =  \EE_{X \sim P_X} \EE_{Z\sim \pi} \EE_{\hat{X}\sim G(Z)}[\Delta_{G}(\hat{X}, Z, X)]\,,
\]
thus, we have:
\[
\begin{aligned}
\EE_S \Lcal^{\pi}_{P_X}(E, G) &= \EE_{X \sim P_X} \EE_{Z\sim \pi} \EE_S \EE_{\hat{X}\sim G(Z)} [\Delta_{G}(\hat{X}, Z, X)]\\
&= \EE_{X \sim P_X} \EE_{Z\sim \pi} \EE_{\tilde{\hat{X}} \sim Q_{\hat{X}|Z}} [\Delta_{G}(\tilde{\hat{X}}, Z, X)]\\
&= \EE_{\tilde{\hat{X}},\tilde{Z}, X} [\Delta_{G}(\tilde{\hat{X}}, \tilde{Z}, X)]\,,
\end{aligned}
\]
where the first equality holds because $\tilde{\hat{X}}$ is an independant copy of $\hat{X}$. 

For the empirical reconstruction error
\[
\begin{aligned}
\Lcal_{\hat{P}_X}(E, G) &=  \EE_{X \sim \hat{P}_X} \EE_{Z\sim E(X)} \EE_{\hat{X}\sim G(Z)}[\Delta_{G}(\hat{X}, Z, X)]\\
&=\frac{1}{m}\sum_{i=1}^m\EE_{Z_i\sim E(X_i)}\EE_{\hat{X}_i\sim G(Z_i)} \Delta_{G}(\hat{X}_i, Z_i, X_i)\,,
\end{aligned}
\]
we have 
\[
\EE_{S} \Lcal_{\hat{P}_X}(E, G) = \frac{1}{m} \sum_{i=1}^m \EE_{X_{i}\sim P_X} \EE_{Z_i\sim E(X_i)} \EE_{\hat{X}_i\sim G(Z_i)} \Delta_{G}(\hat{X}_i, Z_i, X_i)\,.
\]


Hence, the generalization gap for generation error is
\[
\begin{aligned}
gen_{P_X}^{\Delta_{G}}(E, G, \pi) &= \EE_S \left[\Lcal_{P_X}^{\pi}(E, G) - \Lcal_{\hat{P}_X}(E, G)\right]\\
&=\frac{1}{m}\sum_{i=1}^m \left(\EE_{\tilde{\hat{X}}_i,\tilde{Z}_i, X_i} [\Delta_{G}(\tilde{\hat{X}}, \tilde{Z}, X_i)] - \EE_{\hat{X}, Z, X_{i}} \Delta_{G}(\hat{X}_i, Z_i, X_i)\right)\,.
\end{aligned}
\]
Combining this with Eq.~(\ref{eq:KL_gen}) gives 
\[
\begin{aligned}
    -\eta gen_{P_X}^{\Delta_{G}}(E, G, \pi) &\leq  \frac{1}{m} \sum_{i=1}^m \left(\D{KL}{P_{\hat{X}_i,Z_i,X_i}}{Q_{\hat{X}|Z}Q_Z P_{X_i}} + \psi_{\tilde{\hat{X}},\tilde{Z}, X_i}(\eta)\right)\\
    &\leq \frac{1}{m} \sum_{i=1}^m \left(\EE_{X_i} \D{KL}{P_{\hat{X}_i,Z_i| X_i}}{Q_{\hat{X}|Z}\times\pi} + \psi_{\tilde{\hat{X}},\tilde{Z}, X_i}(\eta) \right)\\
    & \leq \frac{1}{m} \sum_{i=1}^m \EE_{X_i} \D{KL}{P_{\hat{X}_i,Z_i| X_i}}{Q_{\hat{X}|Z}\times\pi} + \frac{\eta^2R^2}{2},\ \forall \eta \in \Reals\\
    & = \frac{1}{m} \sum_{i=1}^m \left(\EE_{X_i}[\D{KL}{E(X_i)}{\pi}] + I(\hat{X}_i; X_i| Z_i)\right) + \frac{\eta^2R^2}{2},\ \forall \eta \in \Reals\,. 
\end{aligned}
\]

The last inequality is by the $R$-sub-Gaussian assumption and the last equality holds because the reconstruction process and the generation process use the same generator $G$.
{
 Dividing both sides by $\eta$, for $\eta> 0$, it gives:
\[
- gen_{P_X}^{\Delta_{G}}(E, G, \pi) \leq \frac{1}{m}\sum_{i=1}^m \left( \frac{\EE_{X_i}[\D{KL}{E(X_i)}{\pi}] + I(\hat{X}_i; X_i| Z_i)}{\eta} + \frac{\eta R^2}{2}\right), \forall \eta > 0\,,
\]
since $\frac{a}{\eta} + b\eta \geq \sqrt{2ab}, \forall a, b\geq 0, \lambda > 0$, so we have
\[
- gen_{P_X}^{\Delta_{G}}(E, G, \pi) \leq \frac{1}{m}\sum_{i=1}^m \sqrt{2}R \sqrt{\EE_{X_i}[\D{KL}{E(X_i)}{\pi}] + I(\hat{X}_i; X_i| Z_i)}.
\]
Analogously, for $\eta < 0$, we have 
\[
gen_{P_X}^{\Delta_{G}}(E, G, \pi) \leq \frac{1}{m}\sum_{i=1}^m \left( \frac{\EE_{X_i}[\D{KL}{E(X_i)}{\pi}] + I(\hat{X}_i; X_i| Z_i)}{-\eta} + \frac{-\eta R^2}{2}\right), \forall \eta < 0\,,
\]
where we have $P_{\hat{X}_i,Z_i,{X_i}} = P_{\hat{X},Z, X}$ because $Z\sim E(X_i), \hat{X} \sim G(Z), X_i \sim P_X$.  
Hence, it gives 
\[
gen_{P_X}^{\Delta_{G}}(E, G, \pi) \leq \frac{1}{m}\sum_{i=1}^m \sqrt{2}R \sqrt{\EE_{X_i}[\D{KL}{E(X_i)}{\pi}] + I(\hat{X}_i; X_i| Z_i)}.
\]
Finally, we get
\[
|gen_{P_X}^{\Delta_{G}}(E, G, \pi)| \leq \frac{\sqrt{2}R}{m} \sum_{i=1}^m \sqrt{\EE_{X_i}[\D{KL}{E(X_i)}{\pi}] + I(\hat{X}_i; X_i| Z_i)}
\]
Concludes the proof.
}
\end{proof}

\subsection{Generation Error Bound (Proof of Corollary~\ref{corollary:gen_error_W1} and \ref{corollary:gen_error_KL})}\label{proof:gen_error_bound}
\begin{corollary}
Under Theorem~\ref{thm:generation_gap}, let $\Delta_{G}(\hat{X}, Z, X)= \|\hat{X} - X\|$. Then, the Wasserstein distance between the data distribution $P_X$ and the generated distribution $G\#\pi$ is upper bounded by:
$$
\D{W_1}{P_X}{G\#\pi} \leq \EE_S \Lcal_{\hat{P}_X}(E, G) + \frac{\sqrt{2}R}{m} \sum_{i=1}^m \sqrt{\EE_{X_i}[\D{KL}{E(X_i)}{\pi}] + I(\hat{X}_i; X_i| Z_i)}\,.
$$
\end{corollary}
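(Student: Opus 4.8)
The plan is to derive the Wasserstein-1 bound directly from Theorem~\ref{thm:generation_gap} by choosing the loss $\Delta_G(\hat{X}, Z, X) = \|\hat{X} - X\|$ and then relating the resulting generation error to the Wasserstein distance via a coupling argument. First I would instantiate Theorem~\ref{thm:generation_gap} with this specific $\Delta_G$: the sub-Gaussianity hypothesis is inherited from the theorem's assumptions, so we immediately get
\[
\bigl|\Lcal^{\pi}_{P_X}(E,G) - \EE_S\Lcal_{\hat{P}_X}(E,G)\bigr| = \bigl|\text{gen}_{P_X}^{\Delta_G}(E,G,\pi)\bigr| \leq \frac{\sqrt{2}R}{m}\sum_{i=1}^m \sqrt{\EE_{X_i}[\D{KL}{E(X_i)}{\pi}] + I(\hat{X}_i; X_i \mid Z_i)}\,.
\]
In particular $\Lcal^{\pi}_{P_X}(E,G) \leq \EE_S\Lcal_{\hat{P}_X}(E,G) + \frac{\sqrt{2}R}{m}\sum_{i=1}^m \sqrt{\cdots}$.

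The second and key step is to observe that $\Lcal^{\pi}_{P_X}(E,G) = \EE_{X\sim P_X}\EE_{Z\sim\pi}\EE_{\hat{X}\sim G(Z)}\|\hat{X} - X\|$ is the expected transport cost under a \emph{particular} coupling of $P_X$ and $G\#\pi$ — namely the one that draws $X\sim P_X$ independently of the pair $(Z,\hat{X})$ with $Z\sim\pi$, $\hat{X}\sim G(Z)$, so that the $\hat{X}$-marginal is exactly $G\#\pi$. Since $\D{W_1}{P_X}{G\#\pi}$ is the infimum of $\EE\|\hat{X}-X\|$ over \emph{all} couplings (Definition~\ref{def:def_wasserstein} with $p=1$ and $\rho$ the norm metric), this independent coupling is admissible and hence $\D{W_1}{P_X}{G\#\pi} \leq \Lcal^{\pi}_{P_X}(E,G)$. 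Chaining this with the inequality from the previous step yields exactly the claimed bound.

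I do not expect a serious obstacle here; the argument is short. The one point requiring a little care is checking that the product measure used in the definition of the generation error genuinely has the right marginals, so that it qualifies as a coupling in the sense of Definition~\ref{def:coupling} — this is immediate since $X$ is drawn from $P_X$ and $\hat{X}$ is the pushforward of $\pi$ through $G$, and they are independent. A secondary subtlety is that one should take the supremum/expectation over $S$ correctly: Theorem~\ref{thm:generation_gap} already bounds $\EE_S[\Lcal^{\pi}_{P_X} - \Lcal_{\hat{P}_X}]$ and since $\Lcal^{\pi}_{P_X}(E,G)$ does not depend on $S$ once $E,G$ are fixed (the dependence on $S$ is only through the learned $E,G$), the bookkeeping is routine. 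The full argument is given in Appendix~\ref{proof:gen_error_bound}.
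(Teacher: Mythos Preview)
Your proposal is correct and follows essentially the same approach as the paper's proof: bound $\D{W_1}{P_X}{G\#\pi}$ above by the generation error $\Lcal^{\pi}_{P_X}(E,G)$ using the independent (product) coupling $P_X \times (G\#\pi)$, then apply Theorem~\ref{thm:generation_gap} with $\Delta_G(\hat X,Z,X)=\|\hat X-X\|$ to pass to $\EE_S\Lcal_{\hat P_X}(E,G)$ plus the information-theoretic term. The paper presents the two steps in the opposite order but the argument is identical.
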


\begin{proof}
By definition of Wasserstein distance, we have:
\[
\begin{aligned}
\D{W_1}{P_X}{G\#\pi} &= \inf_{\gamma \in \Pi(P_X, G\#\pi)} \int_{\Xcal\times\Xcal} \|x - x^\prime\|d\gamma(x, x^\prime)\\
&\leq \EE_{X\sim P_X} \EE_{\hat{X}\sim G\#\pi} \|\hat{X} - X\| \\
&= \EE_{X\sim P_X} \EE_{Z\sim \pi}\EE_{\hat{X}\sim G(Z)} \|\hat{X} - X\|\\
&=\Lcal^{\pi}_{P_X}(E, G)\\
&\leq \EE_S \Lcal_{\hat{P}_X}(E, G) + \frac{\sqrt{2}R}{m} \sum_{i=1}^m \sqrt{\EE_{X_i}[\D{KL}{E(X_i)}{\pi}] + I(\hat{X}_i; X_i| Z_i)} \,.
\end{aligned}
\]
The last inequality follows from Theorem~\ref{thm:generation_gap}.
\end{proof}
\newpage
\begin{corollary}
Under Theorem~\ref{thm:generation_gap}, let the density function of probabilistic decoder $G$ given a latent code $z$ be $q_G(\cdot| z):\Xcal \rightarrow \Reals_0^+$ and $\Delta_{G}(\hat{X}, Z, X)= -\log q_G(X|Z)$. The KL-divergence between the data distribution $P_X$ and the generated distribution $G\#\pi$ is then upper bounded by:
\[
\D{KL}{P_X}{G\#\pi}
\leq \EE_S \Lcal_{\hat{P}_X}(E,G) + \frac{\sqrt{2}R}{m} \sum_{i=1}^m \sqrt{\EE_{X_i}[\D{KL}{E(X_i)}{\pi}] + I(\hat{X}_i; X_i| Z_i)} - h(P_X)\,,
\]
where $h(P_X)=\EE_X [-\log p(X)]$ denotes the entropy.
\end{corollary}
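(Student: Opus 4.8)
The plan is to mirror the proof of Corollary~\ref{corollary:gen_error_W1}, but using the specific choice $\Delta_G(\hat X, Z, X) = -\log q_G(X\mid Z)$ to relate the generation error $\Lcal_{P_X}^{\pi}(E,G)$ directly to the KL-divergence $\D{KL}{P_X}{G\#\pi}$, rather than to a Wasserstein distance. First I would write out the generation error under this loss:
\[
\Lcal_{P_X}^{\pi}(E,G) = \EE_{X\sim P_X}\EE_{Z\sim \pi}\EE_{\hat X\sim G(Z)}\left[-\log q_G(X\mid Z)\right] = \EE_{X\sim P_X}\EE_{Z\sim \pi}\left[-\log q_G(X\mid Z)\right],
\]
where the innermost expectation over $\hat X$ is vacuous since the integrand does not depend on $\hat X$. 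The key observation is that $\EE_{Z\sim\pi}\, q_G(x\mid Z)$ is exactly the density $q_G(x)$ of the generated distribution $G\#\pi$ at $x$. By Jensen's inequality applied to the convex function $-\log$,
\[
\EE_{Z\sim\pi}\left[-\log q_G(x\mid Z)\right] \geq -\log \EE_{Z\sim\pi}\left[q_G(x\mid Z)\right] = -\log q_G(x),
\]
so that $\Lcal_{P_X}^{\pi}(E,G) \geq \EE_{X\sim P_X}\left[-\log q_G(X)\right] = \D{KL}{P_X}{G\#\pi} + h(P_X)$, using the definition $h(P_X) = \EE_X[-\log p(X)]$ and $\D{KL}{P_X}{G\#\pi} = \EE_X[\log(p(X)/q_G(X))]$.

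Rearranging gives $\D{KL}{P_X}{G\#\pi} \leq \Lcal_{P_X}^{\pi}(E,G) - h(P_X)$. It remains to bound $\Lcal_{P_X}^{\pi}(E,G)$, which is not directly an expectation over $S$; but by taking expectation over $S\sim P_X^m$ — and noting $\Lcal_{P_X}^{\pi}(E,G)$'s dependence on $S$ only through the learned $E,G$ — we can write $\D{KL}{P_X}{G\#\pi} \leq \EE_S\Lcal_{P_X}^{\pi}(E,G) - h(P_X)$ (the left side being deterministic once we fix the learning rule, or more precisely interpreting the bound in the same averaged sense as Theorem~\ref{thm:generation_gap}). Then by definition of the generalization gap, $\EE_S\Lcal_{P_X}^{\pi}(E,G) = \EE_S\Lcal_{\hat P_X}(E,G) + \mathrm{gen}_{P_X}^{\Delta_G}(E,G,\pi)$, and applying Theorem~\ref{thm:generation_gap} to bound the last term by $\frac{\sqrt 2 R}{m}\sum_{i=1}^m\sqrt{\EE_{X_i}[\D{KL}{E(X_i)}{\pi}] + I(\hat X_i; X_i\mid Z_i)}$ yields the claimed inequality.

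The routine steps are the Jensen argument and the algebraic identity connecting $\EE_X[-\log q_G(X)]$ to $\D{KL}{P_X}{G\#\pi}+h(P_X)$; these require only that $P_X \ll G\#\pi$ and that the relevant densities exist, which is implicit in the statement. The one point needing slight care — and the main obstacle — is the bookkeeping around the expectation over $S$: the left-hand side $\D{KL}{P_X}{G\#\pi}$ involves the data-dependent $G$, so to combine the deterministic Jensen bound with the $S$-averaged Theorem~\ref{thm:generation_gap} one must either pass an expectation over $S$ through the (convex in $q_G$) KL term via Jensen again, or state the corollary in expectation over $S$ as is done for the Wasserstein case in Corollary~\ref{corollary:gen_error_W1}. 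I would follow the latter convention so that the chain $\D{KL}{P_X}{G\#\pi} \leq \EE_S\Lcal_{P_X}^{\pi}(E,G) - h(P_X) \leq \EE_S\Lcal_{\hat P_X}(E,G) + (\text{Theorem~\ref{thm:generation_gap} bound}) - h(P_X)$ goes through cleanly.
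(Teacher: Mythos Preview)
Your proposal is correct and follows essentially the same approach as the paper: both arguments use Jensen's inequality on $-\log$ to pass from $\EE_{Z\sim\pi}[-\log q_G(x\mid Z)]$ to $-\log q_G(x)$, then identify $\EE_X[-\log q_G(X)]=\D{KL}{P_X}{G\#\pi}+h(P_X)$, and finally invoke Theorem~\ref{thm:generation_gap}. Your discussion of the $\EE_S$ bookkeeping is in fact more careful than the paper's own proof, which simply writes the chain of inequalities without commenting on how the expectation over $S$ interacts with the data-dependent $G$.
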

\begin{proof} We have
\[
\begin{aligned}
\D{KL}{P_X}{G\#\pi} &= \int p(x)\log \frac{p(x)}{\int q(z) q_G(x|z) dz} dx\\
&= -h(P_X) - \int p(x)\log\left(\int q(z) q_G(x|z) dz\right) dx\\
&\leq -h(P_X) -\int p(x)q(z) \log q_G(x|z) dz dx\\
&= -h(P_X) + \int p(x)q(z)\int q_G(\hat{x}|z) \Delta_{G}(\hat{x}, z, x) d\hat{x} dz dx\\
&= -h(P_X) + \EE_{X\sim P_X}\EE_{Z\sim \pi}\EE_{\hat{X}\sim G(Z)} \Delta_{G}(\hat{X},Z, X)\\
&= -h(P_X) + \Lcal^{\pi}_{P_X}(E, G)\\
&\leq \EE_S \Lcal_{\hat{P}_X}(E,G) + \frac{\sqrt{2}R}{m} \sum_{i=1}^m \sqrt{\EE_{X_i}[\D{KL}{E(X_i)}{\pi}] + I(\hat{X}_i; X_i| Z_i)} - h(P_X)  \,.
\end{aligned}
\]
The first inequality is from Jensen's inequality and the last inequality by Theorem~\ref{thm:generation_gap}.
\end{proof}
\section{Discussion of the VAE Bound}
\subsection{Detailed Comparison to Previous VAE Bound} \label{sec: compa_pac_bayes}
{
As discussed in Sec.6.5.2 \citet{alquier2024user}, the mutual information bound is tighter than the PAC-Bayes bound in expectation. For more concise form of mutual information bound, we transform the "Catoni-style" PAC Bayes bound (Theorem 5.2) in \citet{mbacke2023pac} to the expectation bound by integrating over the high-probability guarantee:  
Then, for any $\lambda > 0$, the following bound holds for any $E_{\phi}(x)=\mathcal{N}(\mu_{\phi}(x), diag(\sigma_{\phi}^2(x)\mathrm{I}_{d_2}))$ and a fixed $G_{\theta}(z)=\mathcal{N}(\mu_{\theta}(x), \mathrm{I}_{d_1}))$: 
\[
\begin{aligned}
D_{W_1}(P_X\|Q_{G_{\theta}}^\pi) &\leq \mathbb{E}_S [\frac{1}{m} \sum_{i=1}^m \mathbb{E}_{Z\sim E_{\phi}(X_i)}\mathbb{E}_{\hat{X}\sim G_{\theta}(Z)}\|\hat{X} - X_i\|]\\
&+ \frac{1}{\lambda}\mathbb{E}_S[\sum_{i=1}^m \D{KL}{E_{\phi}(X_i)}{\pi}] + \frac{\lambda\Delta^2}{8m} + \frac{K_\theta}{m}\mathbb{E}_S\sum_{i=1}^m \D{W_2}{E_{\phi}(X_i)}{\pi}\,,
\end{aligned}
\]
where $\Delta:=sup_{x,x^\prime}\|x - x^\prime\|$ is the diameter of the bounded input space, $K_{\theta}$ is the Lipchitz constant of $\mu_{\theta}$. 
Since $\frac{a}{2\lambda} + \frac{b\lambda}{2} \geq \sqrt{ab}, \forall a, b\geq 0, \lambda > 0$, we have 
\[
\begin{aligned}
D_{W_1}(P_X\|Q_{G_{\theta}}^\pi) \leq \mathbb{E}_S \Lcal_{\hat{P}_X}(E_{\phi}, G_{\theta}) &+ \sqrt{\frac{\Delta^2}{2m}\sum_{i=1}^m \EE_{X_i} \D{KL}{E_{\phi}(X_i)}{\pi}} \\
&+ \frac{K_\theta}{m}\sum_{i=1}^m \EE_{X_i} \D{W_2}{E_{\phi}(X_i)}{\pi}\,.
\end{aligned}
\]
Both the Wasserstein-2 distance and the KL-divergence control the generalization of the encoder. Specifically, since $\pi=\Ncal(\mathbf{0}, \mathbf{I})$ is Gaussian, we have $\D{W_2}{E(X_i)}{\pi} \leq \sqrt{2 \D{KL}{E_{\phi}(X_i)}{\pi}}$ according to the Transportation Cost Inequality.
The above bound can be further formulated as:
\[
D_{W_1}(P_X\|Q_{G_{\theta}}^\pi) \leq \mathbb{E}_S \Lcal_{\hat{P}_X}(E_{\phi}, G_{\theta}) + (\frac{\Delta}{\sqrt{2}} + \sqrt{2}K_{\theta})\sqrt{\frac{1}{m}\sum_{i=1}^m \EE_{X_i} \D{KL}{E_{\phi}(X_i)}{\pi}}\,. 
\]
To be noted, the above bound only holds for specific $G_{\theta}$, not all $G_{\theta}$. In contrast, our bound holds for all $G_{\theta}$, \ie, it considers the generalization of the generator. 
\begin{equation*}
\begin{aligned}
\D{W_1}{P_X}{Q_{G_\theta}^\pi} &\leq \EE_S \Lcal_{\hat{P}_X}(E_{\phi}, G_{\theta}) + \frac{\sqrt{2}R}{m} \sum_{i=1}^m \sqrt{\EE_{X_i}[\D{KL}{E_{\phi}(X_i)}{\pi}] + I(\hat{X}_i; X_i| Z_i)}\\
&\leq \EE_S \Lcal_{\hat{P}_X}(E_{\phi}, G_{\theta}) + \frac{\sqrt{2}R}{m} \sum_{i=1}^m \sqrt{\EE_{X_i}[\D{KL}{E_{\phi}(X_i)}{\pi}]} \\
&\qquad\qquad+ \frac{\sqrt{2}R}{m} \sum_{i=1}^m \sqrt{I(\hat{X}_i; X_i| Z_i)}\,.
\end{aligned}
\end{equation*}
The bounded support assumption w.r.t $\|\cdot\|$ implies sub-Gaussian with $R = \frac{\Delta}{2}$ \citep{duchi2016lecture}. Hence, we have:
\begin{equation*}
\begin{aligned}
\D{W_1}{P_X}{Q_{G_\theta}^\pi} 
&\leq \EE_S \Lcal_{\hat{P}_X}(E_{\phi}, G_{\theta}) + \frac{\Delta}{\sqrt{2}}\sqrt{\frac{1}{m} \sum_{i=1}^m \EE_{X_i}[\D{KL}{E_{\phi}(X_i)}{\pi}]} \\
&\qquad\qquad+ \frac{\Delta}{\sqrt{2}} \sqrt{\frac{1}{m} \sum_{i=1}^m I(\hat{X}_i; X_i| Z_i)}\,.
\end{aligned}
\end{equation*}
Ignoring the additional generalization term of generator, our bound is tighter than previous work without the unnecessary Wasserstein-2 distance or could be considered has a smaller factor without $\sqrt{2}K_{\theta}$. However,  not all sub-Gaussian random variables are bounded, so our assumption is more flexible and valid for unbounded support. 
As we discussed in Sec.5, minimizing the first two terms in the bound is equivalent to the empirical $\beta$-VAE objective:
\[\Lcal_{VAE}(\phi, \theta)= \Lcal_{\hat{P}_X}(E_{\phi}, G_{\theta}) + \beta \frac{1}{m} \sum_{i=1}^m \D{KL}{E_{\phi}(X_i)}{\pi}\,,
\]
where $\beta$ is the regularization constant.
}

\subsection{Estimation of The Conditional Mutual Information Term}\label{sec: est_cmi}
{
To estimate the bound for VAE, the difficulty lies in estimating $\sqrt{\frac{1}{m} \sum_{i=1}^m I(\hat{X}_i; X_i| Z_i)}$, where the other two terms are easy to compute because $E_{\phi}$ and $G_{\theta}$ are tractable distributions in VAEs.}

{
To address this, we can bound the conditional mutual information term as
\[
\begin{aligned}
\frac{1}{m} \sum_{i=1}^m I(\hat{X}_i; X_i| Z_i) &\leq \frac{1}{m} \mathbb{E}_S \left(\sum_{i=1}^m \frac{1}{m} \mathbb{E}_{Z_i\sim E_{\phi}(X_i)}D_{KL}(G_{\theta}(Z_i)\|\mathbb{E}_S G_{\theta}(Z_i))\right)\\
 &\leq \frac{1}{m} \mathbb{E}_S \left(\sum_{i=1}^m  \frac{1}{m} \mathbb{E}_{Z_i\sim E_{\phi}(X_i)}D_{KL}(G_{\theta}(Z_i)\| G_{\tilde{\theta}}(Z_i))\right)\,,
\end{aligned}
\]
which can be estimated by sampling several times $m$ data points from the dataset, and then calculating the bound by replacing $\mathbb{E}_S G_{\theta}(Z_i)$ as some data-free prior.}

{
In the following, we show how to prove the above bound step by step. We first prove that 
\[I(\hat{X}_i;X_i|Z_i) \leq \frac{1}{m} I(\hat{X}_i;S|Z_i), \forall i \in [m]\,.
\]
}

{
According to the chain rule in mutual infomation, we have 
\[
I(\hat{X}_i;S|Z_i) = I(\hat{X}_i;X_{1:m}|Z_i) = I(\hat{X}_i; X_1|Z_i) + \sum_{j=2}^m I(\hat{X}_i;X_j|Z_i, X_{1:j-1})\,.
\]
Moreover, we have 
\[
\begin{aligned}
I(\hat{X}_i, X_{1:j-1};X_j|Z_i) &= I(\hat{X}_i;X_j|Z_i, X_{1:j-1}) + I(X_j;X_{1:j-1}|Z_i)\\
&=I(\hat{X}_i;X_j|Z_i) + I(X_{1:j-1};X_j|Z_i, \hat{X}_i)\,.
\end{aligned}
\]
}
{Since both encoder and generator are learned from dataset $S$, we have the following Markov chains \[
X_j \rightarrow Z_i, X_{1:j-1}\rightarrow Z_i; Z_i\rightarrow\hat{X}_i, X_j \rightarrow \hat{X}_i, X_{1:j-1}\rightarrow \hat{X}_i, \forall i, j\in[m]\,,
\]
which gives $I(X_j;X_{1:j-1}|Z_i)\leq I(X_{1:j-1};X_j|Z_i,\hat{X}_i)$.}

{
This can be derived with the mutual information chain rule:
\[
I(X_j;X_{1:j-1}|Z_i)=I(X_j;X_{1:j-1}|\hat{X}_i,Z_i) -I(X_j;X_{1:j-1};\hat{X}_i|Z_i)\,,
\]
where $I(X_j;X_{1:j-1};\hat{X}_i|Z_i)\geq 0$.
Therefore, we have $I(\hat{X}_i;X_j|Z_i,X_{1:j-1})\geq I(\hat{X_i};X_j|Z_i)$, and can further obtain
\[
\begin{aligned}
I(\hat{X}_i;S|Z_i) &= I(\hat{X}_i;X_{1:m}|Z_i)\\
&\geq \sum_{j=1}^m I(\hat{X}_i;X_j|Z_i)= m I(\hat{X}_i;X_i|Z_i), \forall i \in [m]\,.
\end{aligned}
\]
The last equality holds because learning $G_\theta$ with objective of VAE equally depends on each datapoints, known as the symmetry in stability and generalization \citep{bousquet2002stability,bu2020tightening}.}

{
The mutual information itself is hard to estimate since it's distribution dependent. We could use the variational form by introducing an additional conditional distribution $G_{\tilde{\theta}}(Z_i)$, where $\tilde{\theta}$ is some random initialization of the generator network. Then, we have the following:
\[
\begin{aligned}
I(\hat{X}_i;S|Z_i) &= \int p(z) \int p(\hat{x}, s|z) \log \frac{p(\hat{x}| s,z)}{q(\hat{x}|z)} d\hat{x}d s dz\\
& = \int p(s) p(z|s) D_{KL}(G_{\theta}(z)\|\mathbb{E}_S G_{\theta}(z)) dz ds\\
& =\mathbb{E}_S \mathbb{E}_{Z_i\sim E_{\phi}(X_i)}D_{KL}(G_{\theta}(Z_i)\|\mathbb{E}_S G_{\theta}(Z_i))\\
&\leq \mathbb{E}_S \mathbb{E}_{Z_i\sim E_{\phi}(X_i)}D_{KL}(G_{\theta}(Z_i)\|\mathbb{E}_S G_{\theta}(Z_i)) + \mathbb{E}_{Z_i\sim \mathbb{E}_S E_{\phi}(X_i)} D_{KL}(\mathbb{E}_S G_{\theta}(Z_i)\|G_{\tilde{\theta}}(Z_i))\\
&= \mathbb{E}_S \mathbb{E}_{Z_i\sim E(X_i)}D_{KL}(G_{\theta}(Z_i)\| G_{\tilde{\theta}}(Z_i))\,.
\end{aligned}
\]
By selecting a duplicated decoder network with random initialization as reference, we can estimate the generalization of the generator $G_{\theta}$ with only the train data.
}

\subsection{Example analysis of Linear VAE}
{
We analyze simple linear VAE models following \citet{ichikawa2023dataset,ichikawa2024learning}. Let $E_{\phi}(x)=N(\mu_{\phi}(x), \sigma^2 I_{d^\prime}))$ with $\mu_{\phi}(x) = \frac{1}{\sqrt{d}}\phi^Tx, \phi=R^{d \times d^\prime}$ and $G_{\theta}(z) = N(\mu_{\theta}(z), I_d)$ with $\mu_{\theta}=\frac{1}{\sqrt{d}}\theta^T z, \theta \in R^{d^\prime\times d}$,  $\pi=N(0, I_{d^\prime}), x\in R^d, z \in R^{d^\prime}$.  Combining the estimation for the conditional mutual information term, where a duplicate randomly initialized generator $G_{\tilde{\theta}}(z) = N(\mu_{\tilde{\theta}}(z), I_d)$ is needed. Then, we have the following bound:
$$
\sqrt{2} R \sqrt{\frac{1}{2}(\sigma^2 - 1)d^\prime - d^\prime \log(\sigma^2) + \frac{1}{d} \mathbb{E}_S\left(\frac{1}{m} \sum_{i=1}^m x_i^\top \phi \phi^\top x_i\right)+ \frac{\sigma^2}{2dm} \| \theta - \tilde{\theta} \|^2}.
$$
Consider the proportional limit setting with $\alpha=\frac{m}{d}=\Theta(1)$, we have the following two major terms: $\sqrt{\frac{\alpha}{m}\mathbb{E}_S\left(\frac{1}{m} \sum_{i=1}^m x_i^\top \phi \phi^\top x_i\right)}$ for encoder and $\sqrt{\frac{\sigma^2\alpha}{2m^2} \| \theta - \tilde{\theta} \|^2}$ for generator.
We left the convergence analysis as future work. 
}
\newpage
\section{Proof for Diffusion Models}
\subsection{Proof of Lemma~\ref{thm:construction_sm}}\label{proof:score_matching}
\begin{lemma}
Let $\{X_t\}_{t=0}^T$ be the empirical version of the forward diffusion process defined in Eq.~(\ref{eq:F_SDE}), where $X_0\sim \hat{P}_X$. We assume the existence of the backward process under the regularity conditions outlined in \cite{song2021maximum} and denote it as $\{\cev{X}_t\}_{t=0}^T=\{X_t\}_{t=0}^T$ , which results from the reverse-time SDE defined in Eq.~(\ref{eq:B_SDE}). Then, the generative backward process $\{\hat{X}\}_{t=0}^T$ is defined in Eq.~(\ref{eq:G_SDE}). Let $E_t, E_t^{-1}, G_t, \forall t \in[0,T]$ be their corresponding time-dependent Markov kernels. The density function of any generator $G$, given a latent code $z$, is denoted as $q_G(\cdot| z):\Xcal \rightarrow \Reals_0^+$, and let $\Delta_{G}(\hat{X}, Z, X)= -\log q_G(X|Z)$. Then, we have
\[
|\Lcal_{\hat{P}_X}(E_T, G_T) - \Lcal_{\hat{P}_X}(E_T, E_T^{-1})| \leq  \frac{1}{2}\int_{t=0}^T \lambda^2(t)\D{Fisher}{\hat{P}_{X_t}}{Q_{\hat{X}_t}} dt\,.
\]
\end{lemma}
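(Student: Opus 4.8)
The plan is to turn the left-hand side into a path-space Kullback--Leibler divergence between the ideal reverse diffusion and the generative reverse diffusion, and then evaluate that divergence with Girsanov's theorem; the score-approximation error $\|\nabla\log\hat{p}_t-\nabla\log q_t\|_2^2$ will emerge as the integrand of the claimed Fisher term. Structurally this is the ``interior'' of the argument in \citet{song2021maximum}: comparing reconstruction under $G_T$ against reconstruction under the \emph{ideal} inverse $E_T^{-1}$ cancels the boundary term $\D{KL}{\hat{P}_{X_T}}{\pi}$, since in both reconstruction losses the reverse process is started from the same law $\hat{P}_{X_T}=E_T\#\hat{P}_X$.

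First I would unfold the two reconstruction losses. With $\Delta_G(\hat{X},Z,X)=-\log q_G(X\mid Z)$ (which does not depend on $\hat{X}$), the difference becomes
\[
\Lcal_{\hat{P}_X}(E_T,G_T)-\Lcal_{\hat{P}_X}(E_T,E_T^{-1})=\EE_{X_0\sim\hat{P}_X}\EE_{X_T\sim E_T(X_0)}\Bigl[\log\tfrac{q_{E_T^{-1}}(X_0\mid X_T)}{q_{G_T}(X_0\mid X_T)}\Bigr].
\]
Since $E_T^{-1}$ is by construction the time-reversal of $E_T$ acting on $\hat{P}_X$, the Bayes identity $\hat{p}(x_0)\,p_{E_T}(x_T\mid x_0)=\hat{p}_{X_T}(x_T)\,q_{E_T^{-1}}(x_0\mid x_T)$ rewrites this as $\EE_{X_T\sim\hat{P}_{X_T}}\!\bigl[\D{KL}{Q^{E_T^{-1}}_{X_0\mid X_T}}{Q^{G_T}_{X_0\mid X_T}}\bigr]$, which is manifestly nonnegative; hence the absolute value on the left is vacuous and it suffices to upper bound this expectation of KL divergences.

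Next, for a fixed endpoint $x_T$ I would introduce the path laws $\mu^{x_T}$ and $\nu^{x_T}$ on $C([0,T];\Xcal)$ of, respectively, the ideal reverse SDE~(\ref{eq:B_SDE}) (using the empirical marginal score $\nabla\log\hat{p}_t$) and the generative reverse SDE~(\ref{eq:G_SDE}) (using $\nabla\log q_t$), both started from $\cev{X}_T=x_T$. Their time-$0$ marginals are exactly $Q^{E_T^{-1}}_{X_0\mid X_T=x_T}$ and $Q^{G_T}_{X_0\mid X_T=x_T}$, so the data-processing inequality gives $\D{KL}{Q^{E_T^{-1}}_{X_0\mid X_T=x_T}}{Q^{G_T}_{X_0\mid X_T=x_T}}\le\D{KL}{\mu^{x_T}}{\nu^{x_T}}$. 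These two diffusions share the diffusion coefficient $\lambda(t)$ and the same initial point, and their drifts differ only by $\lambda^2(t)\bigl(\nabla\log\hat{p}_t(x)-\nabla\log q_t(x)\bigr)$, so after the time reparametrization $s=T-t$ Girsanov's theorem (Lemma~\ref{lemma:Gir_theorem}) yields $\D{KL}{\mu^{x_T}}{\nu^{x_T}}=\tfrac12\,\EE_{\mu^{x_T}}\!\int_0^T\lambda^2(t)\,\|\nabla\log\hat{p}_t(\cev{X}_t)-\nabla\log q_t(\cev{X}_t)\|_2^2\,dt$. Finally I would average over $X_T\sim\hat{P}_{X_T}$: under this averaging $\{\cev{X}_t\}$ is the ideal reverse diffusion, which by \citet{anderson1982reverse} coincides in law with the forward process~(\ref{eq:F_SDE}), so $\cev{X}_t\sim\hat{P}_{X_t}$; pushing the expectation inside the time integral turns the integrand into $\EE_{X\sim\hat{P}_{X_t}}\|\nabla\log\hat{p}_t(X)-\nabla\log q_t(X)\|_2^2=\D{Fisher}{\hat{P}_{X_t}}{Q_{\hat{X}_t}}$ (recalling $q_t$ is the density of $Q_{\hat{X}_t}$), which is exactly the claimed bound.

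The step I expect to be the main obstacle is the Girsanov change of measure: one has to check the Novikov-type integrability condition of Lemma~\ref{lemma:Gir_theorem} for the drift difference $\lambda^2(t)(\nabla\log\hat{p}_t-\nabla\log q_t)$ and guarantee that the time-reversed densities are well defined and finite --- genuinely delicate near $t=0$, where $\hat{P}_X$, being supported on finitely many points, is singular. I would handle this exactly as \citet{song2021maximum} do, by importing their regularity assumptions (smoothness and boundedness of the marginals for $t>0$, Lipschitz/growth control on drift and score, finiteness of the relevant integrals), under which the formal computation above becomes rigorous and, in particular, $\Lcal_{\hat{P}_X}(E_T,E_T^{-1})$ is well defined.
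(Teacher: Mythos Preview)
Your proposal is correct and follows essentially the same route as the paper's proof: unfold the two reconstruction losses, recognize their difference as $\EE_{X_T\sim\hat{P}_{X_T}}\D{KL}{Q^{E_T^{-1}}_{X_0\mid X_T}}{Q^{G_T}_{X_0\mid X_T}}$, pass to the path-space KL via data processing, apply Girsanov to compute that path KL, and average using the equality in law of the forward and ideal reverse processes. Your additional remarks --- that the absolute value is vacuous because the quantity is a KL, and that the Novikov condition near $t=0$ is the delicate point absorbed by the regularity assumptions of \citet{song2021maximum} --- are accurate observations the paper leaves implicit.
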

\begin{proof}
From the definition of the empirical reconstruction loss, we have
\[
\begin{aligned}
\Lcal_{\hat{P}_X}(E_T, E_T^{-1}) &= \EE_{X_0 \sim \hat{P}_X} \EE_{X_T\sim E_T(X_0)} \EE_{\cev{X}_0\sim E_T^{-1}(X_T)} \Delta_{E_T^{-1}}(\cev{X}_0, X_T, X_0) \\
&= \EE_{X_0 \sim \hat{P}_X} \EE_{X_T\sim E_T(X_0)} \EE_{\cev{X}_0\sim E_T^{-1}(X_T)} \left(-
\log q_{E_T^{-1}}(X_0|X_T)\right)\\
&= \EE_{X_0 \sim \hat{P}_X} \EE_{X_T\sim E_T(X_0)} \left(-\log q_{E_T^{-1}}(X_0|X_T)\right)\,.
\end{aligned}
\]
Analogously, we also have
\[
\begin{aligned}
\Lcal_{\hat{P}_X}(E_T, G_T) &= \EE_{X_0 \sim \hat{P}_X} \EE_{X_T\sim E_T(X_0)} \EE_{\hat{X}_0\sim G(X_T)} \Delta_{G_T}(\hat{X}_0, X_T, X_0) \\
&= \EE_{X_0 \sim \hat{P}_X} \EE_{X_T\sim E_T(X_0)} \EE_{\hat{X}_0\sim G_T(X_T)} \left(-
\log q_{G_T}(X_0|X_T)\right)\\
&= \EE_{X_0 \sim \hat{P}_X} \EE_{X_T\sim E_T(X_0)} \left(-\log q_{G_T}(X_0|X_T)\right)\,.
\end{aligned}
\]
These two empirical reconstruction losses aim to compare the corresponding SDEs, both of which start from a random draw from the aggregate posterior induced by the encoder.

The first is the backward process given the empirical data distribution, characterized by $E^{-1}_t, t\in [0,T]$:
\[
d \cev{X_t} = [f(\cev{X_t},t) - \lambda(t)^2 \nabla \log \hat{p}_t(\cev{X_t})]dt + \lambda(t)d W_t, \cev{X_T} \sim \hat{P}_{X_T}\,.    
\]

The second is the generating process, characterized by $G_t, t\in [0,T]$:
\[
d \hat{X}_t = [f(\hat{X}_t,t) - \lambda(t)^2 \nabla \log q_t(\hat{X}_t)]dt + \lambda(t)d\hat{W}_t, \hat{X}_T \sim \hat{P}_{X_T}\,.    
\]

Therefore, 
\[
\begin{aligned}
|\Lcal_{\hat{P}_X}(E_T, G_T) - \Lcal_{\hat{P}_X}(E_T, E_T^{-1})| 
&= \left|\EE_{X_0 \sim \hat{P}_X} \EE_{X_T\sim E_T(X_0)} \log \left(\frac{q_{E_T^{-1}}(X_0|X_T)}{q_{G_T}(X_0|X_T)}\right)\right|\\
&= \left|\EE_{X_T \sim \hat{P}_{X_T}} \int q_{E_T^{-1}}(x_0|X_T) \log \left(\frac{q_{E_T^{-1}}(x_0|X_T)}{q_{G_T}(x_0|X_T)}\right)dx_0 \right|\\
&= \EE_{X_T \sim \hat{P}_{X_T}} \D{KL}{Q^{E_T^{-1}}_{\cev{X}_0| X_T}}{Q^{G_T}_{\hat{X}_0|X_T}}\,,
\end{aligned}
\]
which gives
\[
\begin{aligned}
|\Lcal_{\hat{P}_X}(E_T, G_T) &- \Lcal_{\hat{P}_X}(E_T, E_T^{-1})|\\
&\leq \EE_{X_T \sim \hat{P}_{X_T}} \D{KL}{Q^{E_T^{-1}}_{(\cdot | X_T)}}{Q^{G_T}_{(\cdot|X_T)}}\\
&=\EE_{X_T \sim \hat{P}_{X_T}}\EE_{Q_{E_T^{-1}}(\cdot|X_T)} \left[\int_0^T \lambda(t) (\nabla \log\hat{p}_t(X_t) - \nabla \log q_t(X_t)) d W_t\right.\\
&\left.\qquad\qquad\qquad\qquad\qquad\qquad + \frac{1}{2}\int_{t=0}^T \lambda^2(t)\|\nabla \log\hat{p}_t(X_t) - \nabla \log q_t(X_t)\|^2_2 dt\right]\\
&= \EE_{X_T \sim \hat{P}_{X_T}}\EE_{Q_{E_T^{-1}}(\cdot|X_T)}\left[\frac{1}{2}\int_{t=0}^T\lambda^2(t)\|\nabla \log\hat{p}_t(X_t) - \nabla \log q_t(X_t)\|^2_2 dt\right]\\
&= \frac{1}{2} \int_{t=0}^T \EE_{X_t\sim \hat{P}_{X_t}} [\lambda^2(t)\|\nabla \log\hat{p}_t(X_t) - \nabla \log q_t(X_t)\|^2_2 ]dt\\
&= \frac{1}{2}\int_{t=0}^T \lambda^2(t)\D{Fisher}{\hat{P}_{X_t}}{Q_{\hat{X}_t}} dt\,.
\end{aligned}
\]

The first inequality is obtained by applying data processing inequality to the Markov chain, where the KL divergence between the last iterate conditionals is smaller than that of the whole path, similar to the proof of Theorem~1 in \cite{song2020score}. The subsequent equalities are from the Girsanov Theorem (Theorem~8.6.6 in \cite{oksendal2013stochastic}) and the definition of the Fisher divergence. 
\end{proof}
\newpage

\subsection{Proof of Theorem~\ref{thm:gen_error_DM}}\label{proof:gen_err_DM}
\begin{theorem}
Under Lemma~\ref{thm:construction_sm}, for any SDE encoder $E_t$ and generator $G_t^{\theta}$ trained via score matching on $S=\{X_i\}_{i=1}^m$, the corresponding outputs at the diffusion time $T$ are $\hat{X}_{T} \sim E_T(X_i), \hat{X}_{0} \sim G_T^\theta(\hat{X}_{T})$ for each $X_i \in S$. The KL-divergence between the original data distribution $P_X$ and the generated data distribution $ G_T^{\theta}\#\pi$ at diffusion time $T$ is then upper bounded by:
\begin{equation*}
\begin{aligned}
\D{KL}{P_X}{G_T^{\theta}\#\pi}
&\leq \EE_S \biggl(\underbrace{-\frac{1}{m}\sum_{i=1}^m \D{KL}{E_T(X_i)}{E_T\#\hat{P}_{X}}}_{T_1} + \hat{\Lcal}_{ESM}(\theta, \lambda(\cdot))\biggr) \\
&+ \underbrace{\frac{\sqrt{2}R}{m} \sum_{i=1}^m \sqrt{\EE_{X_i}[\D{KL}{E_T(X_i)}{\pi}]}}_{T_2} + \underbrace{\frac{\sqrt{2}R}{m} \sum_{i=1}^m \sqrt{I(\hat{X}_{0}; X_i| \hat{X}_{T})} }_{T_3}\,.
\end{aligned}
\end{equation*}
\end{theorem}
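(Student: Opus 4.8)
The plan is to obtain the bound by instantiating Corollary~\ref{corollary:gen_error_KL} with the diffusion encoder $E=E_T$ and generator $G=G_T^{\theta}$, and then rewriting the resulting empirical reconstruction term with Lemma~\ref{thm:construction_sm}. Under this instantiation the abstract latent of Corollary~\ref{corollary:gen_error_KL} is the diffused sample $\hat{X}_T\sim E_T(X_i)$ and the abstract output is $\hat{X}_0\sim G_T^{\theta}(\hat{X}_T)$, so the corollary directly gives
\begin{equation*}
\D{KL}{P_X}{G_T^{\theta}\#\pi}\leq \EE_S\Lcal_{\hat{P}_X}(E_T,G_T^{\theta})+\frac{\sqrt{2}R}{m}\sum_{i=1}^m\sqrt{\EE_{X_i}[\D{KL}{E_T(X_i)}{\pi}]+I(\hat{X}_0;X_i|\hat{X}_T)}-h(P_X)\,.
\end{equation*}
Applying $\sqrt{a+b}\leq\sqrt{a}+\sqrt{b}$ to the middle term splits it into exactly $T_2$ and $T_3$, so the whole argument reduces to proving $\EE_S\Lcal_{\hat{P}_X}(E_T,G_T^{\theta})-h(P_X)\leq\EE_S\bigl(T_1+\hat{\Lcal}_{ESM}(\theta,\lambda(\cdot))\bigr)$.

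First I would use Lemma~\ref{thm:construction_sm} to trade the generator's reconstruction loss for the ideal-backward one: for each draw of $S$, $\Lcal_{\hat{P}_X}(E_T,G_T^{\theta})\leq\Lcal_{\hat{P}_X}(E_T,E_T^{-1})+\frac{1}{2}\int_{t=0}^T\lambda^2(t)\D{Fisher}{\hat{P}_{X_t}}{Q_{\hat{X}_t}}\,dt$, and the integral equals $\hat{\Lcal}_{ESM}(\theta,\lambda(\cdot))$ once the parametrization $\nabla_x\log q_t=s_\theta(\cdot,t)$ is substituted (the ESM/DSM identity quoted after the lemma). The remaining work is to evaluate $\Lcal_{\hat{P}_X}(E_T,E_T^{-1})$. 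Since $E_T^{-1}$ is the exact time-reversal of the \emph{empirical} forward process, Bayes' rule factors its reverse conditional as $q_{E_T^{-1}}(x_0|x_T)=p_{E_T}(x_T|x_0)\,\hat{p}(x_0)/\hat{p}_{X_T}(x_T)$, where $\hat{p}_{X_T}$ is the density of the aggregate posterior $E_T\#\hat{P}_X$. Substituting $\Delta_G=-\log q_{E_T^{-1}}$ into the definition of $\Lcal_{\hat{P}_X}$ and taking the inner expectation over $X_T\sim E_T(X_0)$, the pieces $-\log p_{E_T}(x_T|x_0)+\log\hat{p}_{X_T}(x_T)$ collapse to $-\frac{1}{m}\sum_{i=1}^m\D{KL}{E_T(X_i)}{E_T\#\hat{P}_X}=T_1$, while the leftover $-\log\hat{p}(x_0)$ integrates to the (differential) entropy $h(\hat{P}_X)$.

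The final step is to absorb this residual entropy into the $-h(P_X)$ already present. Since the empirical measure is unbiased, $\EE_S[\hat{P}_X]=P_X$, and since $h(\cdot)$ is concave in the distribution, Jensen's inequality gives $\EE_S[h(\hat{P}_X)]\leq h(P_X)$; hence $\EE_S\Lcal_{\hat{P}_X}(E_T,E_T^{-1})-h(P_X)\leq\EE_S[T_1]$. Chaining the three inequalities and taking $\EE_S$ throughout yields the claim. I expect the reconstruction-term evaluation plus this entropy bookkeeping to be the main obstacle: one must be careful that $E_T^{-1}$ is the reversal against $\hat{P}_X$ rather than $P_X$ (so that Bayes' rule introduces the aggregate posterior, not $P_{X_T}$), and the measure-theoretic regularity borrowed from \citet{song2021maximum} — smoothness of $\hat{p}_t$ for $t>0$, effectively a dequantization of $\hat{P}_X$ — is precisely what keeps $\Lcal_{\hat{P}_X}(E_T,E_T^{-1})$, the Fisher-divergence integral, and the entropy term all finite and well defined. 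Everything else (the square-root split, the Lemma~\ref{thm:construction_sm} substitution, and the Jensen step) is routine bookkeeping.
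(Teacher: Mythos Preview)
Your proposal is correct and mirrors the paper's proof: instantiate Corollary~\ref{corollary:gen_error_KL}, split the square root into $T_2+T_3$, apply Lemma~\ref{thm:construction_sm} to trade the generator's reconstruction loss for the ideal one plus $\hat{\Lcal}_{ESM}$, compute $\Lcal_{\hat{P}_X}(E_T,E_T^{-1})=h(\hat{P}_X)+T_1$ via Bayes' rule, and finish with $\EE_S[h(\hat{P}_X)]\leq h(P_X)$ by concavity of entropy. Your caution about the empirical differential entropy is well placed---the paper proceeds formally there under the regularity conventions borrowed from \citet{song2021maximum}.
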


\begin{proof}
At first, we combine the results of Corollary~\ref{corollary:gen_error_KL} and Lemma~\ref{thm:construction_sm} and obtain:
\[
\begin{aligned}
\D{KL}{P_X}{G_T^{\theta}\#\pi}
&\leq \EE_S \left(\Lcal_{\hat{P}_X}(E_T, E_T^{-1}) + \hat{\Lcal}_{ESM}(\theta, \lambda(\cdot))\right) \\
&\qquad+ \frac{\sqrt{2}R}{m} \sum_{i=1}^m \sqrt{\EE_{X_i}[\D{KL}{E_T(X_i)}{\pi}] + I(\hat{X}_0; X_i| \hat{X}_T)} - h(P_X)\\
&\leq \EE_S \left(\Lcal_{\hat{P}_X}(E_T, E_T^{-1}) + \hat{\Lcal}_{ESM}(\theta, \lambda(\cdot))\right) - h(P_X) \\
&\qquad+ \frac{\sqrt{2}R}{m} \sum_{i=1}^m \sqrt{\EE_{X_i}[\D{KL}{E_T(X_i)}{\pi}]} + \frac{\sqrt{2}R}{m} \sum_{i=1}^m \sqrt{I(\hat{X}_0; X_i| \hat{X}_T)}\,. 
\end{aligned}
\]
Then, the reconstruction error $\Lcal_{\hat{P}_X}(E_T, E_T^{-1})$ of the reverse SDE can be decomposed as:
\[
\begin{aligned}
\Lcal_{\hat{P}_X}(E_T, E_T^{-1}) &=
\EE_{X_0\sim \hat{P}_X} \EE_{X_T\sim E_T(X_0)} [-\log p_{E_{T}^{-1}}(X_0|X_T)] \\
&= \EE_{X_0\sim \hat{P}_X} \EE_{X_T\sim E_T(X_0)} [\log \frac{\hat{p}_T(X_T)}{p_{E_{T}}(X_T|X_0)\hat{p}_0(X_0)}]\\
& = h(\hat{P}_X) - \frac{1}{m}\sum_{i=1}^m \D{KL}{E_T(X_i)}{E_T\#\hat{P}_{X}},  
\end{aligned}
\] 
which will converge to $h(\hat{P}_X)$ 
when $T\rightarrow\infty$, because of $E_T\#\hat{P}_{X}\rightarrow \pi$ w.r.t any data distribution and $E_T^{-1}\#\hat{P}_{X_T} = \hat{P}_{X}$. So if $T\rightarrow\infty$ and $m\rightarrow\infty$ both hold, we have $\EE_S \Lcal_{\hat{P}_X}(E_T,E_T^{-1})-h(P_X)\rightarrow 0$.

Considering the normal case, we have
\[
\EE_S [h(\hat{P}_X)] - h(P_X) \leq 0\,,
\]
because the entropy function $h(p)=-p\log p$ is concave, and we can take the expectation inside by Jensen's inequality (See \cite{verdu2019empirical}).

Therefore, we have:
\[
\EE_S \Lcal_{\hat{P}_X}(E_T,E_T^{-1}) \leq \EE_S \left(- \frac{1}{m}\sum_{i=1}^m \D{KL}{E_T(X_i)}{E_T\#\hat{P}_{X}}\right) + h(P_X)
\]
Combine all these, we conclude the proof.
\end{proof}
\newpage
\section{Proof of Theorem~\ref{thm:LD_MI}}\label{proof:LD_MI}
\begin{theorem}
Let the step size be $\tau = \frac{T}{N}$, where we split $T$ to $N$ discrete times. For any $k\in[N]$, we use the following discrete update for the backward SDE by setting $\epsilon_{t_k} \sim \mathcal{N}(0, \rmI_d), t_k= T-\tau k$:
\[\hat{X}_{t_{k}} =  (1-\frac{\tau}{2}\lambda^2(T-t_{k-1}))\hat{X}_{t_{k-1}} + \tau\lambda^2(T-t_{k-1}) s_{\theta}(X_{t_{k-1}}, T-t_{k-1}) + \sqrt{\tau}\lambda(T-t_{k-1})\epsilon_{t_{k-1}}\,.\]
Furthermore, we assume a bounded score $\nabla_x \log \hat{p}_t(x) \leq L, \forall x, t$. Then, we have the 
\[
\begin{aligned}
\frac{1}{m} \sum_{i=1}^m I(\hat{X}_{0}; X_i| \hat{X}_{T}) \leq \frac{1}{m}I(\hat{X}_{0}; X_{1:m}|\hat{X}_T) \leq  \frac{T L^2 \sum_{k=1}^N\lambda^2(\frac{(k-1)T}{N})}{2mN}\,.
\end{aligned}
\]
\end{theorem}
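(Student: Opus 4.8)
The statement is two inequalities, which I would prove independently.

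\emph{First inequality}, $\frac1m\sum_{i=1}^m I(\hat X_0;X_i\mid\hat X_T)\le \frac1m I(\hat X_0;X_{1:m}\mid\hat X_T)$: this is a symmetrization step in the same spirit as the one carried out in Appendix~\ref{sec: est_cmi}. Because the encoding $X_i\mapsto\hat X_T$, the score‑matching training of $\theta$ (which averages a per‑sample loss over $S$), and the generation $\hat X_T\mapsto\hat X_0$ all treat the training points exchangeably, the quantity $I(\hat X_0;X_i\mid\hat X_T)$ does not depend on $i$, so (fixing $\hat X_T\sim E_T(X_1)$) the left‑hand side equals $I(\hat X_0;X_1\mid\hat X_T)$. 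Expanding the right‑hand side by the chain rule, $I(\hat X_0;X_{1:m}\mid\hat X_T)=\sum_{j=1}^m I(\hat X_0;X_j\mid\hat X_T,X_{1:j-1})$, so it suffices to show $I(\hat X_0;X_j\mid\hat X_T,X_{1:j-1})\ge I(\hat X_0;X_j\mid\hat X_T)$ for each $j$. Writing $I(\hat X_0;X_j\mid\hat X_T,X_{1:j-1}) = I(\hat X_0;X_j\mid\hat X_T) + I(X_{1:j-1};X_j\mid\hat X_T,\hat X_0) - I(X_{1:j-1};X_j\mid\hat X_T)$ via the chain rule, this reduces to nonnegativity of the interaction term $I(X_{1:j-1};X_j;\hat X_0\mid\hat X_T)$, which holds here precisely because both $\hat X_T$ and $\hat X_0$ are produced downstream of the training data — the same observation used in Appendix~\ref{sec: est_cmi}.

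\emph{Second (main) inequality}: I would treat the discretized reverse SDE as a noisy iterative procedure whose only contact with the data is through the learned parameter $\theta$. Write the generated trajectory as $\hat X_{t_0}=\hat X_T\to\hat X_{t_1}\to\cdots\to\hat X_{t_N}=\hat X_0$ with $t_k=T-\tau k$, $\tau=T/N$, and abbreviate $\hat X_{t_{0:k-1}}=(\hat X_{t_0},\dots,\hat X_{t_{k-1}})$. Since $\hat X_0$ is a function of the full trajectory, the data‑processing inequality followed by the chain rule gives
\[ I(\hat X_0;X_{1:m}\mid\hat X_T)\ \le\ I(\hat X_{t_1},\dots,\hat X_{t_N};X_{1:m}\mid\hat X_{t_0})\ =\ \sum_{k=1}^N I(\hat X_{t_k};X_{1:m}\mid\hat X_{t_{0:k-1}}). \]
For fixed history $\hat X_{t_{0:k-1}}$, the update rule exhibits $\hat X_{t_k}$ as a Gaussian channel: $\hat X_{t_k}=A_k+\sqrt\tau\,\lambda(T-t_{k-1})\,\epsilon_{t_{k-1}}$ with $\epsilon_{t_{k-1}}\sim\mathcal N(\vzero,\rmI_d)$ fresh and independent, and $A_k=\big(1-\tfrac\tau2\lambda^2(T-t_{k-1})\big)\hat X_{t_{k-1}}+\tau\lambda^2(T-t_{k-1})\,s_\theta(\hat X_{t_{k-1}},T-t_{k-1})$ a function of $X_{1:m}$ given the history (the $\lambda$‑coefficients are fixed; only $s_\theta$ carries the data). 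Hence $I(\hat X_{t_k};X_{1:m}\mid\hat X_{t_{0:k-1}})\le I(A_k;\hat X_{t_k}\mid\hat X_{t_{0:k-1}})\le \Tr\Cov(A_k\mid\hat X_{t_{0:k-1}})\big/\big(2\tau\lambda^2(T-t_{k-1})\big)$ by data processing and the standard Gaussian‑channel entropy bound. The bounded‑score assumption $\|s_\theta(x,t)\|\le L$ controls the only data‑dependent part of $A_k$, giving $\Tr\Cov(A_k\mid\hat X_{t_{0:k-1}})\le \big(\tau\lambda^2(T-t_{k-1})\big)^2L^2$, hence $I(\hat X_{t_k};X_{1:m}\mid\hat X_{t_{0:k-1}})\le \tfrac12\tau\lambda^2(T-t_{k-1})L^2$. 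Summing over $k$, substituting $T-t_{k-1}=(k-1)T/N$ and $\tau=T/N$, and dividing by $m$ yields exactly $\frac{TL^2}{2mN}\sum_{k=1}^N\lambda^2\!\big(\tfrac{(k-1)T}{N}\big)$.

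\emph{Main obstacle.} The per‑step Gaussian‑channel estimate and the telescoping arithmetic are routine. The delicate point is the first inequality: in general the interaction information need not be nonnegative, so the reduction genuinely relies on the special Markov structure — both the latent $\hat X_T$ and the output $\hat X_0$ being generated from the data — exactly the structure exploited in Appendix~\ref{sec: est_cmi}; making that, together with the exchangeability step, fully precise is where the real care is required.
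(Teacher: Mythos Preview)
Your proposal is correct and follows essentially the same route as the paper: data processing plus chain rule to reduce to per-step terms, then the Pensia--Koh--Varshney Gaussian-channel bound (your trace-of-covariance formulation is equivalent to the paper's entropy-difference formulation, both landing on $\tfrac{\eta_{k-1}^2L^2}{2\sigma_{k-1}^2}$ with $\eta_{k-1}=\sigma_{k-1}^2=\tau\lambda^2(T-t_{k-1})$).

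One simplification you are missing for the first inequality: in the diffusion setting the forward encoder $E_T$ is \emph{data-free} (a fixed SDE), so $\hat X_T$ depends only on a single $X_i$ and noise, which gives $I(X_j;X_{1:j-1}\mid\hat X_T)=0$ outright; combined with your chain-rule identity this yields $I(\hat X_0;X_j\mid\hat X_T,X_{1:j-1})=I(\hat X_0;X_j\mid\hat X_T)+I(X_{1:j-1};X_j\mid\hat X_T,\hat X_0)\ge I(\hat X_0;X_j\mid\hat X_T)$ with no appeal to interaction-information sign. This is cleaner than the VAE argument in Appendix~\ref{sec: est_cmi} that you point to, and it removes what you flagged as the main obstacle. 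Your preliminary exchangeability reduction is then also unnecessary: the paper simply sums the termwise inequality over $j$ to get $\sum_j I(\hat X_0;X_j\mid\hat X_T)\le I(\hat X_0;X_{1:m}\mid\hat X_T)$ directly.
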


\begin{proof}
At first, let us denote the sequence of generated data as $\hat{X}^{[N]}\eqdef [\hat{X}_{t_{1}},..., \hat{X}_{t_{N}}]$, then,  we have the following Markov chain:
\[
\begin{aligned}
  X_{1:m} \rightarrow & \hat{X}^{[N]} \rightarrow \hat{X}_0\\
    &\quad \uparrow\\
    &\quad \hat{X}_T
\end{aligned}
\]
According to the mutual information chain rule, we have:
\[
I(\hat{X}_{0}; X_{1:m}|\hat{X}_T) = I(\hat{X}_0;X_1|\hat{X}_T) + \sum_{i=2}^m I(\hat{X}_{0}; X_{i}|\hat{X}_T, X_{1:i-1}) \geq \sum_{i=1}^m I(\hat{X}_{0}; X_{i}|\hat{X}_T)\,.
\]
Since $ I(\hat{X}_{0}; X_{i}|\hat{X}_T,X_{1:i-1}) + I(X_i; X_{1:i-1}|\hat{X}_T) = I(\hat{X}_0,X_{1:i-1};X_i|\hat{X}_T)$ that can also be decomposed as $I(\hat{X}_0;X_i|\hat{X}_T) + I(X_{1:1-i};X_i|\hat{X}_T,\hat{X}_0)$ and $I(X_i; X_{1:i-1}|\hat{X}_T)=0$, the last inequality holds with $I(X_{1:1-i};X_i|\hat{X}_T,\hat{X}_0)\geq 0$.

For any $k\in[N]$, we use the following discrete update for the backward SDE, where $\epsilon_{t_k} \sim \mathcal{N}(0, \rmI_d), t_k= T-\tau k$, $\tau=\frac{T}{N}$. 
\[\hat{X}_{t_{k}} =  (1-\frac{\tau}{2}\lambda^2(T-t_{k-1}))\hat{X}_{t_{k-1}} + \tau\lambda^2(T-t_{k-1}) s_{\theta}(X_{t_{k-1}}, T-t_{k-1}) + \sqrt{\tau}\lambda(T-t_{k-1})\epsilon_{t_{k-1}}\,.\]

Since the approximation $s_{\theta}(X_{t_{k-1}}, T-t_{k-1})\approx \nabla_{X_{t_{k-1}}} \log\hat{p}_{T-t_{k-1}}(X_{t_{k-1}})$ is determined by $S=X_{1:m}$ under some functional form, simply denote it as $g_S(X_{t_{k-1}})$ we can consider the above update as a Langevin dynamics 
\[
\hat{X}_{t_{k}} =  (1-\frac{\tau}{2}\lambda^2(T-t_{k-1}))\hat{X}_{t_{k-1}} + \eta_{k-1} g_S(X_{t_{k-1}}) + \sigma_{k-1}\epsilon_{t_{k-1}}\,,
\]
where $\eta_{k-1} = \tau\lambda^2(T-t_{k-1})= \sigma_{k-1}^2$.
Then, we can apply the technique in \cite{pensia2018generalization}, and obtain:
\[
\begin{aligned}
I(\hat{X}_{0}; X_{1:m}|\hat{X}_T) &\leq I(\hat{X}^{[N]}; X_{1:m}|\hat{X}_T) \leq \sum_{k=1}^N I(\hat{X}_{t_{k}}; X_{1:m}|\hat{X}_T, \hat{X}^{[k-1]})\\
&=\sum_{k=1}^N \left(h(\hat{X}_{t_{k}}|\hat{X}_T, \hat{X}^{[k-1]}) - h(\hat{X}_{t_{k}}| X_{1:m},\hat{X}_T, \hat{X}^{[k-1]})\right)
\end{aligned}
\]

According to the Langevin dynamic update and the bounded gradient assumption, we have
\[
h(\hat{X}_{t_{k}}|\hat{X}_T, \hat{X}^{[k-1]}) = h(\eta_{k-1} g_S(X_{t_{k-1}}) + \sigma_{k-1}\epsilon_{t_{k-1}})\leq \frac{d}{2} \log\left(2\pi e\frac{\eta_{k-1}^2 L^2 + d\sigma_{k-1}^2}{d}\right),
\]
where we use the fact that the Gaussian distribution has the largest entropy with $h(Y)\leq \frac{d}{2}\log\left(\frac{2\pi eC}{d}\right)$ for all random variables $Y$ satisfying $\EE\|Y\|_2^2 \leq C$.
Moreover, we have \[
\EE\|\eta_{k-1} g_S(X_{t_{k-1}}) + \sigma_{k-1}\epsilon_{t_{k-1}}\|_2^2 = \EE \|\eta_{k-1} g_S(X_{t_{k-1}})\|_2^2 + \EE \|\sigma_{k-1}\epsilon_{t_{k-1}}\|_2^2 \leq \eta_{k-1}^2L^2 + d \sigma_{k-1}^2, \]
which is due to the independence between the score estimation and the injected noise.
Then, we also have $h(\hat{X}_{t_{k}}| X_{1:m},\hat{X}_T, \hat{X}^{[k-1]}) = h(\sigma_{k-1}\epsilon_{t_{k-1}})\leq \frac{d}{2}\log\left(2\pi e \sigma_{k-1}^2\right)$.

Combine all these, we can get:
\[
I(\hat{X}_{0}; X_{1:m}|\hat{X}_T) \leq \sum_{i=1}^N \frac{d}{2}\log\left(1 + \frac{\eta_{k-1}^2L^2}{d\sigma_{k-1}^2}\right) \leq \sum_{i=1}^N \frac{\eta_{k-1}^2L^2}{2\sigma_{k-1}^2}\,,
\]
where the last inequality use $\log(1+x) \leq x, \forall x\geq 0$. 
Putting $\eta_{k-1} = \tau\lambda^2(T-t_{k-1})= \sigma_{k-1}^2$ into the above equation, we conclude the proof.
\end{proof}

\newpage
\section{Experiment Details} \label{sec: exp}

In this section, we provide the detailed experimental setting and some additional experimental results. Our experimental code is available at \url{https://github.com/livreQ/InfoGenAnalysis}. The implementation is based on the code in \citet{van2021memorization} for VAEs and based on the code in \citet{huang2021variational} for diffusion models.


\textbf{Computational Resource} The experiments for Swill Roll data were running on a machine with 1 2080Ti GPU of 11GB memory. The experiments for MNIST and CIFAR10 were running on several server nodes with 6 CPUs and 1 GPU of 32GB memory. 

\subsection{VAE} \label{sec:exp_vae}
{
To verify our theoretical results in Theorem~\ref{thm:gen_error_vae} and compare to previous PAC Bayes bound for VAEs, we present experiments on MNIST in this section.}

\subsubsection{Results}

\begin{figure}[htbp]
    \centering
    \subfloat[]{\includegraphics[width=0.45\linewidth]{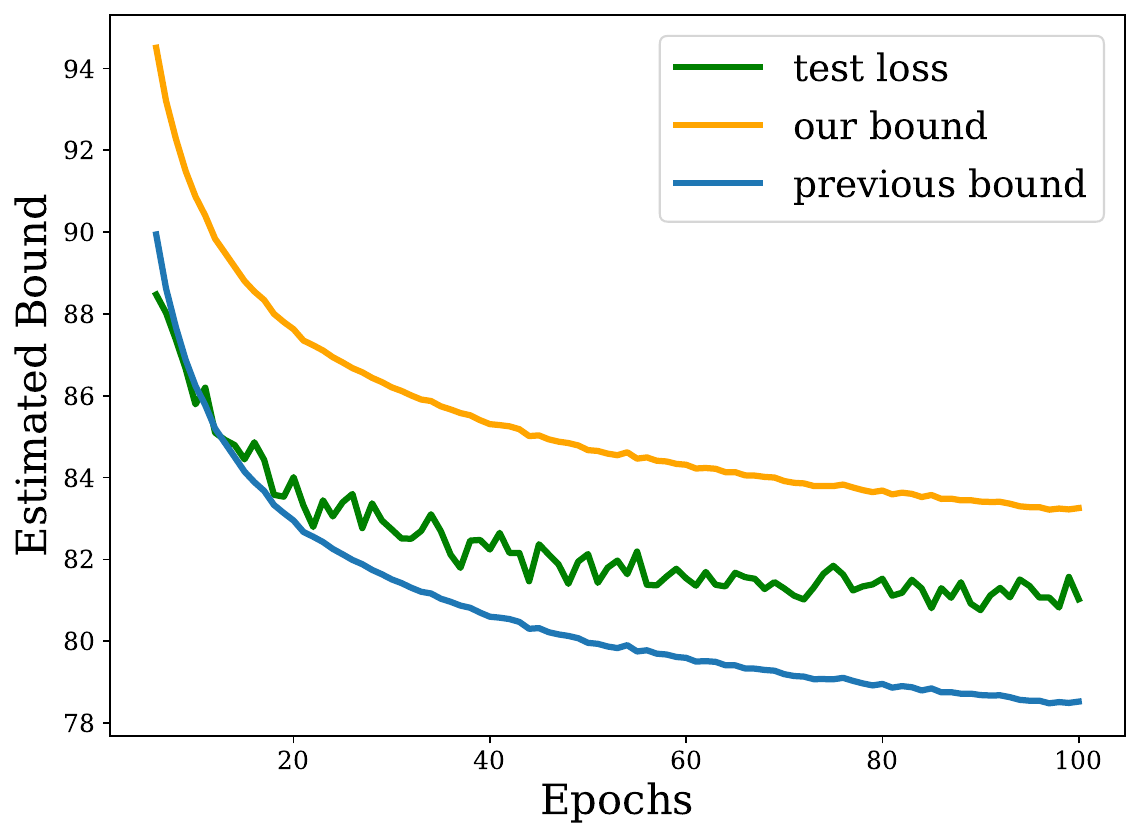}}
     \quad
    \subfloat[]{{\includegraphics[width=0.45\linewidth]{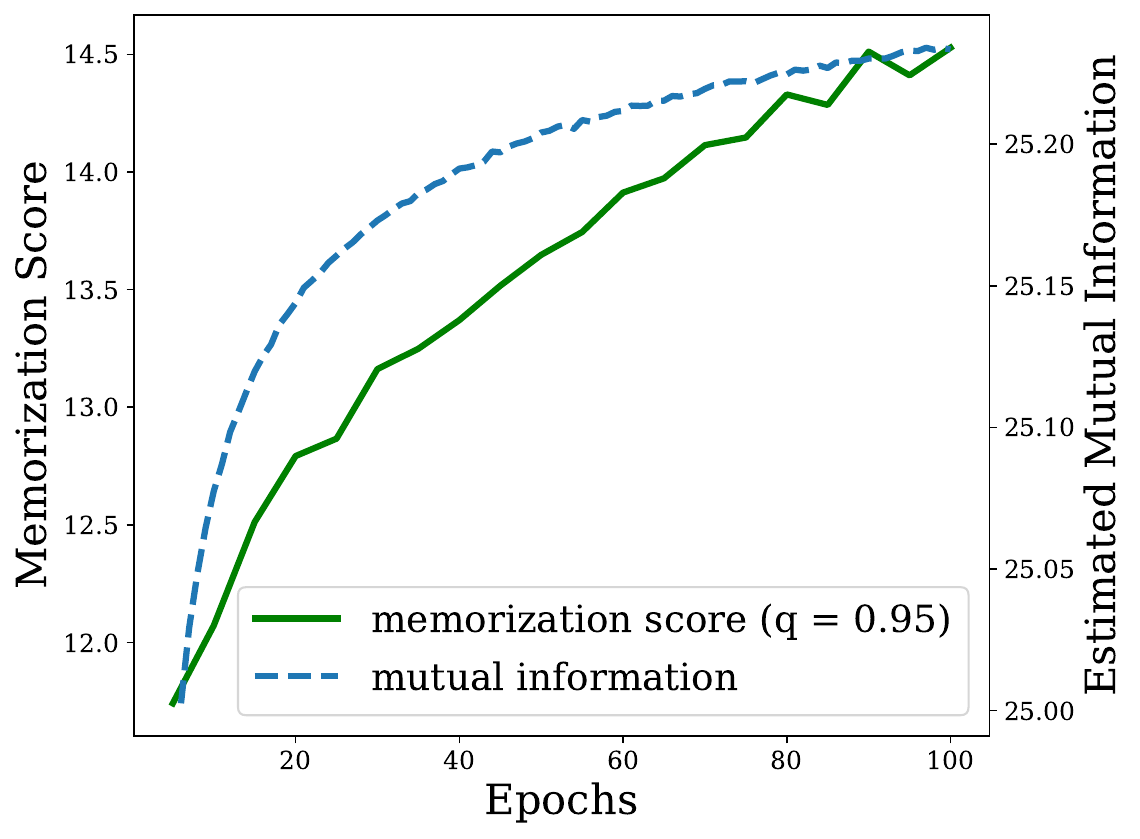}}}
\caption{{(a)The test VAE loss, previous PAC Bayes bound (converted to expectation bound) and our mutual information bound on MNIST dataset change over training epochs. (b) The memorization score with $0.95$ quantile and mutual information estimation change over training epochs.}}
    \label{fig:vae_bound}
\end{figure}

{
 We conducted experiments on the MNIST dataset, using a Bernoulli distribution as the generator $G_{\theta}$ for the VAE. The mutual information term was estimated using the method proposed in Sec.~\ref{sec: est_cmi}, leveraging a randomly initialized duplicate generator, which remained fixed as a reference throughout the process.}

{
The estimated bounds and test loss are plotted in Fig.\ref{fig:vae_bound} (a). The previous PAC Bayes bound estimated with train data goes below the test loss. This is due to the bound does not hold for any $G_{\theta}$. In contrast, our bound estimated on train data well aligns with the test loss because we have considered the generalization term for $G_{\theta}$.
}
\newpage
{
To further illustrate the effectiveness of using the conditional mutual information term to capture the generalization of $G_{\theta}$. We compare its estimation to the memorization score proposed in \citet{van2021memorization}, which measures how much more likely an observation is when it is included in the training set than when it is not. The result is presented in Fig.\ref{fig:vae_bound}(b), where we plot the $95\%$ quantile of the memorization score and the conditional mutual information term along the training epochs. The two terms are highly correlated with similar evolving trends, suggesting our bond may capture the memorization to some extent. To be noted, evaluating the memorization score requires an additional validation set, while our bound can be evaluated with only the training set.
}

\subsubsection{experiment settings}
This section strictly follows the setting in \citet{van2021memorization}.
\paragraph{Data and network structure}
{During training on MNIST, we dynamically binarize the images by treating each grayscale pixel value as the parameter of an independent Bernoulli random variable, following standard practice. The encoder block is the stack of FC(1024, 512), RELU, FC(512, 256), RELU and FC(256, 16). The generator block is the stack of FC(16, 256), RELU, FC(256, 512), RELU, FC(512, 1024), and Sigmoid.}

\paragraph{Training Details}
{We optimized the model parameters using the Adam optimizer with a learning rate of $\eta=10^{-3}$. The training was conducted with a batch size of 64, while the remaining Adam hyperparameters were kept at their default values in PyTorch. The model was trained for 100 epochs.}

\subsection{Diffusion Model}
\subsubsection{Bound Estimation}\label{sec:bound_estimation}

Our main objective is to verify Theorem~\ref{thm:gen_error_DM} for diffusion models, which involves illustrating: 1. the inequality holds, as illustrated in Fig.~\ref{fig:bound}. 2. the evolving trend of the two sides follows the change of sample size $m$, seen in Fig.~\ref{fig:bound} (a). 3. the trade-off on diffusion time $T$, showed in Fig.~\ref{fig:bound} (b). To make such a comparison, we need a quantitative estimation of the two sides.  

Recall that $\D{KL}{P_X}{Q_{G_T^{\theta}}^{\pi}}$ measures the proximity of the original data distribution to the generated data distribution. A similar metric used to evaluate the performance of generative models is the Fréchet inception distance (FID), which is the Wasserstein-2 distance between the generated and the original data distribution. Since the data distribution is unknown, we conduct a Monte Carlo estimation $\sum_{i=1}^{m_t}\log(p(\tilde{X}_i)/q_{G_\theta}(\tilde{X}_i))$ using a test dataset of size $m_{t}=1000$, which is independent of the training set with $S^{te}=\{\tilde{X}\}_{i=1}^{m_t}, \tilde{X}_i \sim P_X$. Then, we use a Kernel Density Estimation (KDE) to calculate both $p(\tilde{X}_i)$ and $q_{G_\theta}(\tilde{X}_i)$. Such estimation of $q_{G_\theta}$ is disentangled from the diffusion process itself and can better reflect the generalization of the learned diffusion model by only using the generated data (one can sample any number of data as wanted to fit the KDE).

On the Right-Hand Side (RHS), $T_2$ has an analytic form. $T_3$ is upper bounded by Theorem~\ref{thm:LD_MI}, where we use a step-wise estimation for the maximum score norm $L$ similar to the gradient norm estimation in the literature of information-theoretic learning \citep{pensia2018generalization, li2019generalization, negrea2019information}. $T_1$ is the KL divergence between the final time posterior and the aggregated posterior, where the former is a multivariate Gaussian, and the latter is a mixture of Gaussian in the normal setting of diffusion models. Thus, we can simply use a KDE with a Gaussian kernel and bandwidth fixed to time-specific variances of the forward process to approximate the aggregated posterior. Combining the empirical score matching loss, we have the estimation of RHS. W.r.t the expectation over $S$, we conduct $5$-times Monte-Carlo estimation by randomly generating train datasets with different random seeds. 

\newpage
\subsubsection{Swiss Roll}

\paragraph{Experimental Setting}
\begin{itemize}
    \item \textbf{Score matching model structure} We use a 4-layer Multilayer Perceptrons (MLPs) with hidden size $128$ to approximate the score function, where the input dimension is the 2D data dimension plus the 1D time dimension.
    \item \textbf{Train-test details} During experiments, we record the generated data and estimate their KL divergence from the test data during the training dynamics. We use $1000$ Monte Carlo sampling for the test-data KL divergence estimation and the same sampling size for every kernel density estimation. The score matching model $s_{\theta}(x, t)$ is trained for $10000$ iterations, and the backward generation takes $1000$ steps, \ie, $N=1000$. 
\end{itemize}

\paragraph{Additional Results}
 
In Fig.~\ref{fig:generated_data_m} and Fig.~\ref{fig:generated_data_t}, we plot the generated data with the score model obtained at the last iteration for each specific setting, \eg, different train sample size $m$ and diffusion time $T$.

\begin{figure}[htb!]
    \centering
    \subfloat{\includegraphics[height=0.3\linewidth]{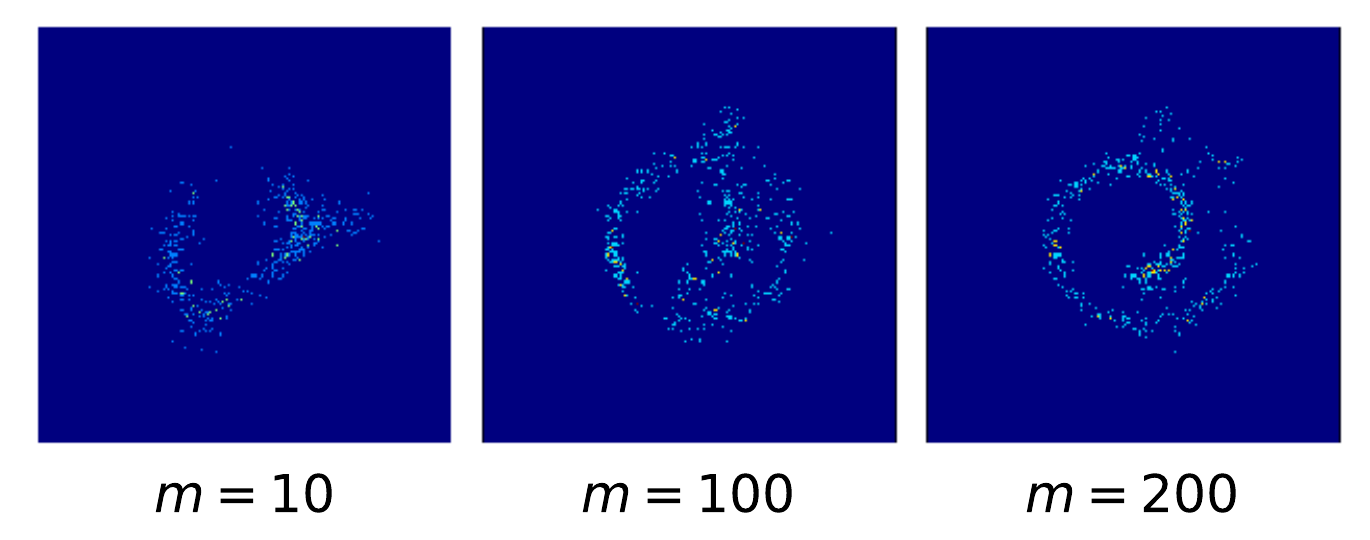}}\\
     \subfloat{\includegraphics[height=0.3\linewidth]{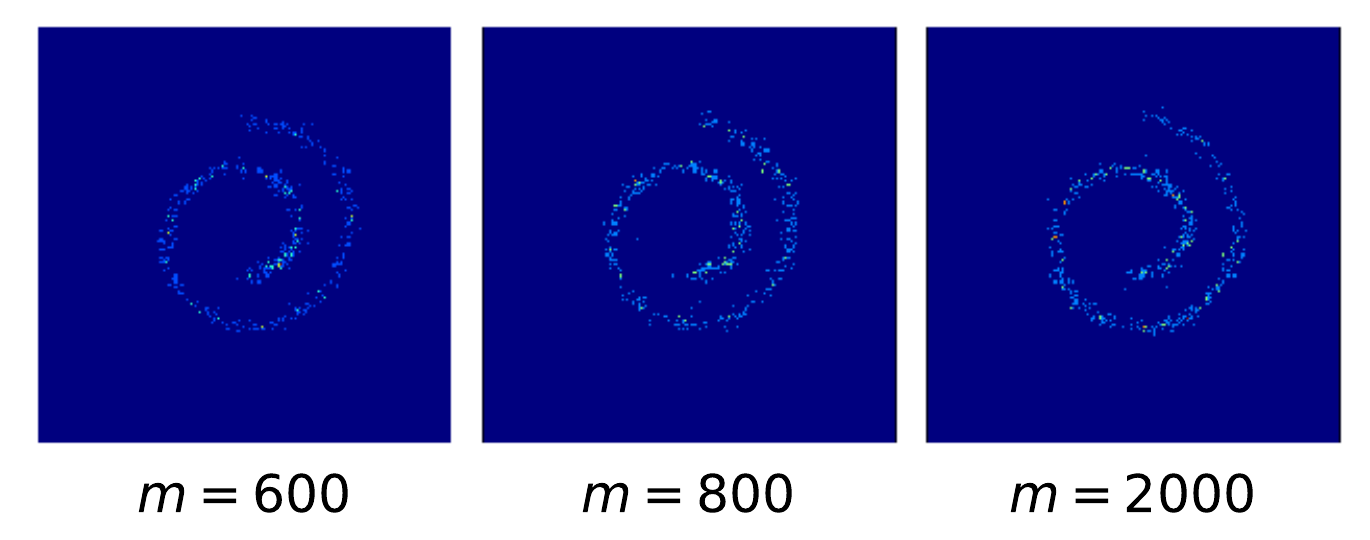}}
    \caption{Sampling results w.r.t. different train data size $m$: $1000$ data points generated by a score-based model trained with $10000$ gradient iterations and diffusion time $T=1$. The sampling is conducted after $1000$ steps when solving the discretized backward SDE.}
    \label{fig:generated_data_m}
\end{figure}
\newpage
\begin{figure}[H]
    \centering
    \subfloat{\includegraphics[width=0.68\linewidth]{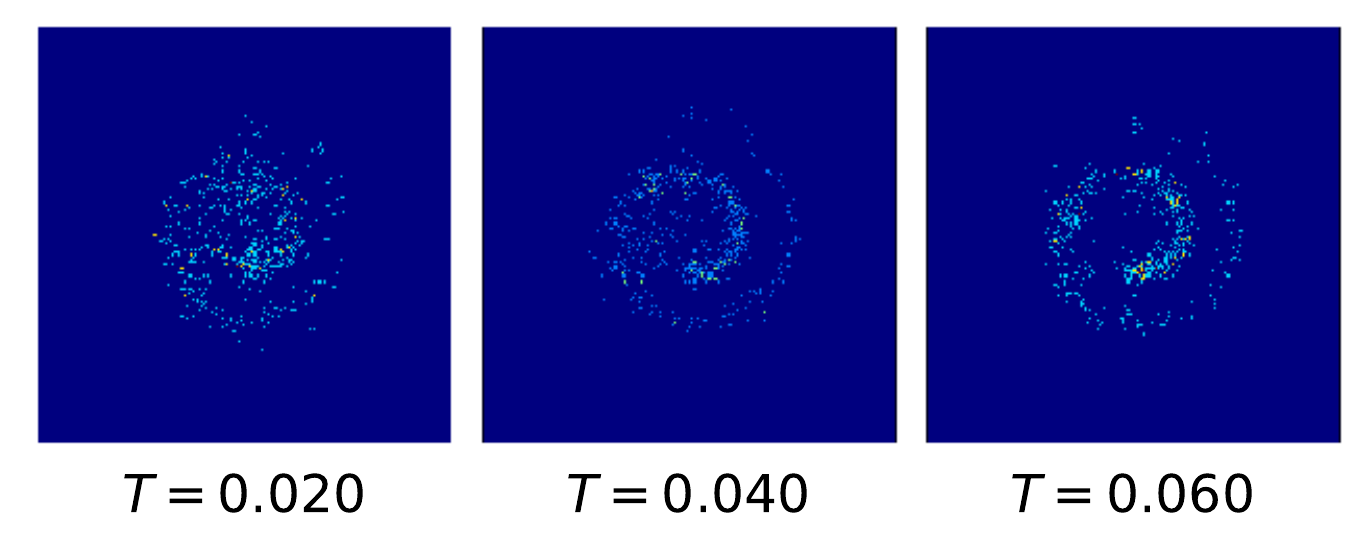}}\\[-1ex]
    \subfloat{\includegraphics[width=0.68\linewidth]{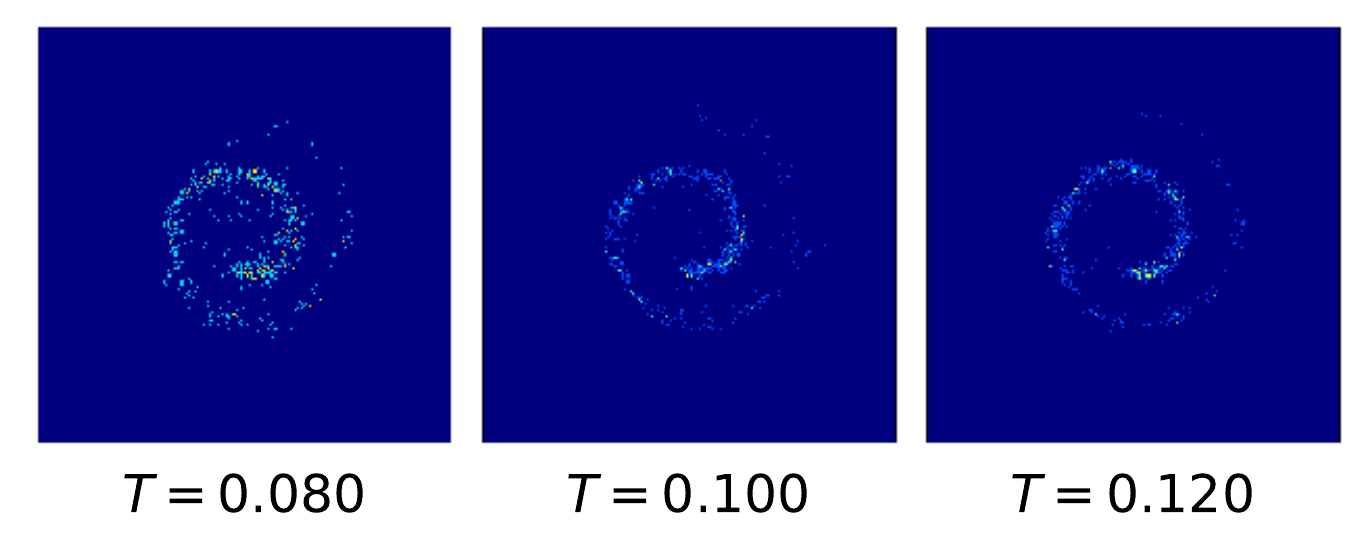}}\\[-1ex]
    \subfloat{\includegraphics[width=0.68\linewidth]{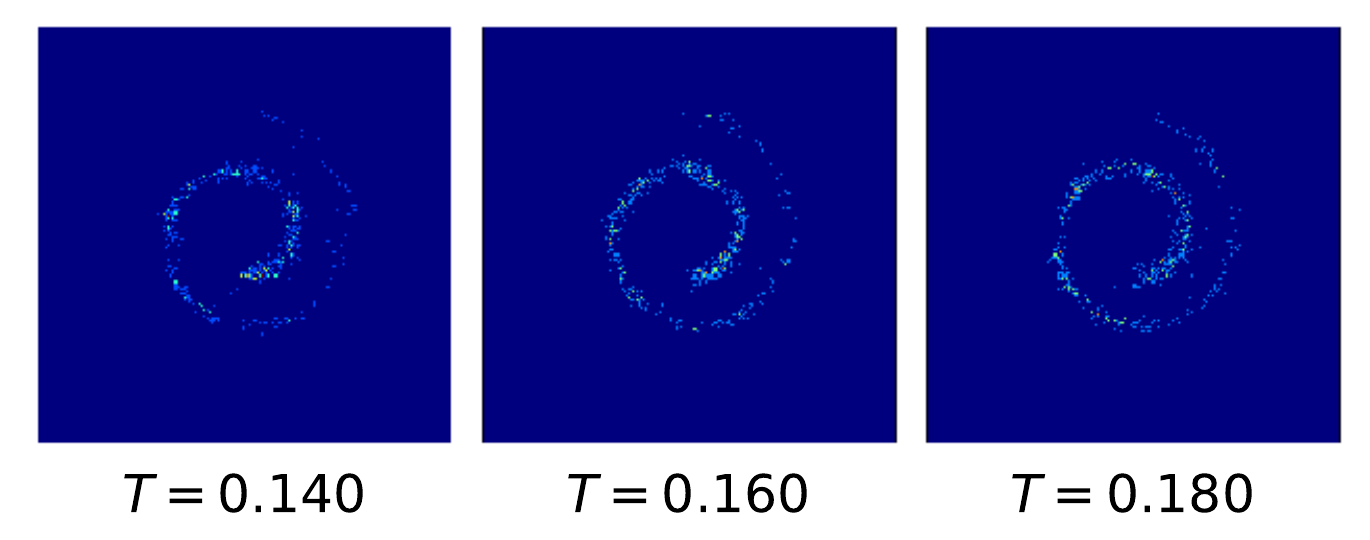}}\\[-1ex]
    \subfloat{\includegraphics[width=0.68\linewidth]{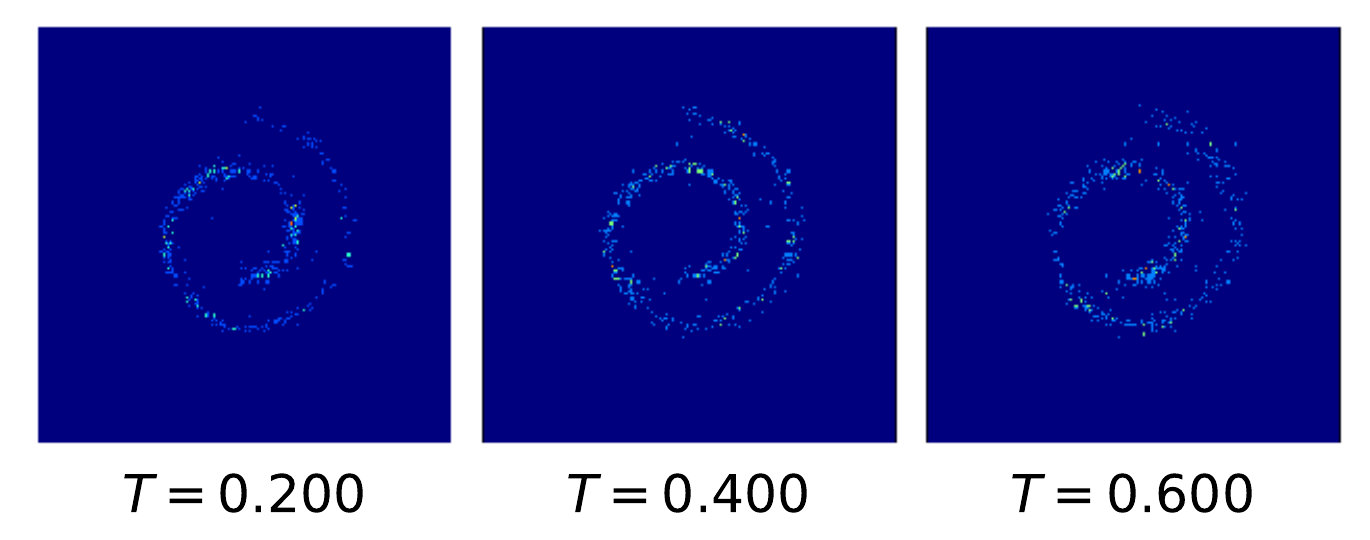}}\\[-1ex]
    \subfloat{\includegraphics[width=0.68\linewidth]{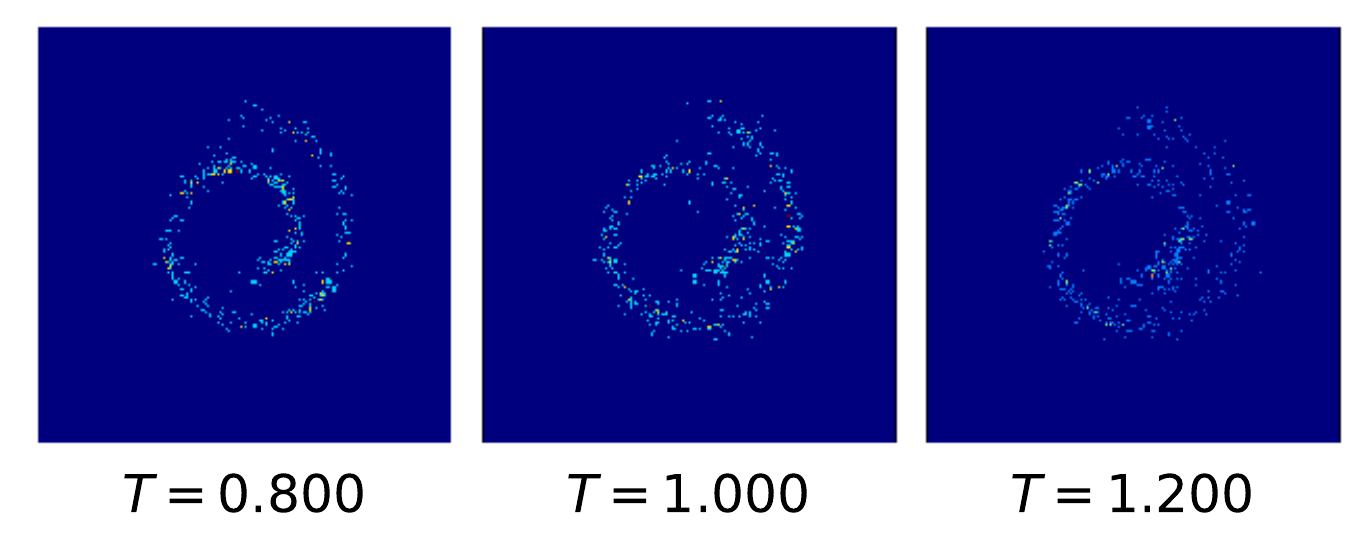}}\\[-1ex]
    \subfloat{\includegraphics[width=0.68\linewidth]{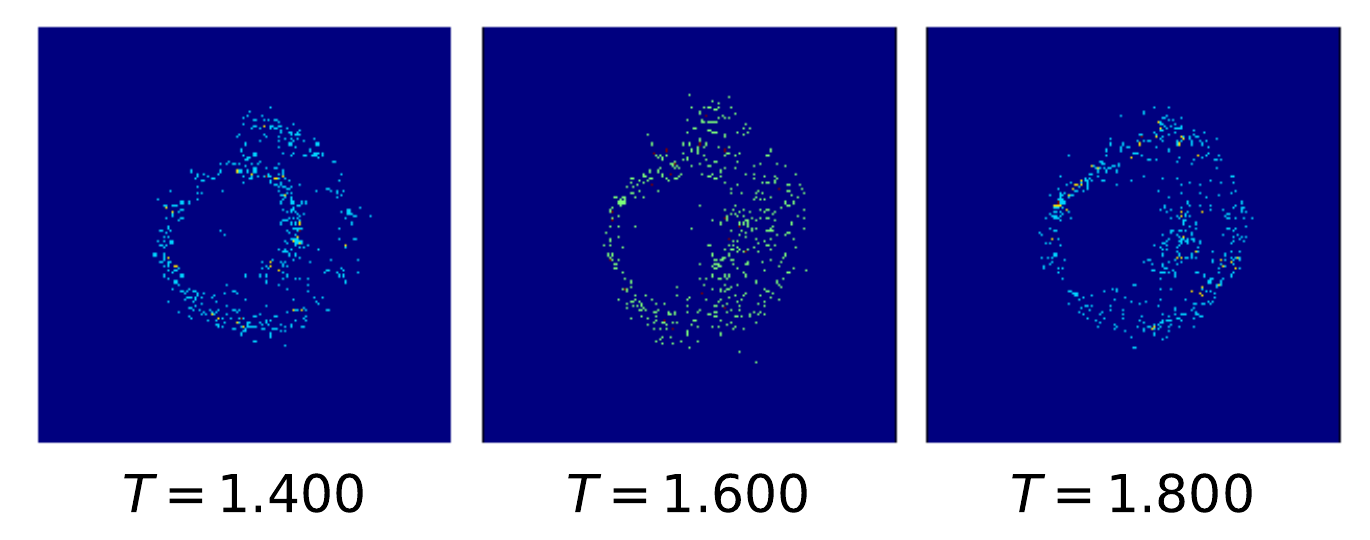}}
    \caption{Sampling results w.r.t. different diffusion time $T$: $1000$ data points generated by a score-based model trained on $m=200$ data points with $10000$ gradient iterations.}
    \label{fig:generated_data_t}
\end{figure}

\subsubsection{MNIST and CIFAR10}

\paragraph{Experimental Setting}
\begin{itemize}
    \item \textbf{Score matching model structure} Following \citet{ho2020denoising} and \citet{huang2021variational}, we use modified UNets based on a Wide ResNet for 1x28x28 images in MNIST data and 3x32x32 images in CIFAR10 data, respectively. The weight normalization is replaced with group normalization. Each model consists of two convolutional residual blocks per resolution level, along with self-attention blocks at the 16x16 resolution between the convolutional layers. The diffusion time $t$ is incorporated into each residual block via the Transformer sinusoidal positional embedding.
    \item \textbf{Train-test details} During experiments, we record the generated data and estimate their KL divergence from the test data during the training dynamics. We use $100$ Monte Carlo sampling for test-data KL estimation and the same sampling size for every kernel density estimation. The score matching model $s_{\theta}(x, t)$ is trained was trained on $m=16$ images for $10000$ iterations, and the backward generation takes $1000$ steps, \ie, $N=1000$. BPD was estimated using the method proposed in \citet{huang2021variational}, and test data KL was estimated using KDE as for the Swiss Roll data.
\end{itemize}

\paragraph{Additional Results}

{Large step sizes will cause instability or large discretization errors. However, the step size is not the smaller, the better. After some threshold, reducing the step size further yields negligible improvements because of the model's approximation error and will cause a heavy computation burden. In addition, according to the relation $N = \frac{T}{\tau}$, a small step size corresponds to a large number of steps, which can lead to overfitting, especially when the model is trained with few data. 
}

{
Replace $N$ with $T/\tau$ in the bound, we have $T_3 = \frac{R \sqrt{(\beta_1 - \beta_0)L^2T^2 + ((1+\tau)\beta_0 - \tau\beta_1)L^2 T)}}{\sqrt{m}}$. In the experimental setting of the original submission Fig.\ref{fig:realdata_bound2} (a), we set $N=1000, T \in [0.2, 2]$, so we have $0.0001\leq\tau\leq 0.002$. Since we used $\beta_0=0.1, \beta_1=20$ in all the experiments, which gives $0.06\leq(1+\tau)\beta_0 - \tau\beta_1 \leq 0.09998$. Therefore, we have $T_3 \in \Ocal(T)$ that has a linear growth w.r.t $T$ for all $\tau$ used in our experiments. Hence, we suppose the impact of $\tau$ in this range is minor.
To further verify this, we set $\tau=0.001$ as suggested by the reviewer and change $N$ for different $T$ accordingly. In Fig.\ref{fig:realdata_bound2} (b), we compare the results with the previous setting. It shows the whole upper bound (including score matching loss), and the log density remains consistent with the results in the previous setting. However, we keep using $\tau=0.001$ for the rest of the experiments to avoid potential concerns because we are varying $T$ in a larger range.}
\begin{figure}[htb]
    \centering
    \subfloat[]{\includegraphics[width=0.45\linewidth]{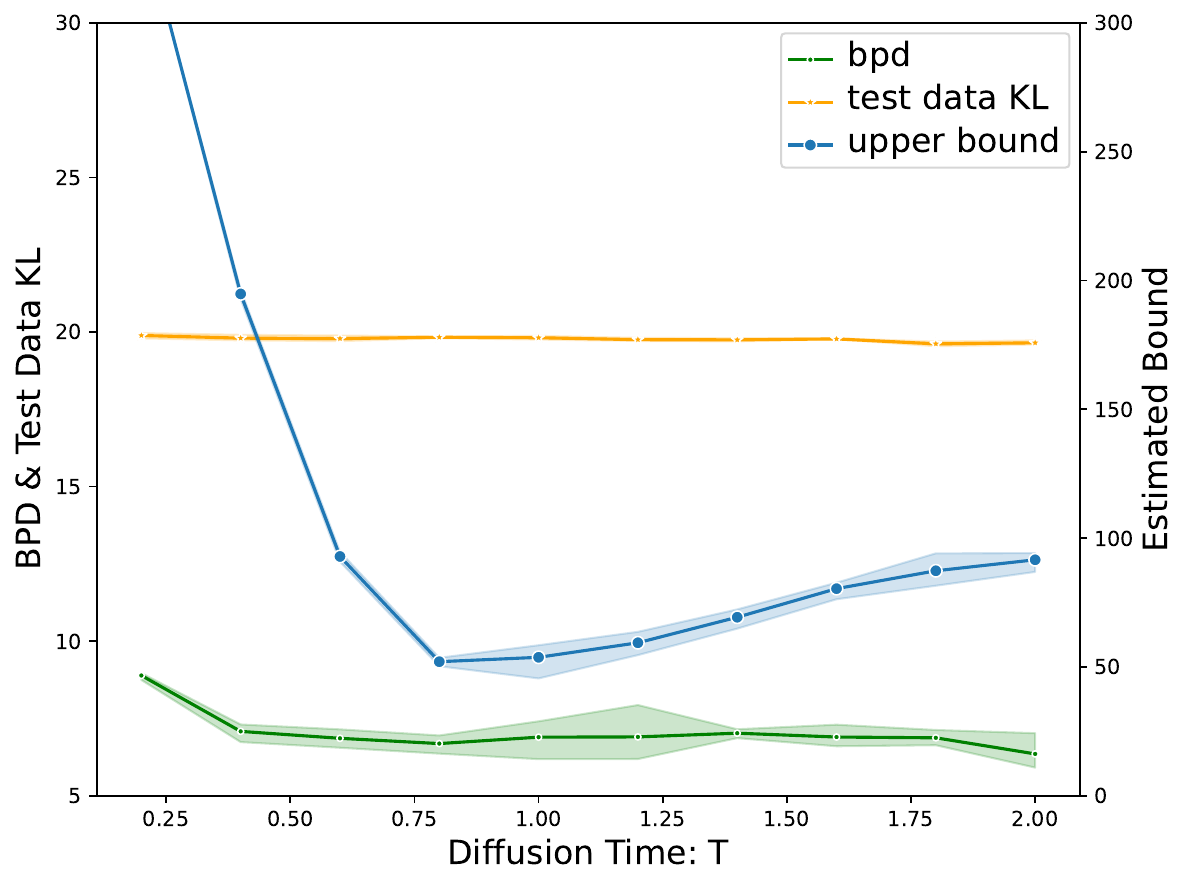}}
     \quad
    \subfloat[]{{\includegraphics[width=0.45\linewidth]{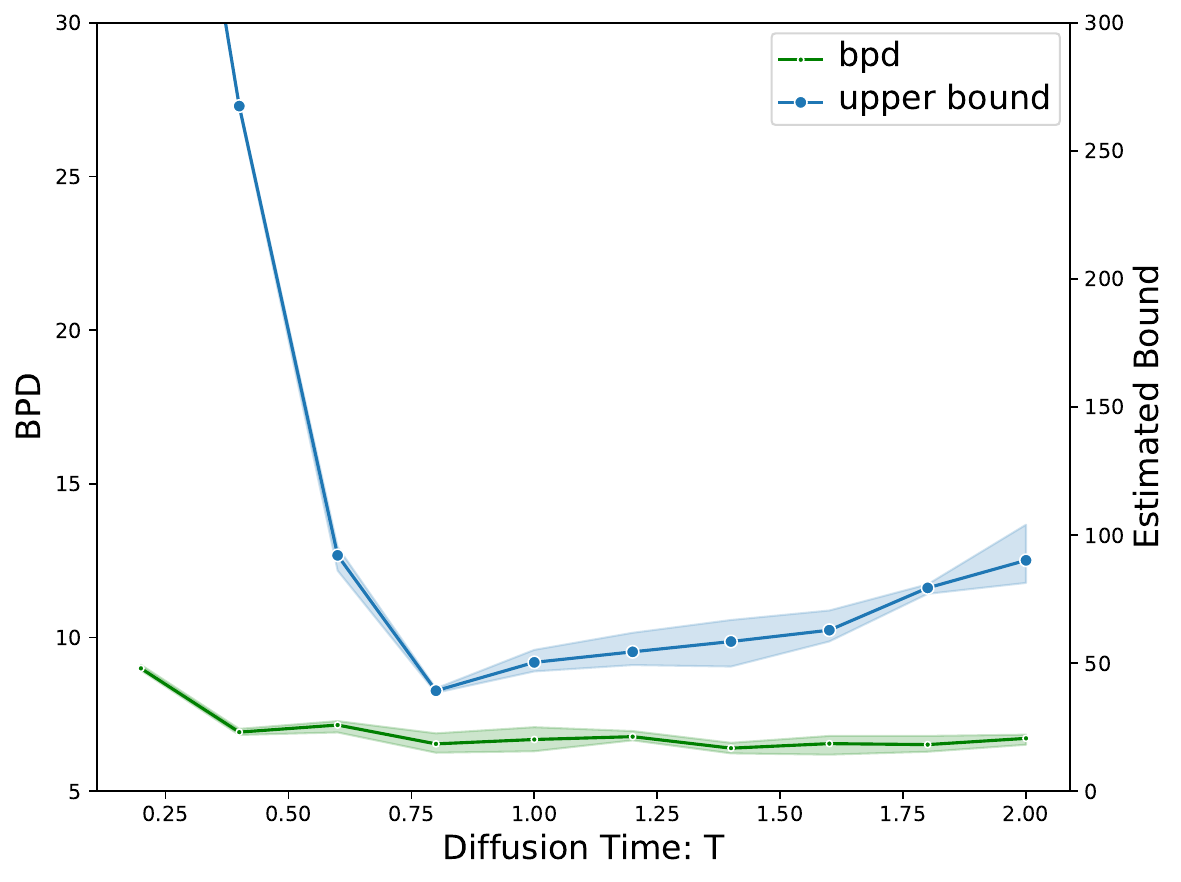}}}
    \caption{{The evolution of the estimated bounds, test data KL divergences, and test data log densities (measured by BPD) w.r.t different diffusion time $T$ for DM trained on few-shot MNIST data ($m=16$): (a) with fixed number of steps $N=1000$ (KL and BDP were calculated with $100$ test samples,) and (b) with fixed step size $\tau = 0.001$, BDP was calculated with $10000$ test samples, note $N=\frac{T}{\tau}$.}}
    \label{fig:realdata_bound2}
\end{figure}

\begin{figure}[H]
\begin{minipage}{\textwidth}
\centering
\subfloat[train data]{{\includegraphics[height=2cm]{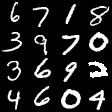}}}
\quad
\subfloat[$T=0.2$]{{\includegraphics[height=2cm]{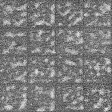}}}
\quad
\subfloat[$T=0.4$]{{\includegraphics[height=2cm]{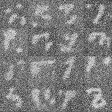}}}
\quad
\subfloat[$T=0.6$]{{\includegraphics[height=2cm]{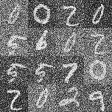}}}
\quad
\subfloat[$T=0.8$]{{\includegraphics[height=2cm]{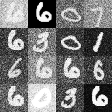}}}
\quad
\subfloat[$T=1.0$]{{\includegraphics[height=2cm]{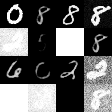}}}
\vfill
\subfloat[train data]{{\includegraphics[height=2cm]{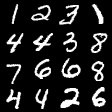}}}
\quad
\subfloat[$T=1.2$]{{\includegraphics[height=2cm]{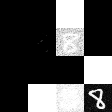}}}
\quad
\subfloat[$T=1.4$]{{\includegraphics[height=2cm]{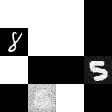}}}
\quad
\subfloat[$T=1.6$]{{\includegraphics[height=2cm]{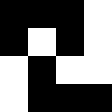}}}
\quad
\subfloat[$T=1.8$]{{\includegraphics[height=2cm]{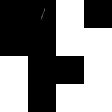}}}
\quad
\subfloat[$T=2.0$]{{\includegraphics[height=2cm]{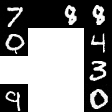}}}
\quad
\end{minipage}%
\caption{The generated images for different diffusion times $T$ on the MNIST dataset (we randomly sample $16$ images for each $T$). The score-matching model was trained on $m=16$ images (we use this few-shot setting to make sure we can present visual difference within limited random draws) randomly sampled from the dataset for each $T$. The training process takes $10000$ iterations. The generation quality is consistent with the estimated bound in Fig.~\ref{fig:bound} (a), where the optimal diffusion time should be around $T=0.8$. {The sampling process has a fiexed step size $\tau=0.001$.}}
\label{fig:mnist2}
\end{figure}

\begin{figure}[htbp]
\begin{minipage}{\textwidth}
\centering
\subfloat[train data]{{\includegraphics[height=2cm]{figures/train_mnist.png}}}
\quad
\subfloat[$T=0.2$]{{\includegraphics[height=2cm]{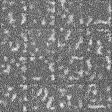}}}
\quad
\subfloat[$T=0.4$]{{\includegraphics[height=2cm]{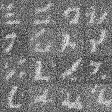}}}
\quad
\subfloat[$T=0.6$]{{\includegraphics[height=2cm]{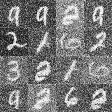}}}
\quad
\subfloat[$T=0.8$]{{\includegraphics[height=2cm]{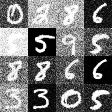}}}
\quad
\subfloat[$T=1.0$]{{\includegraphics[height=2cm]{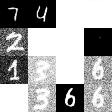}}}
\vfill
\subfloat[train data]{{\includegraphics[height=2cm]{figures/train_mnist2.png}}}
\quad
\subfloat[$T=1.2$]{{\includegraphics[height=2cm]{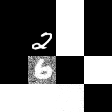}}}
\quad
\subfloat[$T=1.4$]{{\includegraphics[height=2cm]{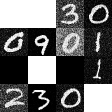}}}
\quad
\subfloat[$T=1.6$]{{\includegraphics[height=2cm]{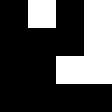}}}
\quad
\subfloat[$T=1.8$]{{\includegraphics[height=2cm]{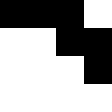}}}
\quad
\subfloat[$T=2.0$]{{\includegraphics[height=2cm]{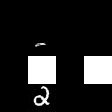}}}
\end{minipage}%
\caption{The generated images for different diffusion times $T$ on the MNIST dataset (we randomly sample $16$ images for each $T$). The score-matching model was trained on $m=16$ images (we use this few-shot setting to make sure we can present visual difference within limited random draws) randomly sampled from the dataset for each $T$. The training process takes $10000$ iterations. The generation quality is consistent with the estimated bound in Fig.~\ref{fig:bound} (a), where the optimal diffusion time should be around $T=0.8$. {The sampling process has a fixed number of steps $N=1000$.}}
\label{fig:mnist}
\end{figure}

\begin{figure}[htb!]
\begin{minipage}{\textwidth}
\centering
\subfloat[train data]{{\includegraphics[height=2cm]{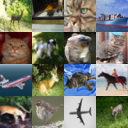}}}
\quad
\subfloat[$T=0.2$]{{\includegraphics[height=2cm]{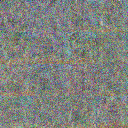}}}
\quad
\subfloat[$T=0.4$]{{\includegraphics[height=2cm]{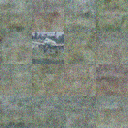}}}
\quad
\subfloat[$T=0.6$]{{\includegraphics[height=2cm]{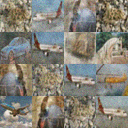}}}
\quad
\subfloat[$T=0.8$]{{\includegraphics[height=2cm]{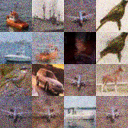}}}
\quad
\subfloat[$T=1.0$]{{\includegraphics[height=2cm]{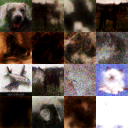}}}
\vfill
\subfloat[train data]{{\includegraphics[height=2cm]{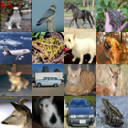}}}
\quad
\subfloat[$T=1.2$]{{\includegraphics[height=2cm]{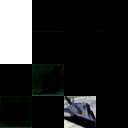}}}
\quad
\subfloat[$T=1.4$]{{\includegraphics[height=2cm]{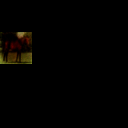}}}
\quad
\subfloat[$T=1.6$]{{\includegraphics[height=2cm]{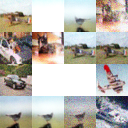}}}
\quad
\subfloat[$T=1.8$]{{\includegraphics[height=2cm]{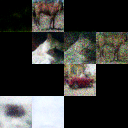}}}
\quad
\subfloat[$T=2.0$]{{\includegraphics[height=2cm]{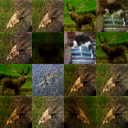}}}
\end{minipage}%
\caption{The generated images for different diffusion times $T$ on the CIFAR10 dataset (we randomly sample $16$ images for each). The score-matching model was trained on $m=16$ images (we use this few-shot setting to make sure we can present visual difference within limited random draws) randomly sampled from the dataset for each $T$. The training process takes $10000$ iterations. The generation quality is consistent with the estimated bound in Fig.~\ref{fig:bound} (b), where the optimal diffusion time should be around $T=0.8$. {The sampling process has a fixed number of steps $N=1000$.}}
\label{fig:cifar}
\end{figure}

\newpage
\section{Broader Impacts and Limitations}\label{sec:broader}

\paragraph{Broader Impacts}
\begin{itemize}
    \item \textbf{Potential positive impacts} We study the theoretical aspects of generative models. By improving the generalization, we can decrease replicated generation to help address the privacy and copyright issues in generative models. 
    \item \textbf{Potential negative impacts} The improvement for generating diverse data may be used to generate harmful information or fake news.
    \item \textbf{How to address the potential negative impacts?} We can design harmful information detection mechanisms and embed a filter strategy for generative models. 
\end{itemize}

\paragraph{Limitations} The theoretical analysis for DM in the last theorem is for a first-order Euler-Maruyama solver for SDE. One can extend and prove guarantees for other backward SDEs. While some diffusion models use deterministic encoders or generators that also work well in practice (\eg, \citet{song2020denoising} and \citet{bansal2024cold}), we consider the encoder and generator as randomized mappings as is most typical in diffusion models. However, our current definition of encoder and generator with randomized mapping covers the deterministic mapping setting by restricting $E(X)$ and $G(Z)$ to the set of delta distributions. The problem is due to the mutual information terms could be infinite for deterministic settings. However, this could be addressed by exploiting other refined information-theoretic tools. As this paper focuses on providing a unified theoretical viewpoint for typical VAEs and DMs, we cannot cover all the methods. The improvements mentioned above will be left as future work. Moreover, potential fairness issues raised in learning generative models could be considered by exploiting previous related works like \citet{xu2024intersectional, shui2022fair}.

\section{Additional Related Works}\label{sec:related work}
\paragraph{Algorithms for VAEs} The VAE \citep{kingma2013auto} has been widely applied and improved algorithmically through numerous extensions that include changing the posterior distribution to exponential families \citep{shi2020dispersed, shekhovtsov2021vae} and location-scale families \citep{park2019variational}, balancing the rate-distortion trade-off \citep{higgins2017beta,rybkin2021simple}, replacing the regularization term with adversarial objectives \citep{makhzani2015adversarial}, or using other divergences like the Wasserstein distance \citep{tolstikhin2017wasserstein} (WAE).

\paragraph{Score-based diffusion models} \citet{song2020score} unifies the previous two main diffusion approaches: Score matching with Langevin dynamics (SMLD) \citep{song2019generative} and Diffusion probabilistic modeling (DDPM) \citep{sohl2015deep, ho2020denoising} as score-based diffusion models, where their forward processes are considered as different families of Stochastic Differential Equations (SDEs). Later on, the variational perspective of these models was studied in \citep{huang2021variational, kingma2021variational, franzese2023much}. \cite{huang2021variational, song2020score} study how to use diffusion models to estimate the data likelihood based on some theoretical results in stochastic calculus \citep{karatzas2014brownian, oksendal2013stochastic}. Recently, latent diffusion models \citep{vahdat2021score,rombach2022high} have gained great success in generating high-resolution images, extended to further applications like text-to-image editing \citep{han2024proxedit, huberman2024edit}. 

\paragraph{Convergence theory for diffusion models} \citet{de2021diffusion} are the first to give quantitative convergence results for DMs, where they upper bound the original and generated data distribution in Total Variation (TV) distance and assume a $L^\infty$-accurate score estimation. This leads to vacuous results under the manifold assumption, where the TV can be very large, even if the distributions are similar. \citet{lee2023convergence, chen2022sampling} that also bound in TV but assume $L^2$-accurate score estimation have the same problem. \citet{chen2023improved} provide an improved analysis with minimal smoothness assumptions, which is valid for any data distribution with second-order moment with a $L^2$-accurate score estimation. The above works focus on analysis for convergence w.r.t population data distribution without explicit consideration of generalization. 

\paragraph{Information-theoretic learning theory} Recent information-theoretic analyses \citep{xu2017information,russo2019much,steinke2020reasoning,haghifam2021towards,haghifam2022understanding,hellstrom2022new,wang2023tighter} have provided a rigorous framework for understanding the generalization capabilities of deep learning models, and have further been extended to complex learning scenarios, such as meta-learning \citep{chen2021generalization, chen2023stability} and domain adaptation \citep{wu2022information, chen2023algorithm}. Unlike conventional VC-dimension and uniform stability bounds, this approach offers a key advantage in capturing dependencies on the data distribution, hypothesis space, and learning algorithm. 

\end{document}